\documentclass{article}

\usepackage{microtype}
\usepackage{graphicx}
\usepackage{subcaption}
\usepackage{booktabs} 

\usepackage{hyperref}

\usepackage[preprint]{icml2026}

\usepackage{amsmath}
\usepackage{amssymb}
\usepackage{mathtools}
\usepackage{amsthm}

\usepackage{booktabs} 
\usepackage{multirow} 
\usepackage{graphicx} 
\usepackage{subcaption}

\usepackage{adjustbox}
\usepackage[normalem]{ulem}
\usepackage[table]{xcolor}  
\usepackage{amsmath}
\usepackage{amsfonts}
\usepackage{bm}
\usepackage{enumitem}
\usepackage{subcaption} 
\usepackage{arydshln}
\usepackage{url}

\usepackage{makecell}   
\usepackage[table]{xcolor}
\usepackage{colortbl}
\usepackage{pgf}

\usepackage{pifont}
\usepackage{newunicodechar}

\usepackage[capitalize,noabbrev]{cleveref}

\theoremstyle{plain}
\newtheorem{theorem}{Theorem}[section]

\newtheorem{lemma}[theorem]{Lemma}

\theoremstyle{definition}

\newtheorem{assumption}[theorem]{Assumption}
\theoremstyle{remark}

\usepackage[textsize=tiny]{todonotes}

\icmltitlerunning{Detecting Contextual Hallucinations in LLMs with Frequency-Aware Attention}

\begin{document}

\twocolumn[
    \icmltitle{Detecting Contextual Hallucinations in Large Language Models\\ with Frequency-Aware Attention }



  \icmlsetsymbol{equal}{*}

  \begin{icmlauthorlist}
    \icmlauthor{Siya Qi}{kcl}
    \icmlauthor{Yudong Chen}{waric}
    \icmlauthor{Runcong Zhao}{kcl}
    \icmlauthor{Qinglin Zhu}{kcl}
    \icmlauthor{Zhanghao Hu}{kcl}
    \icmlauthor{Wei Liu}{kcl} \\
    \icmlauthor{Yulan He}{kcl,alan}
    \icmlauthor{Zheng Yuan}{shef,alan}
    \icmlauthor{Lin Gui}{kcl}


  \end{icmlauthorlist}

  \icmlaffiliation{kcl}{Department of Informatics, King's College London, UK}
  \icmlaffiliation{waric}{Department of Statistics, University of Warwick, UK}
  \icmlaffiliation{shef}{School of Computer Science, The University of Sheffield, UK}
  \icmlaffiliation{alan}{The Alan Turing Institute, UK}

  \icmlcorrespondingauthor{Lin Gui}{lin.1.gui@kcl.ac.uk}
  \icmlcorrespondingauthor{Siya Qi}{siya.qi@kcl.ac.uk}

  \icmlkeywords{Machine Learning, ICML}
  \vskip 0.3in
]



\printAffiliationsAndNotice{}  

\begin{abstract}
Hallucination detection is critical for ensuring the reliability of large language models (LLMs) in context-based generation.
Prior work has explored intrinsic signals available during generation, among which attention offers a direct view of grounding behavior. However, existing approaches typically rely on coarse summaries that fail to capture fine-grained instabilities in attention.
Inspired by signal processing, we introduce a frequency-aware perspective on attention by analyzing its variation during generation.
We model attention distributions as discrete signals and extract high-frequency components that reflect rapid local changes in attention.
Our analysis reveals that hallucinated tokens are associated with high-frequency attention energy, reflecting fragmented and unstable grounding behavior.
Based on this insight, we develop a lightweight hallucination detector using high-frequency attention features.
Experiments on the RAGTruth and HalluRAG benchmarks show that our approach achieves performance gains over verification-based, internal-representation-based, and attention-based methods across models and tasks.\footnote{Code and data are available at \url{https://github.com/siyaqi/FrequencyAwareHallucination}.}

\end{abstract}

\begin{figure}[ht]
  \vskip 0.4in
  \begin{center}
    \centerline{\includegraphics[width=0.95\columnwidth]{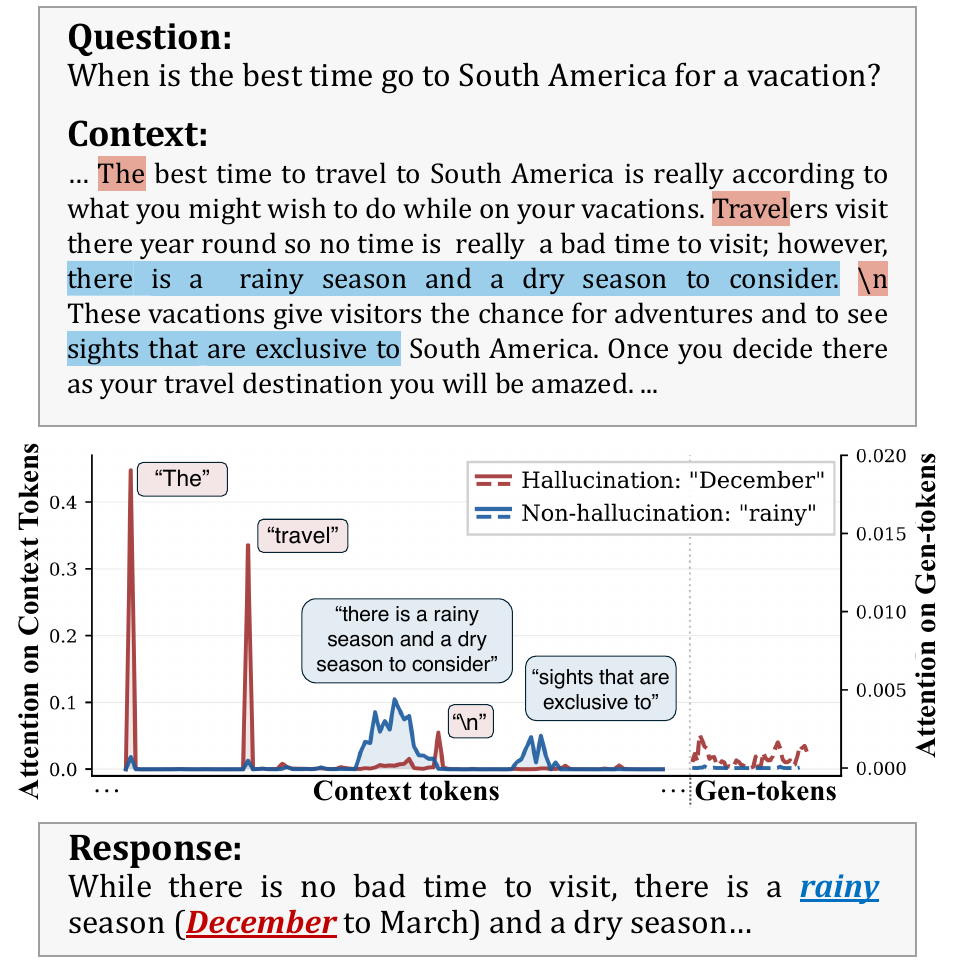}}
    \caption{
      Attention weights over context and previously generated tokens for a grounded token (blue, ``rainy'') and a hallucinated token (red, ``December'') in a context-based QA example.
}
    \label{fig:example}
  \end{center}
\end{figure}

\section{Introduction}

Large Language Models (LLMs) have achieved strong performance across many natural language processing tasks, yet they can produce \emph{hallucinated outputs} that are fluent but not supported by the given input or external facts \cite{ji2023survey, deemter-2024-pitfalls}. This issue is especially pronounced in the \textbf{context-based generation} settings, 
such as the summarization task, where models are explicitly expected to ground in a provided source context \cite{hu2024lrp4ragdetectinghallucinationsretrievalaugmented}. 
Therefore, effective hallucination detection is essential for building trustworthy language systems and enabling downstream mitigation strategies.

A common approach to hallucination detection verifies model outputs against the source context, for example, using semantic similarity measures or LLM-as-a-judge frameworks \cite{Kryscinski2020, Laban2021, manakul-etal-2023-selfcheckgpt}. Such methods compare the output with contextual evidence and are applied post hoc, rather than reflecting the model’s generation dynamics.
Motivated by this limitation, recent work has explored hallucination detection from intrinsic signals available during generation, including token probabilities, hidden representations, and attention patterns 
\cite{sun2024redeep, chen2024inside, chuang2024lookback}. Among these signals, attention is particularly informative, as it directly reflects how the model allocates content focus during generation \cite{Wiegreffe2019}.
Prior studies have shown that hallucinated generations are often associated with unstable or diffuse attention behavior \cite{Feng2023, gong-etal-2024-damro, sriramananllm}. However, \textit{how to quantify attention stability or uncertainty remains an open problem}.

Most existing attention-based methods summarize attention using coarse statistics, such as attention mass, entropy, or transition patterns \cite{huang2025dynamicattentionguidedcontextdecoding, sun2024redeep, chuang2024lookback}. While effective at capturing overall concentration, such scaling-based metrics often discard fine-grained sequential variation. 
For example, in Figure~\ref{fig:example}, we prompt LLaMA-2-7B-Chat with a context-based question and require the model to answer strictly using the provided context. Although most generated tokens align well with the source, the model produces a hallucinated token containing month names (e.g., “December”) that does not appear in the context. 
Compared to a grounded token, the attention distribution associated with this hallucinated token exhibits noticeably stronger local fluctuations across context positions, with sharper peaks and more abrupt changes.

Therefore, we argue that directly mapping an attention sequence to a single scalar cannot fully capture the structure of attention patterns. Inspired by signal processing, we treat attention weights over context tokens as a discrete temporal signal indexed by token position, where stable grounding corresponds to smooth, slowly varying signals, while instability manifests as rapid local oscillations. 
To explicitly quantify such variation, we perform frequency-aware decomposition of the attention signal: low-frequency components capture global trends of attention allocation across the context, whereas high-frequency components isolate sharp spikes and abrupt local changes. We hypothesize that hallucinated tokens are associated with energy in these high-frequency attention components.

Motivated by this perspective, we study contextual hallucination detection via frequency-aware analysis of attention signals. Our contributions are threefold:
\textbf{(1)} We formulate attention as discrete signals and introduce a unified frequency-based framework for analyzing attention variation for hallucination detection.
\textbf{(2)} We instantiate this framework using simple yet efficient high-frequency extraction operators, enabling the quantification of attention instability at both token and span levels.
\textbf{(3)} Through extensive experiments across multiple models and tasks, we show that frequency-based attention features can improve hallucination detection over existing verification-based, internal representation-based, and attention-based baselines.

\begin{figure}[t!]
    \vskip 0.1in
  \begin{center}
    \centerline{\includegraphics[width=0.95\columnwidth]{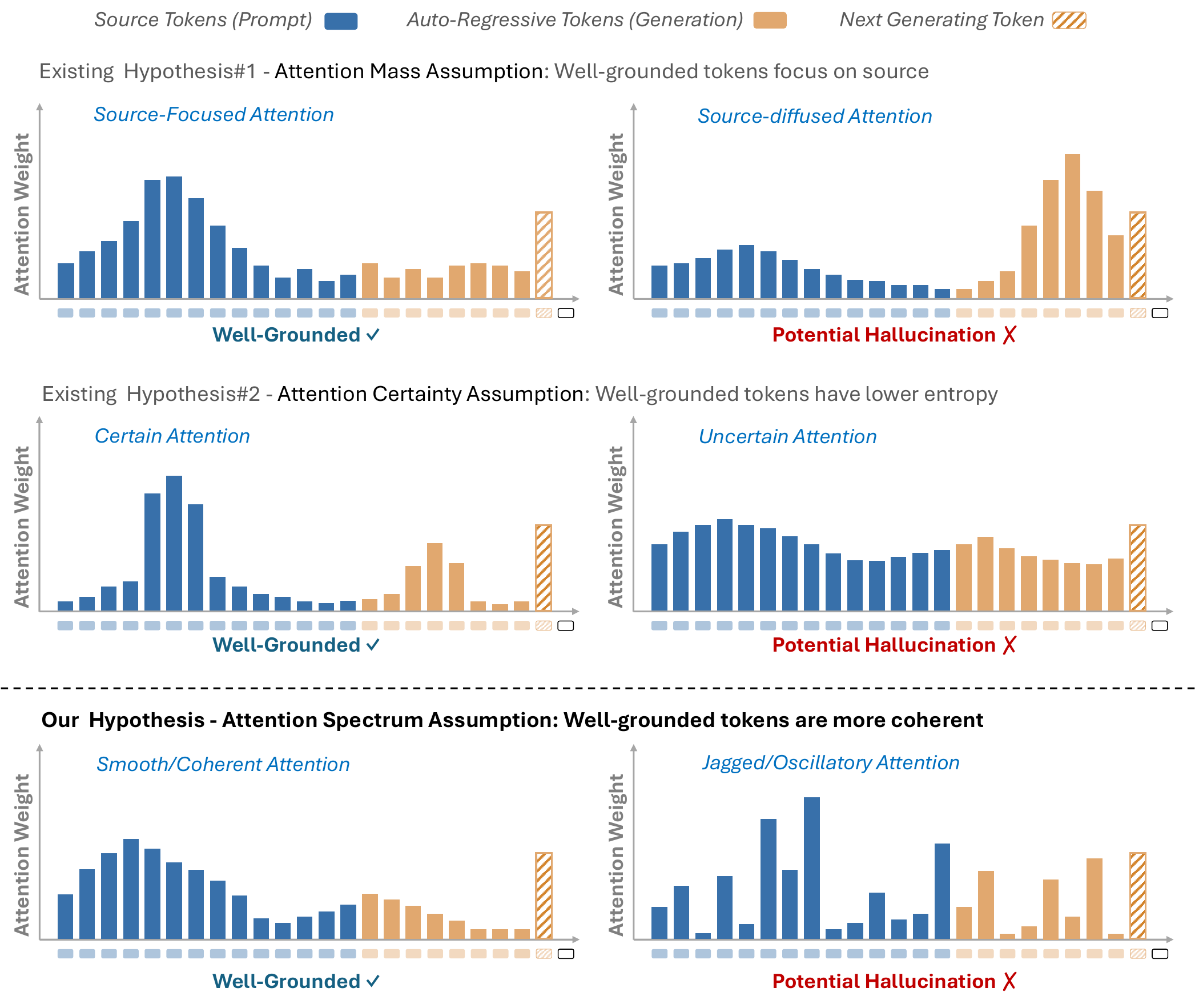}}
    \caption{
      Three hypotheses for identifying hallucination tokens from incoming attention patterns.
We illustrate three representative assumptions for distinguishing well-grounded tokens (\ding{51}) from potential hallucinations (\ding{55}) based on the incoming attention to the next generated token.}
    \label{fig:hypothese_compare}
  \end{center}
\end{figure}

\section{Background}
In this section, we review attention weight as an intrinsic signal for hallucination and introduce a frequency-based perspective that characterizes attention instability beyond the coarse allocation statistics.

\subsection{Attention-based Hallucination Detection}

For each generated token, the attention distribution reflects how the model aggregates information from the source context and previously generated tokens, and is therefore widely used as an intrinsic signal of grounding and information usage \cite{campbell2023localizing, li2024inferencetimeinterventionelicitingtruthful, Snyder2024earlyDetection, huang2025dynamicattentionguidedcontextdecoding}. Prior work exploits attention for hallucination detection by making different assumptions about grounded generation behavior.

One line of work focuses on attention transitions, using backward attention from generated tokens to the source context as an indicator of grounding \cite{chuang2024lookback, ogasa2025hallucinatedMulti-View}. Another line analyzes the distributional properties of attention, either by identifying context tokens emphasized by specific attention heads \cite{sun2024redeep} or by summarizing attention uncertainty using entropy-based measures \cite{vazhentsev2025uncertaintyawareattentionheadsefficient}. These approaches are motivated by the intuition that grounded generation exhibits focused and consistent evidence attribution, whereas hallucination is associated with diffuse or ambiguous attention.

As illustrated in \autoref{fig:hypothese_compare}, existing attention-based methods primarily capture \emph{where} attention is allocated or \emph{how} dispersed it is, but do not explicitly model the internal structure of attention patterns. In contrast, our hypothesis emphasizes the \emph{coherence} of attention distributions, motivated by the observation that hallucinated generations often exhibit fragmented or rapidly oscillating attention behavior that is not captured by static allocation or entropy-based statistics.

\begin{figure*}[ht]
  \begin{center}
    \centerline{\includegraphics[width=1.85\columnwidth]{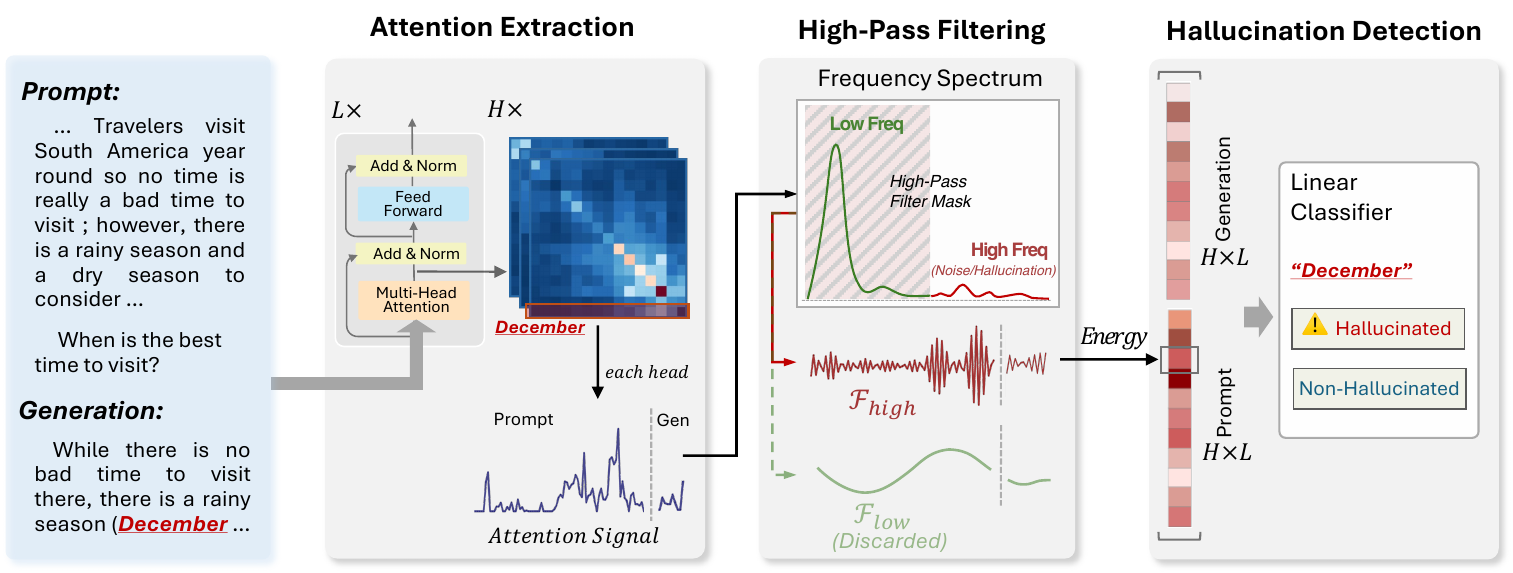}}
    \caption{
      Overview of frequency-aware attention modeling for hallucination detection.
Attention weights are extracted from each layer and head ($L$ layers and $H$ heads in total), treated as token-level signals, and decomposed using high-pass filtering to isolate high-frequency variations ($\mathcal{F}_{\text{high}}$), whose energy is aggregated for hallucination detection.}
    \label{fig:main_fig}
  \end{center}
  \vskip -0.2in
\end{figure*}

\subsection{A Frequency-based View of Attention}

To capture attention variation beyond static summaries, we model attention weights over tokens as discrete signals. 
In signal processing, smooth and coherent patterns are naturally represented by low-frequency components, whereas abrupt changes and local irregularities manifest as high-frequency components. As such, frequency analysis offers a natural characterization of attention stability and instability.

Rather than committing to a single formulation, we consider several standard operators that instantiate this frequency-based view with complementary inductive biases. 
The Discrete Fourier Transform (DFT) \cite{Cooley1969TheFF} provides a global decomposition of attention signals into frequency components. 
The Discrete Wavelet Transform (DWT) \cite{wavelet1989} uses localized basis functions, allowing abrupt attention changes to be captured while preserving positional information. 
As a simpler local alternative, the discrete Laplacian operator \cite{oppenheim1997signals} directly highlights second-order differences directly in the token domain, acting as an implicit high-pass filter.

Despite their different forms, these operators share a common goal: isolating high-frequency variation that reflects fragmented or rapidly oscillating attention behavior. Together, they provide a unified frequency-based framework for analyzing attention instability during generation.

\section{Frequency-Aware Attention Modeling}
\label{sec:method}

Viewing attention from a frequency-aware perspective offers a principled way to analyze variation patterns that are difficult to characterize directly in the token domain. 
In this framework (shown in \autoref{fig:main_fig}), attention weights over tokens are modeled as discrete signals, enabling the use of frequency-aware operators to quantify how attention evolves and fluctuates during generation, beyond what is captured by aggregate statistics. 
We first demonstrate our motivation and intuition through a simplified setting here.

\subsection{Motivation and Intuition}

While a rigorous mechanistic understanding of attention behavior under hallucinated generation remains an open problem, we can gain preliminary insights from a simplified toy setting. Specifically, consider a scenario where tokens come from several latent semantic sources. As the number of such sources increases, neighboring tokens are more likely to differ in topic, causing the compatibility between the current token and its context to change abruptly across adjacent positions. Such abrupt local changes translate into adjacent differences in attention weights, yielding jagged attention patterns along the sequence (see \autoref{app:three_step_proof} for a complete proof). Although such toy analysis relies on simplifying assumptions, it formalizes a general link between semantic heterogeneity and attention instability.

Real LLM attention is, however, substantially far more complex. It is shaped by architectural depth, multi-head specialization, and long-range contextual interactions, all of which can give rise to higher-order and multi-scale instabilities that cannot be captured by adjacent difference measures alone. 
From a signal-processing perspective, adjacent differences correspond only to a primitive form of high-pass filtering. Frequency-aware operators (e.g., DFT, DWT, Laplacian), by contrast, are able to systematically extract high-frequency components across multiple resolutions \cite{donoho1998minimax,stein2011fourier}. This motivates our frequency-aware framework for quantifying complex attention instabilities associated with hallucination detection.

\subsection{Problem Setup}
\label{subsec:setup}
We consider a context-based generation setting, where a language model generates a response $\mathbf{gen}=(gen_1,\dots,gen_T)$ conditioned on a retrieved context $\mathbf{ctx}=(ctx_1,\dots,ctx_N)$.
At each generation step $i$, the model produces token $t_i$ based on $(\mathbf{ctx},\mathbf{gen}_{<i})$ and exposes attention weights across layers and heads.
Our primary task is token-level hallucination detection: for each generated token $t_i$, predict whether $t_i$ is supported by the provided context.

Rather than operating on attention weights directly, our detector uses frequency-aware features derived from attention.
For each step $i$ when generating $t_i$, we compute a feature vector $\mathbf{v}$ (defined in \S\ref{subsec:instability}) and predict
\begin{equation}
\hat{r}_i = f(\mathbf{v}),
\end{equation}
where $f(\cdot)$ is a lightweight linear classifier, and $\hat{r}_i$ indicates a prediction result for hallucination or non-hallucination.

\subsection{Attention as Discrete Signals}
\label{subsec:signal}

At generation step $i$, we extract attention weight distributions from all transformer layers and heads.
We distinguish two types of attention signals: \emph{context-directed attention}, attending from the current token $t_i$ to the input context tokens, and \emph{generated-token attention}, attending to previously generated tokens before this generation step.

For each layer $l\in\{1,\dots,L\}$ and head $h\in\{1,\dots,H\}$, we define the context-directed attention vector and the generated-token attention vector:
\begin{equation}
\begin{aligned}
a^{(ctx)}_{l,h} &= [a^{(ctx)}_{l,h,1}, a^{(ctx)}_{l,h,2}, \dots, a^{(ctx)}_{l,h,N}] \in \mathbb{R}^N , \\
a^{(gen)}_{l,h} &= [a^{(gen)}_{l,h,1}, a^{(gen)}_{l,h,2}, \dots, a^{(gen)}_{l,h,i-1}] \in \mathbb{R}^{i-1} ,
\end{aligned}
\end{equation}
which respectively capture attention over $N$ context tokens and the previously generated tokens before $t_i$.

Both $a^{(ctx)}_{l,h}$ and $a^{(gen)}_{l,h}$ are treated as one-dimensional discrete signals that indexed by token position.
Taking the Discrete Fourier Transform (DFT) as an example, and let $\mathcal{F}(\cdot)$ denote DFT.
Mapping the attention weight vector to the frequency domain yields
\begin{equation}
\hat{a}_{l,h}^{(\tau)} = \mathcal{F}\big(a^{(\tau)}_{l,h}\big),
\end{equation}
where $\tau\in\{ctx,gen\}$; details for the DWT and Laplacian operators are provided in Appendix~\ref{apx:energy_equal}.

\subsection{Energy-Based High-Frequency Instability}
\label{subsec:instability}

Building on the frequency-based view, we isolate and quantify high-frequency components of attention weight signals.
The formulation applies uniformly to both context-directed and previously generated-token attention.
To simplify notation, we use $\mathbf{x}\in\mathbb{R}^n$ to denote the attention signal obtained for $t_i$,
corresponding to either $a^{(ctx)}_{l,h}$ or $a^{(gen)}_{l,h}$ for layer $l$ and head $h$,
with signal length $n$.

Given an attention signal $\mathbf{x}$, we extract high-frequency component by high-pass operator (e.g., DFT, DWT, Laplacian)
\begin{equation}
\mathbf{z}^{\mathrm{hf}} = \mathcal{F}_{\text{high}}(\mathbf{x}),
\end{equation}
where $\mathcal{F}_{\text{high}}$ denotes a high-pass operator that suppresses smooth, low-frequency components while preserving high-frequency components representing rapid local variation.

For transform-based operators such as DFT and DWT, $\mathcal{F}_{\text{high}}$ is implemented by frequency-domain masking followed by an inverse transform.
Let $\hat{\mathbf{x}} = \mathcal{F}(\mathbf{x})$ denote the DFT coefficients.
We retain only high-frequency components by applying a frequency mask by following definition:
\begin{equation}
\hat{\mathbf{x}}^{\mathrm{hf}}_k =
\begin{cases}
\hat{\mathbf{x}}_k, & k \in \mathcal{K}_{\mathrm{high}}, \\
0, & \text{otherwise},
\end{cases}
\end{equation}
where $\mathcal{K}_{\mathrm{high}}$ indexes the retained high-frequency band.
The corresponding high-frequency component $\mathbf{z}^{\mathrm{hf}}$ in the temporal domain is obtained via inverse transform.
Wavelet- and Laplacian-based operators provide alternative realizations of $\mathcal{F}_{\text{high}}$ that emphasize localized high-frequency variation, while serving the same purpose of isolating rapid attention fluctuations (detailed in Appendix~\ref{apx:energy_equal}).

We summarize the magnitude of high-frequency variation by using the $\ell_2$ norm on the high-frequency component
\begin{equation}
\rho = \| \mathbf{z}^{\mathrm{hf}} \|_2 ,
\end{equation}
which measures the energy of high-frequency components in the attention signal in temporal domain.

The use of the $\ell_2$ norm is theoretically motivated by \textbf{Parseval's theorem}, which establishes the equivalence between signal energy measured in the temporal domain and that measured in the frequency domain.
For DFT-based filtering,
\begin{equation}
\| \mathbf{z}^{\mathrm{hf}} \|_2^2
= \sum_{j=1}^{n} |\mathbf{z}^{\mathrm{hf}}_j|^2
= \frac{1}{n} \sum_{k=0}^{n-1} |\hat{\mathbf{x}}^{\mathrm{hf}}_k|^2 .
\end{equation}

\begin{table*}[t]
\centering
\small

\setlength{\tabcolsep}{3.8pt} 

\newcommand{\lightrule}{\arrayrulecolor{black!30}\cmidrule(lr){3-12}\arrayrulecolor{black}}

\caption{Comparison of different methods on RAGTruth and HalluRAG.
Wavelet-high and Fourier-high denote leveraging the high-frequency components obtained via DWT and DFT, respectively.
\textbf{Bold} and \underline{underlined} values indicate the best and second-best performance within each model group.}
\label{tab:rag_results}

\resizebox{0.83\textwidth}{!}{
\begin{tabular}{c l cc cc cc cc cc}
\toprule
\multirow{2.5}{*}{\textbf{Model}} & 
\multicolumn{1}{c}{\multirow{2.5}{*}{\textbf{Method}}}
& \multicolumn{2}{c}{\textbf{RT-QA}} & \multicolumn{2}{c}{\textbf{RT-D2T}} & \multicolumn{2}{c}{\textbf{RT-Summ}} & \multicolumn{2}{c}{\textbf{HalluRAG}} & \multicolumn{2}{c}{\textbf{Overall Avg.}} \\
\cmidrule(lr){3-4} \cmidrule(lr){5-6} \cmidrule(lr){7-8} \cmidrule(lr){9-10} \cmidrule(lr){11-12}
&& F & AUROC & F & AUROC & F & AUROC & F & AUROC & Avg-F & Avg-A \\
\midrule

\multirow{11}{*}{\rotatebox{90}{\textbf{LLaMA-7B}}}
& SelfcheckGPT & 0.6289 & 0.6942 & 0.6552 & 0.8026 & \textbf{0.6349} & 0.6674 & 0.5388 & 0.5963 & 0.6145 & 0.6901 \\
& RefChecker & 0.5914 & 0.5865 & 0.5713 & 0.6349 & \underline{0.6059} & 0.6080 & 0.5819 & 0.5722 & 0.5876 & 0.6004 \\ 
\lightrule 
& EigenScore & 0.4979 & 0.5253 & 0.5256 & 0.5297 & 0.5065 & 0.4989 & 0.5364 & 0.6127 & 0.5166 & 0.5416 \\
& Redeep & 0.5972 & 0.6364 & 0.4791 & 0.3960 & 0.5897 & 0.5760 & 0.5912 & 0.6490 & 0.5643 & 0.5643 \\
& Lookback-lens & 0.6930 & 0.8482 & 0.6175 & 0.8442 & 0.5328 & 0.7156 & 0.6266 & 0.7405 & 0.6175 & 0.7871 \\ 
\lightrule 
& Attn-variance & 0.4807 & 0.6147 & 0.4839 & 0.5890 & 0.4886 & 0.6492 & 0.4489 & 0.5571 & 0.4755 & 0.6025 \\
& Attn-entropy & 0.6832 & 0.8481 & 0.6011 & 0.8368 & 0.5031 & 0.6722 & 0.6020 & 0.6937 & 0.5973 & 0.7627 \\ 
\lightrule 
& Laplacian & 0.7107 & 0.8449 & \underline{0.6878} & 0.8519 & 0.5779 & 0.7040 & 0.6370 & 0.7429 & 0.6534 & 0.7859 \\
& Wavelet-high & \underline{0.7194} & \underline{0.8526} & \textbf{0.6898} & \underline{0.8569} & 0.5929 & \underline{0.7165} & \underline{0.6384} & \underline{0.7550} & \underline{0.6601} & \underline{0.7953} \\
& Fourier-high & \textbf{0.7277} & \textbf{0.8584} & 0.6870 & \textbf{0.8595} & 0.5875 & \textbf{0.7426} & \textbf{0.6438} & \textbf{0.7603} & \textbf{0.6615} & \textbf{0.8052} \\
\midrule

\multirow{11}{*}{\rotatebox{90}{\textbf{LLaMA-13B}} }
& SelfcheckGPT & 0.6029 & 0.6425 & 0.6346 & 0.7989 & 0.5554 & 0.5909 & 0.6337 & 0.7216 & 0.6066 & 0.6885 \\
& RefChecker & 0.5928 & 0.5893 & 0.5784 & 0.6381 & \underline{0.5924} & 0.6346 & 0.6098 & 0.5963 & 0.5934 & 0.6146 \\
\lightrule
& EigenScore & 0.4788 & 0.4734 & 0.6063 & 0.6582 & 0.5121 & 0.5097 & 0.6316 & 0.6947 & 0.5572 & 0.5840 \\
& Redeep & 0.5740 & 0.6457 & 0.5273 & 0.6449 & 0.5010 & 0.5253 & 0.5769 & 0.6294 & 0.5448 & 0.6113 \\
& Lookback-lens & 0.6947 & 0.8727 & 0.7137 & 0.8766 & 0.5679 & 0.6702 & 0.6594 & \textbf{0.7929} & 0.6589 & 0.8031 \\
\lightrule
& Attn-variance & 0.5636 & 0.7497 & 0.5992 & 0.7177 & 0.4947 & 0.6629 & 0.4529 & 0.6298 & 0.5276 & 0.6900 \\
& Attn-entropy & 0.6909 & 0.8650 & 0.7034 & 0.8717 & 0.5527 & 0.6865 & 0.6284 & 0.7270 & 0.6438 & 0.7875 \\
\lightrule
& Laplacian & 0.6834 & 0.8552 & 0.7194 & 0.8796 & 0.5548 & 0.6700 & \underline{0.6659} & 0.7624 & 0.6559 & 0.7918 \\
& Wavelet-high & \underline{0.7029} & \underline{0.8741} & \textbf{0.7383} & \textbf{0.8932} & 0.5651 & \underline{0.7042} & \textbf{0.6684} & 0.7809 & \underline{0.6687} & \underline{0.8131} \\
& Fourier-high & \textbf{0.7068} & \textbf{0.8792} & \underline{0.7278} & \underline{0.8825} & \textbf{0.5929} & \textbf{0.7362} & 0.6616 & \underline{0.7899} & \textbf{0.6723} & \textbf{0.8198} \\
\midrule

\multirow{11}{*}{\rotatebox{90}{\textbf{Mistral-7B}} }
& SelfcheckGPT & 0.6551 & 0.7084 & 0.6958 & 0.8353 & \textbf{0.6966} & 0.7166 & 0.5515 & 0.6473 & 0.6498 & 0.7269 \\
& RefChecker & 0.6144 & 0.6121 & 0.5907 & 0.6287 & 0.6664 & 0.6596 & 0.6176 & 0.6031 & 0.6223 & 0.6259 \\
\lightrule
& EigenScore & 0.4904 & 0.4611 & 0.5449 & 0.6507 & 0.4582 & 0.4973 & 0.6581 & 0.7924 & 0.5379 & 0.6004 \\
& Redeep & 0.6270 & 0.7357 & 0.6013 & 0.6648 & 0.5379 & 0.5680 & 0.4987 & 0.5646 & 0.5662 & 0.6333 \\

& Lookback-lens & 0.7832 & \textbf{0.9148} & 0.7151 & 0.8845 & 0.6759 & 0.7954 & 0.7115 & 0.7966 & 0.7214 & 0.8478 \\
\lightrule
& Attn-variance & 0.5636 & 0.7497 & 0.6218 & 0.7021 & 0.4840 & 0.5836 & 0.5367 & 0.6953 & 0.5515 & 0.6827 \\
& Attn-entropy & 0.7810 & 0.9083 & 0.7049 & 0.8734 & 0.6551 & 0.7867 & 0.6803 & 0.7504 & 0.7053 & 0.8297 \\
\lightrule
& Laplacian & 0.7807 & 0.9049 & \underline{0.7255} & \underline{0.8849} & 0.6723 & 0.7978 & 0.7001 & 0.8098 & 0.7197 & 0.8494 \\
& Wavelet-high & \underline{0.7876} & \underline{0.9117} & 0.7136 & 0.8829 & \underline{0.6849} & \textbf{0.8075} & \textbf{0.7274} & \textbf{0.8360} & \textbf{0.7284} & \textbf{0.8595} \\
& Fourier-high & \textbf{0.7885} & 0.9099 & \textbf{0.7267} & \textbf{0.8863} & 0.6761 & \underline{0.8037} & \underline{0.7152} & \underline{0.8128} & \underline{0.7266} & \underline{0.8532} \\
\bottomrule
\end{tabular}
}

\end{table*}

Applying the above procedure independently to each attention head and each attention type yields a score $\rho^{(\tau)}_{l,h}$ for every layer $l$, head $h$, and attention type $\tau \in \{ctx,gen\}$. 
\begin{equation}
\begin{aligned}
\mathbf{v}^{(\tau)} &= [\rho^{(\tau)}_{1,1}, \rho^{(\tau)}_{1,2}, \dots, \rho^{(\tau)}_{L,H}]^\top, \quad \tau \in \{ctx,gen\}, \\
\mathbf{v} &= [\mathbf{v}^{(ctx)};\mathbf{v}^{(gen)}] .
\end{aligned}
\end{equation}
We aggregate these scores across all layers and heads to form feature vectors $\mathbf{v}^{(ctx)}$ and $\mathbf{v}^{(gen)}$ for context-directed and generated-token attention, respectively. We then concatenate these vectors as $\mathbf{v}$ and feed them into the classifier to obtain the final prediction $\hat{r}_i$ for token-level hallucination detection.
As a robustness check under coarser supervision, we also report span-level results by applying a fixed sliding window, averaging the feature vectors within each window, and feeding the averaged vectors into the classifier.

\section{Experiment Setting}
\label{sec:experiment}
\subsection{Baselines}
In this study, we evaluate our approach against a diverse set of representative baselines for comparison, spanning verification-based, internal representation-based, and attention-based paradigms, as detailed below.

\textbf{Verification-based Methods.} \textbf{SelfCheckGPT} \cite{manakul-etal-2023-selfcheckgpt} detects hallucinations by measuring stochastic consistency across multiple responses sampled from a language model.
\textbf{RefChecker} \cite{RefChecker_2024_hu} extracts claims from model outputs and verifies them against context using a dedicated checker.
Both of them operate in a prompt-based manner and rely on the generative behavior of the underlying LLM.

\textbf{Internal Representation-Based Methods.} \textbf{EigenScore} \cite{chen2024inside} leverages output probability distributions to construct a semantic consistency graph and quantifies uncertainty via spectral analysis. 
\textbf{ReDeEP} integrates internal signals from hidden states and attention mass to detect hallucinations \citep{sun2024redeep}.
Both methods represent strong internal-signal-based baselines.

\textbf{Attention-based Methods.} We include \textbf{Lookback-Lens} \cite{chuang2024lookback} as a strong attention-based baseline, which characterizes hallucination by quantifying the allocation of attention between retrieved context tokens and previously generated tokens.
In addition, we implement two mechanistic attention-based baselines based on attention \textbf{variance} and attention \textbf{entropy}.
For these baselines, statistics are computed separately over context tokens and generated tokens, and concatenated across all attention heads to form a unified feature representation for classification.

\subsection{Implementation Settings}
To evaluate the effectiveness and robustness of our method, we conduct experiments across three widely used open-source LLMs, including LLaMA-7B-Chat, LLaMA-13B-Chat, and Mistral-7B-Instruct, covering diverse model sizes and architectures.
Experiments are conducted on two context-based hallucination benchmarks: RAGTruth~\cite{niu-etal-2024-ragtruth}, covering question answering (QA), data-to-text (D2T), and summarization (Summ), and HalluRAG~\cite{ridder2025halluragdatasetdetectingcloseddomain}, which focuses on the QA task.
Both datasets provide token-level hallucination annotations.
Frequency-aware attention features are aggregated using a single-layer logistic regression classifier, and detection is performed at either the token or span level.
We report F1 and AUROC on the test sets, using AUROC as the primary metric.
Additional details are provided in Appendix~\ref{apx:model_detail}.

\section{Results and Analysis}

\subsection{Overall Performance}

Across all evaluated models and datasets, frequency-aware analysis of attention consistently improves hallucination detection over verification-based, internal representation-based, and attention-based baselines.
Explicitly isolating high-frequency components of attention yields stronger performance than aggregate statistics such as variance or entropy, highlighting the importance of modeling fine-grained attention variation.
By focusing on \emph{how} attention varies within the sequence rather than \emph{where} attention mass is allocated, frequency-aware features provide complementary discriminative signals beyond attention mass alone.

These improvements hold across diverse task formats and models. 
For example, on summarization task, Fourier-high improves AUROC by 6.6\% over Lookback-Lens on LLaMA-13B, and by 2.7\% on LLaMA-7B.
This consistency across datasets and generation settings suggests that high-frequency attention variation captures a general property of ungrounded generation, rather than task- or structure-specific artifacts.
We further evaluate cross-domain generalization by training the detector on one task and testing it on another (see \autoref{tab:sliding_cross_domain}).
Compared to Lookback-Lens, our method exhibits substantially more robust transfer performance, indicating that frequency-aware attention features generalize better across task boundaries.

Among all three operators, Fourier-based features achieve the strongest overall performance, followed by wavelets and the Laplacian.
This ordering aligns with both their inductive biases and the structural properties of attention signals: attention distributions are typically sparse and highly uneven across positions \cite{nawrot2025sparsefrontiersparseattention}, often dominated by a small subset of tokens, which favors operators that aggregate high-frequency variation globally.
Correspondingly, Fourier-based filtering is particularly effective at capturing global high-frequency energy, while wavelets and the Laplacian emphasize progressively more localized variation.

\begin{table}[!htbp]
\centering
\small

\caption{Performance comparison on RagTruth-Avg and HalluRAG for sliding-window=8. Within each model, the best result is highlighted in \textbf{bold}, and the second-best is \underline{underlined}.}
\label{tab:sliding8_short}

\resizebox{0.45\textwidth}{!}{
\begin{tabular}{c l cc cc}
\toprule
\multirow{2}{*}{\textbf{Model}}
& \multicolumn{1}{c}{\multirow{2}{*}{\textbf{Method}}}
& \multicolumn{2}{c}{\textbf{RagTruth-Avg}}
& \multicolumn{2}{c}{\textbf{HalluRAG}} \\
\cmidrule(lr){3-4} \cmidrule(lr){5-6}
& & F & AUROC & F & AUROC \\
\midrule

\multirow{4}{*}{\rotatebox{90}{\scriptsize \textbf{LLaMA-7B}}}
& Lookback-lens
& 0.6773 & 0.7884 & 0.6623 & 0.7856 \\
& Laplacian
& 0.6747 & 0.7877 & 0.6775 & 0.7754 \\
& Wavelet-High
& \underline{0.6911} & \underline{0.8117} & \underline{0.6812} & \underline{0.7928} \\
& Fourier-high
& \textbf{0.7003} & \textbf{0.8412} & \textbf{0.6866} & \textbf{0.8100} \\
\midrule

\multirow{4}{*}{\rotatebox{90}{\scriptsize \textbf{LLaMA-13B}}}
& Lookback-lens
& 0.6781 & 0.8183 & 0.7096 & \underline{0.8505} \\
& Laplacian
& 0.6819 & 0.8039 & \underline{0.7217} & 0.8264 \\
& Wavelet-High
& \underline{0.6953} & \underline{0.8448} & \textbf{0.7417} & 0.8438 \\
& Fourier-high
& \textbf{0.7063} & \textbf{0.8585} & \underline{0.7217} & \textbf{0.8515} \\
\midrule

\multirow{4}{*}{\rotatebox{90}{\scriptsize \textbf{Mistral-7B}}}
& Lookback-lens
& 0.7554 & \underline{0.8764} & 0.7752 & 0.8403 \\
& Laplacian
& 0.7322 & 0.8495 & \textbf{0.8056} & \textbf{0.8920} \\
& Wavelet-High
& \underline{0.7597} & 0.8742 & 0.7682 & 0.8802 \\
& Fourier-high
& \textbf{0.7663} & \textbf{0.8812} & \underline{0.7883} & \underline{0.8872} \\
\bottomrule
\end{tabular}
}
\vskip -0.1in

\end{table}






\subsection{Span-level Hallucination Detection}

In practical settings, hallucinations often occur as contiguous spans rather than isolated tokens.
To evaluate whether our method generalizes beyond token-level detection, we conduct span-level hallucination detection under a sliding-window setting, where consecutive tokens are grouped into fixed-size chunks and classified at the chunk level.
Following prior work such as Lookback-Lens, we use a chunk size of 8 and aggregate attention features within each chunk to form a single representation for prediction.
Results across the two benchmarks are reported in \autoref{tab:sliding8_short}, with full per-task results provided in \autoref{tab:sliding8_full} for RagTruth.

Span-level evaluation is more challenging due to coarser supervision and the need to aggregate signals across multiple tokens. Despite this increased difficulty, Fourier- and Wavelet-based variants achieve the strongest performance on all three tasks.
Fourier-based features yield consistent gains over Lookback-Lens, improving AUROC by 5.3\% on LLaMA-7B (RAGTruth-Avg), with a particularly improvement of 10.1\% on the summarization task (see \autoref{tab:sliding8_full}).
Overall, these results indicate that frequency-based features remain robust when individual token-level signals are aggregated into spans, and that our approach generalizes naturally from token- to span-level hallucination detection across multiple tasks without requiring architectural modification.

\begin{figure}[ht]
  \begin{center}
    \centerline{\includegraphics[width=0.9\columnwidth]{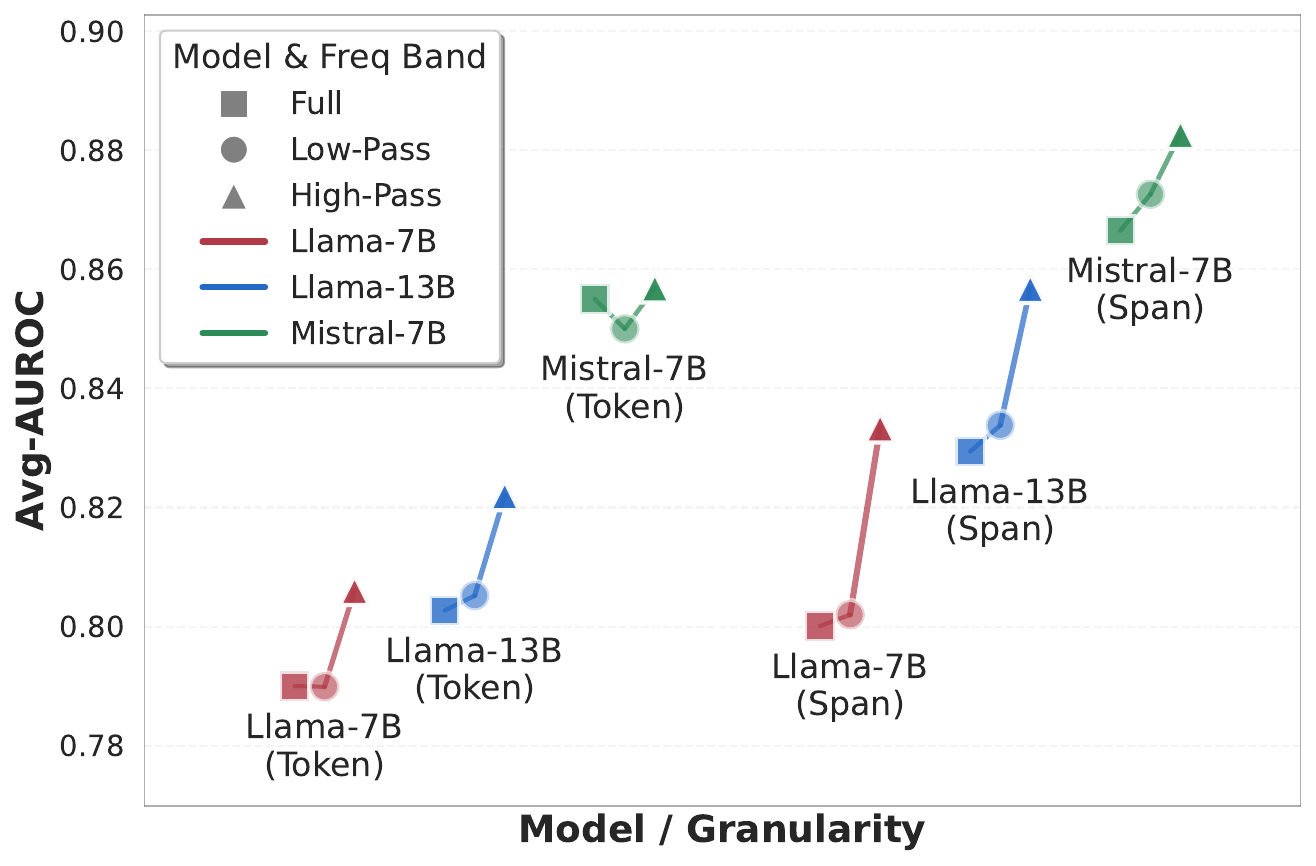}}
    \caption{ Comparing full-, low-, and high-pass Fourier attention features. Average AUROC across models under token- and span-level evaluation settings.
    }
    \label{fig:low_high}
  \end{center}
  \vskip -0.3in
\end{figure}

\subsection{Ablation Study on High/Low-pass Components}

We further analyze the contribution of different frequency bands by comparing low-pass, high-pass, and full-spectrum attention features, as shown in \autoref{fig:low_high}. 
Across all models and both token- and span-level evaluation settings, high-pass components achieve the strongest performance, while low-pass and full-spectrum features perform comparably. 
This pattern suggests that, without explicitly isolating high-frequency variation, attention signals are largely dominated by low-frequency components that offer limited discriminative power for hallucination detection.

Low-frequency components mainly reflect smooth, global alignment patterns common to both grounded and hallucinated outputs, whereas high-pass components emphasize rapid local irregularities in attention that are more closely associated with unstable generation behavior. 
Additional analyses on Fourier frequency cutoffs and Wavelet detail levels are provided in \autoref{fig:ablation_cutoff}, \autoref{tab:wavelet_sliding1_full}, and \autoref{tab:wavelet_sliding8_full}.

\subsection{Understanding Frequency-Aware Features}
\label{sec:analysis}

To better understand how frequency-aware attention features contribute to hallucination detection, we analyze the learned linear classifier from three perspectives: layer-level importance, head-level sparsity, and the relative contribution of context versus generated attention.

\subsubsection{Layer-wise Importance}
As shown in \autoref{fig:layerwise}, we report the average absolute classifier coefficients assigned to attention heads in each layer. The importance of high-frequency attention signals is clearly non-uniform across model depth. 
For Fourier-based features, importance peaks in the middle layers, with a pronounced maximum around layer 14. These layers are commonly associated with higher-level semantic processing and factual reasoning, suggesting that hallucination-related attention instability emerges most prominently once the model gradually transitions from surface-level decoding to semantic composition \cite{dola_Chuang_2024}. 

\begin{figure}[t]
  \begin{center}
    \centerline{\includegraphics[width=0.9\columnwidth]{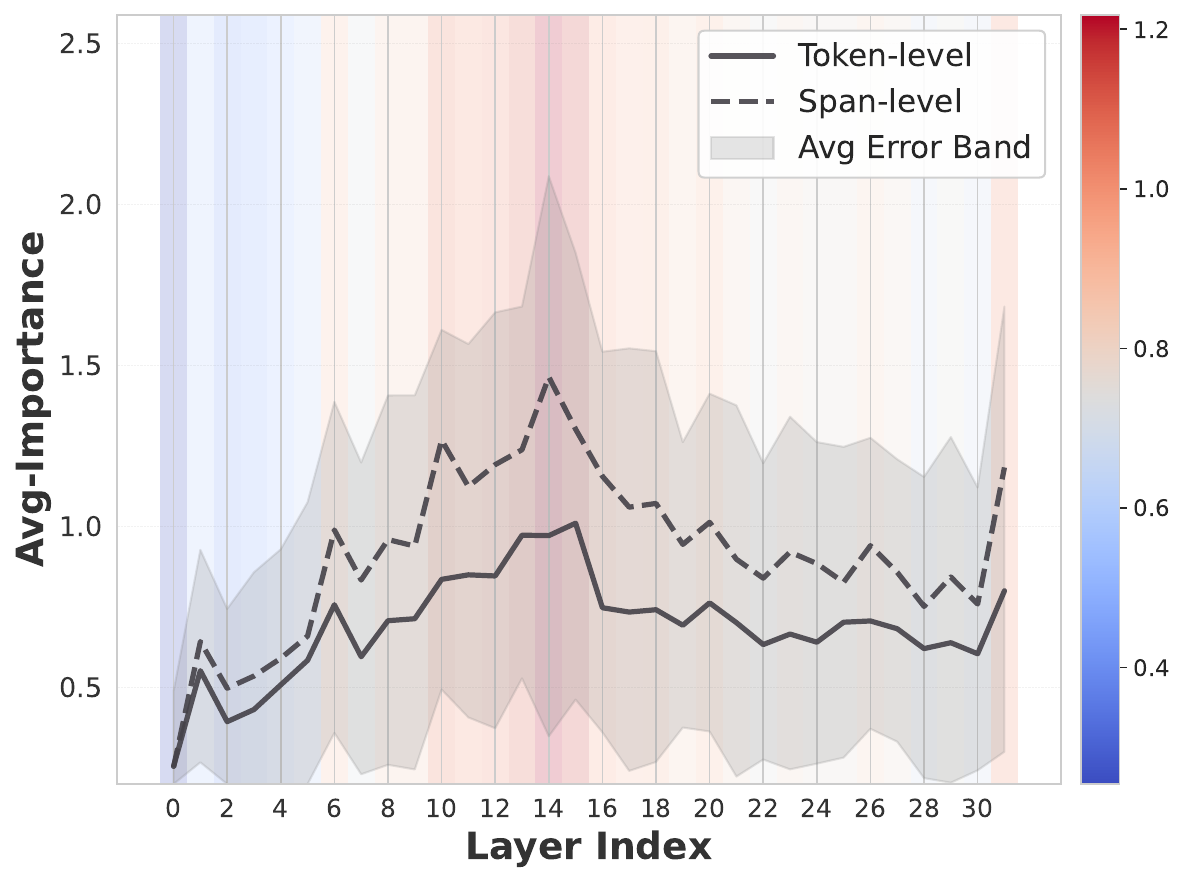}}
    \caption{
      Layer-wise importance of high-frequency Fourier-high attention features for LLaMA-7B. 
      }
    \label{fig:layerwise}
  \end{center}
  \vskip -0.3in
\end{figure}

Across almost all layers, span-level (dashed line) assigns higher importance weights than token-level detection (solid line). This indicates that aggregating attention signals over longer spans amplifies structured high-frequency variation, rather than weakening it, further supporting the robustness of frequency-aware features under coarser supervision.

\subsubsection{Head-wise Sparsity}
Beyond layer-level trends, we examine whether detection signals are broadly distributed across heads or concentrated in a small subset. To this end, we perform detection using only the Top-$k$ attention heads ranked by their average absolute classifier coefficients.

\begin{table}[htbp]
\caption{AUROC Results for LLaMA-7B: Original vs. Top-$k$ head only performance.}
\label{tab:head_sparsity}
\centering
\small
\setlength{\tabcolsep}{4.0pt}
\setlength{\dashlinedash}{0.6pt}
\setlength{\dashlinegap}{1.4pt}

\begin{tabular}{l cccc}
\toprule

\multicolumn{5}{c}{\textbf{RagTruth-Avg}} \\
\cmidrule(lr){1-5}
\multirow{2}{*}{\textbf{Method}} 
& \multirow{2}{*}{\textbf{Original}} 
& \multicolumn{3}{c}{\textbf{Top-$k$ heads}} \\
\cmidrule(lr){3-5}
& & \textit{k = 100} & \textit{50} & \textit{10} \\
\midrule
\quad Laplacian & 0.8003 & 0.7824 & 0.7666 & 0.7131 \\
\quad Wavelet-high   & 0.8087 & 0.7900 & 0.7708 & 0.7193 \\
\quad Fourier-high   & 0.8205 & 0.7915 & 0.7684 & 0.6995 \\

\midrule 
\multicolumn{5}{c}{\textbf{HalluRAG}} \\
\cmidrule(lr){1-5}
\multirow{2}{*}{\textbf{Method}} 
& \multirow{2}{*}{\textbf{Original}} 
& \multicolumn{3}{c}{\textbf{Top-$k$ heads}} \\
\cmidrule(lr){3-5}
& & \textit{k = 100} & \textit{50} & \textit{10} \\
\midrule
\quad Laplacian & 0.7429 & 0.7222 & 0.6986 & 0.6479 \\
\quad Wavelet-high   & 0.7550 & 0.7229 & 0.7016 & 0.6562 \\
\quad Fourier-high   & 0.7629 & 0.7229 & 0.6814 & 0.6373 \\

\bottomrule
\end{tabular}

\end{table}

As shown in \autoref{tab:head_sparsity}, a remarkably small fraction of heads accounts for most of the detection performance.
Using only the top 100 heads (less than 10\% of all heads in LLaMA-7B) recovers over 95\% of the original AUROC across operators and datasets.
This pronounced sparsity indicates that hallucination-related high-frequency attention variation is not a diffuse property shared across all heads.
Instead, a small subset of heads consistently exhibits strong sensitivity to attention instability, suggesting that these heads act as implicit internal indicators of ungrounded generation.

\subsubsection{Context v.s. Generated Attention Contribution}
\label{sec:context_generated_analysis}

A critical design choice in our method is incorporating high-frequency attention signals from both source context tokens and previously generated tokens during generation. 
We analyze the learned classifier to assess the relative importance of these two sources in practice.

As shown in \autoref{fig:context_generated}, frequency-aware signals derived from context-token attention are consistently more informative for hallucination detection than those from generated-token attention across all spectral operators.
Among them, the Fourier-based method exhibits the largest context–generated importance gap, which aligns with its superior overall performance across tasks and models, suggesting that capturing frequency-aware patterns in context attention contributes to more robust hallucination detection.

This asymmetry aligns closely with the core motivation of Lookback-lens \cite{chuang2024lookback}, which emphasizes the role of backward attention to contextual evidence in grounded generation. 
Our results extend this view by showing that not only the presence of backward attention, but also its stability and variation, play a critical role in practice. This interpretation is further supported by more ablation results in \autoref{tab:ablation_context_new_full}, where removing only context-based features leads to substantially larger performance degradation than removing only generated-token features.

\section{Broader Related Work}

In this section, we briefly situate our work within the broader literature on hallucination detection in LLMs. 

\begin{figure}[t]
  \begin{center}
    \centerline{\includegraphics[width=0.85\columnwidth]{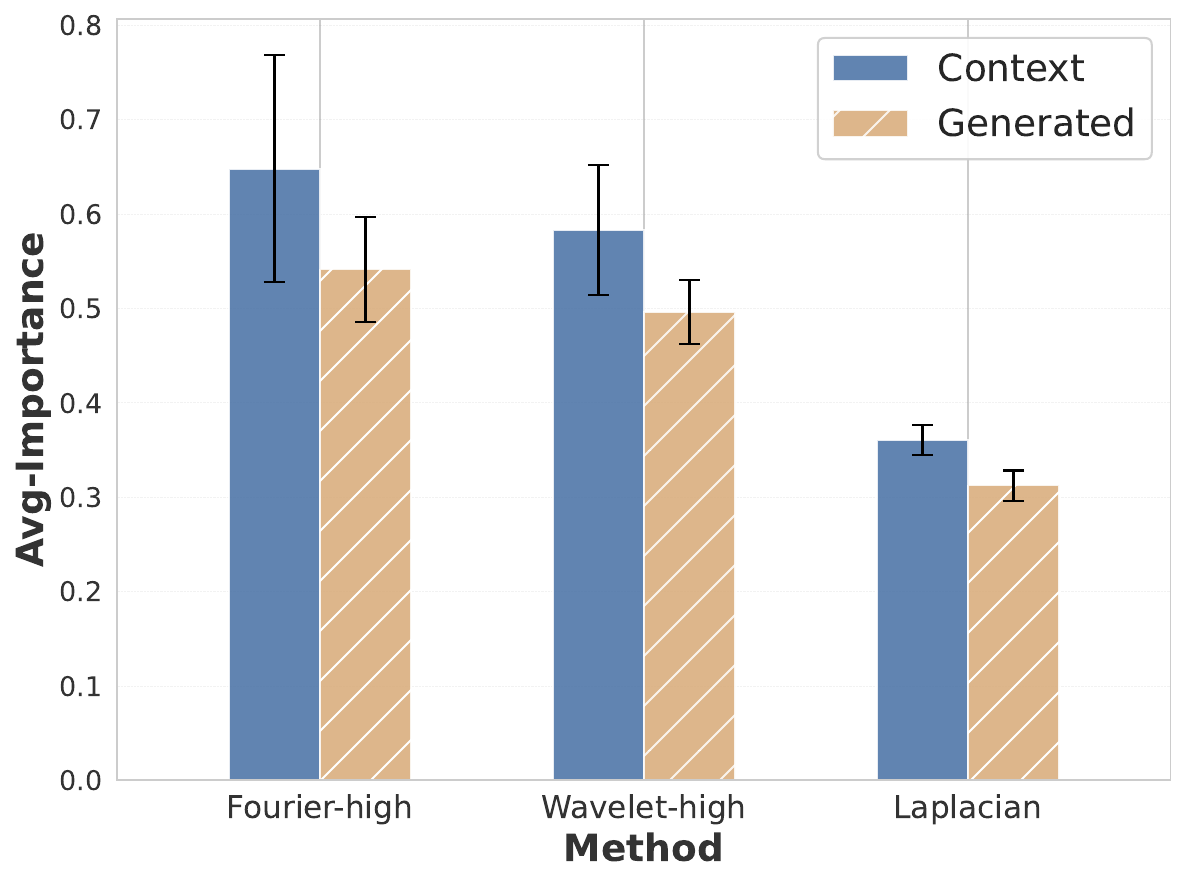}}
    \caption{
      Average absolute classifier importance assigned to context-attention and generated-attention features over all examined models and datasets.
    }
    \label{fig:context_generated}
  \end{center}
  \vskip -0.3in
\end{figure}

A substantial line of work detects hallucinations through \textbf{external knowledge verification}, validating model outputs against curated corpora, knowledge bases, or web search results \cite{min-etal-2023-factscore, feng2023factkb, chern2023factool}. While effective when reliable evidence is available, these approaches are constrained by knowledge coverage, retrieval quality, and inference-time latency, limiting their applicability in on-the-fly detection settings.

In contrast, \textbf{intrinsic detection} methods rely solely on model-internal signals and are lightweight enough to operate during decoding. Prior work has explored semantic consistency between input and output, uncertainty and self-consistency across generations, and generation-time statistics such as token probabilities or hidden-state geometry \citep{qi2025surveyautomatichallucinationevaluation}. 
More recent studies probe attention mechanisms as intrinsic indicators of hallucination \cite{sun2024redeep, chuang2024lookback, campbell2023localizing}, which mainly characterize where attention is allocated or how concentrated it is, but largely overlook fine-grained local variation within the token sequence.

Using \textbf{frequency-aware} tools offers an additional perspective for examining internal model dynamics \cite{kiruluta2025attentionatomsspectraldictionary, li2025llmhallucinationdetectionfast}. By modeling internal signals as discrete sequences, frequency-aware representations capture structural patterns that are difficult to characterize using token-level statistics alone. Attention distributions are also typically sparse and highly non-uniform across token positions \cite{nawrot2025sparsefrontiersparseattention, gu2025attentionsinkemergeslanguage}, providing a natural basis for frequency-based decomposition, where abrupt allocation correspond to high-frequency components and global distribution patterns align with low-frequency trends.

Building on this perspective, our work introduces a frequency-aware characterization of attention variation as an intrinsic signal for hallucination detection.

\section{Conclusion}

This work presents a frequency-aware perspective for analyzing hallucination in LLMs. We show that hallucinated generation is associated with high-frequency variation in attention patterns, revealing a structural property of model behavior that is not captured by attention allocation, transition, or aggregate uncertainty measures. This provides a principled way to distinguish grounded from ungrounded generation based on how attention varies across tokens.

Building on this insight, we introduce a frequency-aware attention modeling framework that extracts high-frequency attention signals using spectral operators. Across multiple models and tasks, our approach improves hallucination detection at both token and span levels, indicating its potential as a generalizable and task-agnostic intrinsic signal for analysis.
Overall, our findings suggest that hallucinations are reflected not only in where attention is placed, but also in how it varies across tokens over time, highlighting frequency-aware analysis as a promising direction for diagnosing and mitigating unreliable LLM generation.

\section*{Acknowledgements}
SQ is funded by a PhD scholarship provided by K-CSC. YH was supported by the UK Engineering and Physical Sciences Research Council (EPSRC) through a Turing AI Fellowship (grant no. EP/V020579/1, EP/V020579/2). 
The authors acknowledge the use of King’s Computational Research, Engineering and Technology Environment (CREATE) at King’s College London.



\section*{Impact Statement}
This paper presents a frequency-aware analysis of internal attention in LLMs for detecting hallucinated generation.
The goal of this work is to advance the understanding and evaluation of LLMs by providing intrinsic signals that can help identify unreliable outputs.

There are potential positive societal impacts of this work, including improved reliability and transparency of language model deployments in applications where factual correctness is important.
By relying solely on model-internal signals, the proposed approach avoids dependence on external knowledge sources and may be used as a lightweight evaluation tool.

At the same time, this work does not aim to prevent hallucination or provide guarantees of correctness, and the signals identified should not be used as the sole basis for high-stakes decisions.
As with other analysis and detection methods, responsible deployment requires appropriate safeguards and complementary evaluation mechanisms.
Overall, we believe the ethical and societal implications of this work are aligned with well-established goals in machine learning research, and do not warrant further specific discussion.





\bibliography{main}

@inproceedings{Kryscinski2020,
  author        = {Wojciech Kryscinski and
                   Bryan McCann and
                   Caiming Xiong and
                   Richard Socher},
  bibsource     = {dblp computer science bibliography, https://dblp.org},
  booktitle     = {Proceedings of the 2020 Conference on Empirical Methods in Natural
                   Language Processing, {EMNLP} 2020, Online, November 16-20, 2020},
  doi           = {10.18653/V1/2020.EMNLP-MAIN.750},
  editor        = {Bonnie Webber and
                   Trevor Cohn and
                   Yulan He and
                   Yang Liu},
  pages         = {9332--9346},
  publisher     = {Association for Computational Linguistics},
  title         = {{E}valuating the {F}actual {C}onsistency of {A}bstractive {T}ext {S}ummarization},
  url           = {https://doi.org/10.18653/v1/2020.emnlp-main.750},
  year          = {2020}
}

@book{stein2011fourier,
  title={Fourier analysis: an introduction},
  author={Stein, Elias M and Shakarchi, Rami},
  volume={1},
  year={2011},
  publisher={Princeton University Press}
}

@article{Laban2021,
  author        = {Philippe Laban and
                   Tobias Schnabel and
                   Paul N. Bennett and
                   Marti A. Hearst},
  bibsource     = {dblp computer science bibliography, https://dblp.org},
  doi           = {10.1162/TACL\_A\_00453},
  journal       = {Trans. Assoc. Comput. Linguistics},
  pages         = {163--177},
  title         = {{S}umma{C}: {R}e-Visiting {N}{L}{I}-based {M}odels for {I}nconsistency {D}etection in {S}ummarization},
  url           = {https://doi.org/10.1162/tacl\_a\_00453},
  volume        = {10},
  year          = {2022}
}

@inproceedings{Wiegreffe2019,
  author        = {Sarah Wiegreffe and
                   Yuval Pinter},
  bibsource     = {dblp computer science bibliography, https://dblp.org},
  booktitle     = {Proceedings of the 2019 Conference on Empirical Methods in Natural
                   Language Processing and the 9th International Joint Conference on
                   Natural Language Processing, {EMNLP-IJCNLP} 2019, Hong Kong, China,
                   November 3-7, 2019},
  doi           = {10.18653/V1/D19-1002},
  editor        = {Kentaro Inui and
                   Jing Jiang and
                   Vincent Ng and
                   Xiaojun Wan},
  pages         = {11--20},
  publisher     = {Association for Computational Linguistics},
  title         = {{A}ttention is not not {E}xplanation},
  url           = {https://doi.org/10.18653/v1/D19-1002},
  year          = {2019}
}

@article{Feng2023,
  author        = {Feng, Jiazhan and Yang, Linfeng and Li, Zeyu and Zhang, Jiazheng and others},
  journal       = {arXiv preprint arXiv:2308.05374},
  title         = {{T}rustworthy {L}{L}{M}s: {A} {S}urvey and {G}uideline for {E}valuating {L}arge {L}anguage {M}odels' {A}lignment},
  year          = {2023}
}

@misc{qi2025surveyautomatichallucinationevaluation,
  archiveprefix = {arXiv},
  author        = {Siya Qi and Lin Gui and Yulan He and Zheng Yuan},
  eprint        = {2404.12041},
  primaryclass  = {cs.CL},
  title         = {{A} {S}urvey of {A}utomatic {H}allucination {E}valuation on {N}atural {L}anguage {G}eneration},
  url           = {https://arxiv.org/abs/2404.12041},
  year          = {2025}
}

@article{li2025llmhallucinationdetectionfast,
  author        = {Jinxin Li and
                   Gang Tu and
                   ShengYu Cheng and
                   Junjie Hu and
                   Jinting Wang and
                   Rui Chen and
                   Zhilong Zhou and
                   Dongbo Shan},
  bibsource     = {dblp computer science bibliography, https://dblp.org},
  doi           = {10.48550/ARXIV.2509.13154},
  eprint        = {2509.13154},
  eprinttype    = {arXiv},
  journal       = {CoRR},
  title         = {{LLM} {H}allucination {D}etection: {A} {F}ast {F}ourier {T}ransform {M}ethod {B}ased on {H}idden {L}ayer {T}emporal {S}ignals},
  url           = {https://doi.org/10.48550/arXiv.2509.13154},
  volume        = {abs/2509.13154},
  year          = {2025}
}

@inproceedings{li2024inferencetimeinterventionelicitingtruthful,
  author        = {Kenneth Li and
                   Oam Patel and
                   Fernanda B. Vi{\'{e}}gas and
                   Hanspeter Pfister and
                   Martin Wattenberg},
  bibsource     = {dblp computer science bibliography, https://dblp.org},
  booktitle     = {Advances in Neural Information Processing Systems 36: Annual Conference
                   on Neural Information Processing Systems 2023, NeurIPS 2023, New Orleans,
                   LA, USA, December 10 - 16, 2023},
  editor        = {Alice Oh and
                   Tristan Naumann and
                   Amir Globerson and
                   Kate Saenko and
                   Moritz Hardt and
                   Sergey Levine},
  title         = {{I}nference-Time {I}ntervention: {E}liciting {T}ruthful {A}nswers from a {L}anguage {M}odel},
  year          = {2023}
}

@article{campbell2023localizing,
  author        = {James Campbell and
                   Richard Ren and
                   Phillip Guo},
  bibsource     = {dblp computer science bibliography, https://dblp.org},
  doi           = {10.48550/ARXIV.2311.15131},
  eprint        = {2311.15131},
  eprinttype    = {arXiv},
  journal       = {CoRR},
  title         = {{L}ocalizing {L}ying in {L}lama: {U}nderstanding {I}nstructed {D}ishonesty on {T}rue-False {Q}uestions {T}hrough {P}rompting, {P}robing, and {P}atching},
  url           = {https://doi.org/10.48550/arXiv.2311.15131},
  volume        = {abs/2311.15131},
  year          = {2023}
}

@article{ridder2025halluragdatasetdetectingcloseddomain,
  author        = {Fabian Ridder and
                   Malte Schilling},
  bibsource     = {dblp computer science bibliography, https://dblp.org},
  doi           = {10.48550/ARXIV.2412.17056},
  eprint        = {2412.17056},
  eprinttype    = {arXiv},
  journal       = {CoRR},
  title         = {{T}he {H}allu{R}{A}{G} {D}ataset: {D}etecting {C}losed-Domain {H}allucinations in {RAG} {A}pplications {U}sing an {L}{L}{M}'s {I}nternal {S}tates},
  url           = {https://doi.org/10.48550/arXiv.2412.17056},
  volume        = {abs/2412.17056},
  year          = {2024}
}

@article{huang2025dynamicattentionguidedcontextdecoding,
  author        = {Yanwen Huang and
                   Yong Zhang and
                   Ning Cheng and
                   Zhitao Li and
                   Shaojun Wang and
                   Jing Xiao},
  bibsource     = {dblp computer science bibliography, https://dblp.org},
  doi           = {10.48550/ARXIV.2501.01059},
  eprint        = {2501.01059},
  eprinttype    = {arXiv},
  journal       = {CoRR},
  title         = {{D}ynamic {A}ttention-Guided {C}ontext {D}ecoding for {M}itigating {C}ontext {F}aithfulness {H}allucinations in {L}arge {L}anguage {M}odels},
  url           = {https://doi.org/10.48550/arXiv.2501.01059},
  volume        = {abs/2501.01059},
  year          = {2025}
}

@inproceedings{Snyder2024earlyDetection,
  author        = {Ben Snyder and
                   Marius Moisescu and
                   Muhammad Bilal Zafar},
  bibsource     = {dblp computer science bibliography, https://dblp.org},
  booktitle     = {Proceedings of the 30th {ACM} {SIGKDD} Conference on Knowledge Discovery
                   and Data Mining, {KDD} 2024, Barcelona, Spain, August 25-29, 2024},
  doi           = {10.1145/3637528.3671796},
  editor        = {Ricardo Baeza{-}Yates and
                   Francesco Bonchi},
  pages         = {2721--2732},
  publisher     = {{ACM}},
  title         = {{O}n {E}arly {D}etection of {H}allucinations in {F}actual {Q}uestion {A}nswering},
  url           = {https://doi.org/10.1145/3637528.3671796},
  year          = {2024}
}

@inproceedings{ogasa2025hallucinatedMulti-View,
  author        = {Ogasa, Yuya and Arase, Yuki},
  booktitle     = {Proceedings of the 14th Joint Conference on Lexical and Computational Semantics (* SEM 2025)},
  pages         = {381--394},
  title         = {{H}allucinated {S}pan {D}etection with {M}ulti-View {A}ttention {F}eatures},
  year          = {2025}
}

@article{vazhentsev2025uncertaintyawareattentionheadsefficient,
  author        = {Artem Vazhentsev and
                   Lyudmila Rvanova and
                   Gleb Kuzmin and
                   Ekaterina Fadeeva and
                   Ivan Lazichny and
                   Alexander Panchenko and
                   Maxim Panov and
                   Timothy Baldwin and
                   Mrinmaya Sachan and
                   Preslav Nakov and
                   Artem Shelmanov},
  bibsource     = {dblp computer science bibliography, https://dblp.org},
  doi           = {10.48550/ARXIV.2505.20045},
  eprint        = {2505.20045},
  eprinttype    = {arXiv},
  journal       = {CoRR},
  title         = {{U}ncertainty-Aware {A}ttention {H}eads: {E}fficient {U}nsupervised {U}ncertainty {Q}uantification for {L}{L}{M}s},
  url           = {https://doi.org/10.48550/arXiv.2505.20045},
  volume        = {abs/2505.20045},
  year          = {2025}
}

@article{Cooley1969TheFF,
  author        = {Cooley, James W. and Lewis, Peter A. W. and Welch, Peter D.},
  doi           = {10.1109/TE.1969.4320436},
  journal       = {IEEE Transactions on Education},
  number        = {1},
  pages         = {27-34},
  title         = {{T}he {F}ast {F}ourier {T}ransform and {I}ts {A}pplications},
  volume        = {12},
  year          = {1969}
}

@article{wavelet1989,
  author        = {Mallat, S.G.},
  doi           = {10.1109/34.192463},
  journal       = {IEEE Transactions on Pattern Analysis and Machine Intelligence},
  number        = {7},
  pages         = {674-693},
  title         = {{A} theory for multiresolution signal decomposition: the wavelet representation},
  volume        = {11},
  year          = {1989}
}

@article{sandryhaila2014discrete,
  author        = {Aliaksei Sandryhaila and
                   Jos{\'{e}} M. F. Moura},
  bibsource     = {dblp computer science bibliography, https://dblp.org},
  doi           = {10.1109/TSP.2014.2321121},
  journal       = {{IEEE} Trans. Signal Process.},
  number        = {12},
  pages         = {3042--3054},
  title         = {{D}iscrete {S}ignal {P}rocessing on {G}raphs: {F}requency {A}nalysis},
  url           = {https://doi.org/10.1109/TSP.2014.2321121},
  volume        = {62},
  year          = {2014}
}

@inproceedings{dola_Chuang_2024,
  author        = {Yung{-}Sung Chuang and
                   Yujia Xie and
                   Hongyin Luo and
                   Yoon Kim and
                   James R. Glass and
                   Pengcheng He},
  bibsource     = {dblp computer science bibliography, https://dblp.org},
  booktitle     = {The Twelfth International Conference on Learning Representations,
                   {ICLR} 2024, Vienna, Austria, May 7-11, 2024},
  publisher     = {OpenReview.net},
  title         = {{D}o{L}a: {D}ecoding by {C}ontrasting {L}ayers {I}mproves {F}actuality in {L}arge {L}anguage {M}odels},
  url           = {https://openreview.net/forum?id=Th6NyL07na},
  year          = {2024}
}

@inproceedings{gong-etal-2024-damro,
  author        = {Xuan Gong and
                   Tianshi Ming and
                   Xinpeng Wang and
                   Zhihua Wei},
  bibsource     = {dblp computer science bibliography, https://dblp.org},
  booktitle     = {Proceedings of the 2024 Conference on Empirical Methods in Natural
                   Language Processing, {EMNLP} 2024, Miami, FL, USA, November 12-16,
                   2024},
  doi           = {10.18653/V1/2024.EMNLP-MAIN.439},
  editor        = {Yaser Al{-}Onaizan and
                   Mohit Bansal and
                   Yun{-}Nung Chen},
  pages         = {7696--7712},
  publisher     = {Association for Computational Linguistics},
  title         = {{DAMRO:} {D}ive into the {A}ttention {M}echanism of {LVLM} to {R}educe {O}bject {H}allucination},
  url           = {https://doi.org/10.18653/v1/2024.emnlp-main.439},
  year          = {2024}
}

@article{RefChecker_2024_hu,
  author        = {Xiangkun Hu and
                   Dongyu Ru and
                   Lin Qiu and
                   Qipeng Guo and
                   Tianhang Zhang and
                   Yang Xu and
                   Yun Luo and
                   Pengfei Liu and
                   Yue Zhang and
                   Zheng Zhang},
  bibsource     = {dblp computer science bibliography, https://dblp.org},
  doi           = {10.48550/ARXIV.2405.14486},
  eprint        = {2405.14486},
  eprinttype    = {arXiv},
  journal       = {CoRR},
  title         = {{R}ef{C}hecker: {R}eference-based {F}ine-grained {H}allucination {C}hecker and {B}enchmark for {L}arge {L}anguage {M}odels},
  url           = {https://doi.org/10.48550/arXiv.2405.14486},
  volume        = {abs/2405.14486},
  year          = {2024}
}

@article{nawrot2025sparsefrontiersparseattention,
  author        = {Piotr Nawrot and
                   Robert Li and
                   Renjie Huang and
                   Sebastian Ruder and
                   Kelly Marchisio and
                   Edoardo M. Ponti},
  bibsource     = {dblp computer science bibliography, https://dblp.org},
  doi           = {10.48550/ARXIV.2504.17768},
  eprint        = {2504.17768},
  eprinttype    = {arXiv},
  journal       = {CoRR},
  title         = {{T}he {S}parse {F}rontier: {S}parse {A}ttention {T}rade-offs in {T}ransformer {L}{L}{M}s},
  url           = {https://doi.org/10.48550/arXiv.2504.17768},
  volume        = {abs/2504.17768},
  year          = {2025}
}

@article{kiruluta2025attentionatomsspectraldictionary,
  author        = {Andrew Kiruluta},
  bibsource     = {dblp computer science bibliography, https://dblp.org},
  doi           = {10.48550/ARXIV.2505.00033},
  eprint        = {2505.00033},
  eprinttype    = {arXiv},
  journal       = {CoRR},
  title         = {{F}rom {A}ttention to {A}toms: {S}pectral {D}ictionary {L}earning for {F}ast, {I}nterpretable {L}anguage {M}odels},
  url           = {https://doi.org/10.48550/arXiv.2505.00033},
  volume        = {abs/2505.00033},
  year          = {2025}
}

@inproceedings{gu2025attentionsinkemergeslanguage,
  author        = {Xiangming Gu and
                   Tianyu Pang and
                   Chao Du and
                   Qian Liu and
                   Fengzhuo Zhang and
                   Cunxiao Du and
                   Ye Wang and
                   Min Lin},
  bibsource     = {dblp computer science bibliography, https://dblp.org},
  booktitle     = {The Thirteenth International Conference on Learning Representations,
                   {ICLR} 2025, Singapore, April 24-28, 2025},
  publisher     = {OpenReview.net},
  title         = {{W}hen {A}ttention {S}ink {E}merges in {L}anguage {M}odels: {A}n {E}mpirical {V}iew},
  url           = {https://openreview.net/forum?id=78Nn4QJTEN},
  year          = {2025}
}

@book{oppenheim1997signals,
  author        = {Oppenheim, Alan V. and Schafer, Alan S.},
  edition       = {2},
  publisher     = {Prentice Hall},
  title         = {{S}ignals and {S}ystems},
  year          = {1997}
}

@inproceedings{chen2024inside,
  author        = {Chao Chen and
                   Kai Liu and
                   Ze Chen and
                   Yi Gu and
                   Yue Wu and
                   Mingyuan Tao and
                   Zhihang Fu and
                   Jieping Ye},
  bibsource     = {dblp computer science bibliography, https://dblp.org},
  booktitle     = {The Twelfth International Conference on Learning Representations,
                   {ICLR} 2024, Vienna, Austria, May 7-11, 2024},
  publisher     = {OpenReview.net},
  title         = {{INSIDE:} {L}{L}{M}s' {I}nternal {S}tates {R}etain the {P}ower of {H}allucination {D}etection},
  url           = {https://openreview.net/forum?id=Zj12nzlQbz},
  year          = {2024}
}

@article{chern2023factool,
  author        = {I{-}Chun Chern and
                   Steffi Chern and
                   Shiqi Chen and
                   Weizhe Yuan and
                   Kehua Feng and
                   Chunting Zhou and
                   Junxian He and
                   Graham Neubig and
                   Pengfei Liu},
  doi           = {10.48550/ARXIV.2307.13528},
  journal       = {CoRR},
  title         = {{F}ac{T}ool: {F}actuality {D}etection in {G}enerative {AI} - {A} {T}ool {A}ugmented {F}ramework for {M}ulti-Task and {M}ulti-Domain {S}cenarios},
  year          = {2023}
}

@inproceedings{chuang2024lookback,
  author        = {Yung{-}Sung Chuang and
                   Linlu Qiu and
                   Cheng{-}Yu Hsieh and
                   Ranjay Krishna and
                   Yoon Kim and
                   James R. Glass},
  booktitle     = {EMNLP},
  pages         = {1419--1436},
  title         = {{L}ookback {L}ens: {D}etecting and {M}itigating {C}ontextual {H}allucinations in {L}arge {L}anguage {M}odels {U}sing {O}nly {A}ttention {M}aps},
  year          = {2024}
}

@article{deemter-2024-pitfalls,
  author        = {Kees van Deemter},
  bibsource     = {dblp computer science bibliography, https://dblp.org},
  doi           = {10.1162/COLI\_A\_00509},
  journal       = {Comput. Linguistics},
  number        = {2},
  pages         = {807--816},
  title         = {{T}he {P}itfalls of {D}efining {H}allucination},
  url           = {https://doi.org/10.1162/coli\_a\_00509},
  volume        = {50},
  year          = {2024}
}

@inproceedings{feng2023factkb,
  address       = {Singapore},
  author        = {Feng, Shangbin  and
                   Balachandran, Vidhisha  and
                   Bai, Yuyang  and
                   Tsvetkov, Yulia},
  booktitle     = {Proceedings of the 2023 Conference on Empirical Methods in Natural Language Processing},
  doi           = {10.18653/v1/2023.emnlp-main.59},
  editor        = {Bouamor, Houda  and
                   Pino, Juan  and
                   Bali, Kalika},
  pages         = {933--952},
  publisher     = {Association for Computational Linguistics},
  title         = {{F}act{KB}: {G}eneralizable {F}actuality {E}valuation using {L}anguage {M}odels {E}nhanced with {F}actual {K}nowledge},
  url           = {https://aclanthology.org/2023.emnlp-main.59},
  year          = {2023}
}

@article{hu2024lrp4ragdetectinghallucinationsretrievalaugmented,
  author        = {Haichuan Hu and
                   Yuhan Sun and
                   Quanjun Zhang},
  doi           = {10.48550/ARXIV.2408.15533},
  journal       = {CoRR},
  title         = {{LRP4RAG:} {D}etecting {H}allucinations in {R}etrieval-Augmented {G}eneration via {L}ayer-wise {R}elevance {P}ropagation},
  year          = {2024}
}

@article{ji2023survey,
  author        = {Ziwei Ji and
                   Nayeon Lee and
                   Rita Frieske and
                   Tiezheng Yu and
                   Dan Su and
                   Yan Xu and
                   Etsuko Ishii and
                   Yejin Bang and
                   Andrea Madotto and
                   Pascale Fung},
  doi           = {10.1145/3571730},
  journal       = {{ACM} Comput. Surv.},
  pages         = {248:1--248:38},
  title         = {{S}urvey of {H}allucination in {N}atural {L}anguage {G}eneration},
  year          = {2023}
}

@inproceedings{manakul-etal-2023-selfcheckgpt,
  address       = {Singapore},
  author        = {Manakul, Potsawee  and
                   Liusie, Adian  and
                   Gales, Mark},
  booktitle     = {Proceedings of the 2023 Conference on Empirical Methods in Natural Language Processing},
  doi           = {10.18653/v1/2023.emnlp-main.557},
  editor        = {Bouamor, Houda  and
                   Pino, Juan  and
                   Bali, Kalika},
  pages         = {9004--9017},
  publisher     = {Association for Computational Linguistics},
  title         = {{S}elf{C}heck{GPT}: {Z}ero-Resource {B}lack-Box {H}allucination {D}etection for {G}enerative {L}arge {L}anguage {M}odels},
  url           = {https://aclanthology.org/2023.emnlp-main.557},
  year          = {2023}
}

@inproceedings{min-etal-2023-factscore,
  address       = {Singapore},
  author        = {Min, Sewon  and
                   Krishna, Kalpesh  and
                   Lyu, Xinxi  and
                   Lewis, Mike  and
                   Yih, Wen-tau  and
                   Koh, Pang  and
                   Iyyer, Mohit  and
                   Zettlemoyer, Luke  and
                   Hajishirzi, Hannaneh},
  booktitle     = {Proceedings of the 2023 Conference on Empirical Methods in Natural Language Processing},
  doi           = {10.18653/v1/2023.emnlp-main.741},
  editor        = {Bouamor, Houda  and
                   Pino, Juan  and
                   Bali, Kalika},
  pages         = {12076--12100},
  publisher     = {Association for Computational Linguistics},
  title         = {{FA}ct{S}core: {F}ine-grained {A}tomic {E}valuation of {F}actual {P}recision in {L}ong {F}orm {T}ext {G}eneration},
  url           = {https://aclanthology.org/2023.emnlp-main.741},
  year          = {2023}
}

@inproceedings{niu-etal-2024-ragtruth,
  author        = {Niu, Cheng  and Wu, Yuanhao  and Zhu, Juno  and Xu, Siliang  and Shum, KaShun and Zhong, Randy  and Song, Juntong  and Zhang, Tong},
  booktitle     = {ACL},
  doi           = {10.18653/v1/2024.acl-long.585},
  pages         = {10862--10878},
  title         = {{RAGT}ruth: {A} {H}allucination {C}orpus for {D}eveloping {T}rustworthy {R}etrieval-Augmented {L}anguage {M}odels},
  year          = {2024}
}

@inproceedings{sriramananllm,
  author        = {Gaurang Sriramanan and
                   Siddhant Bharti and
                   Vinu Sankar Sadasivan and
                   Shoumik Saha and
                   Priyatham Kattakinda and
                   Soheil Feizi},
  bibsource     = {dblp computer science bibliography, https://dblp.org},
  booktitle     = {Advances in Neural Information Processing Systems 38: Annual Conference
                   on Neural Information Processing Systems 2024, NeurIPS 2024, Vancouver,
                   BC, Canada, December 10 - 15, 2024},
  editor        = {Amir Globersons and
                   Lester Mackey and
                   Danielle Belgrave and
                   Angela Fan and
                   Ulrich Paquet and
                   Jakub M. Tomczak and
                   Cheng Zhang},
  title         = {{L}{L}{M}-{C}heck: {I}nvestigating {D}etection of {H}allucinations in {L}arge {L}anguage {M}odels},

  year          = {2024}
}

@inproceedings{sun2024redeep,
  author        = {Zhongxiang Sun and
                   Xiaoxue Zang and
                   Kai Zheng and
                   Jun Xu and
                   Xiao Zhang and
                   Weijie Yu and
                   Yang Song and
                   Han Li},
  booktitle     = {ICLR},
  title         = {{R}e{D}e{E}{P}: {D}etecting {H}allucination in {R}etrieval-Augmented {G}eneration via {M}echanistic {I}nterpretability},
  year          = {2025}
}

@article{donoho1998minimax,
  title={Minimax estimation via wavelet shrinkage},
  author={Donoho, David L and Johnstone, Iain M},
  journal={The annals of Statistics},
  volume={26},
  number={3},
  pages={879--921},
  year={1998},
  publisher={Institute of Mathematical Statistics}
}
\bibliographystyle{icml2026}

\newpage

\setcounter{figure}{0}
\renewcommand{\thefigure}{A\arabic{figure}}
\setcounter{table}{0}
\renewcommand{\thetable}{A\arabic{table}}

\appendix
\onecolumn

\section{A three-step proof that larger $K$ increases (a lower bound on) attention roughness}\label{app:three_step_proof}

\paragraph{Goal.}
We consider a \emph{single-layer} causal self-attention mechanism and measure, at a fixed prediction position $t\ge 2$, how \emph{rough} the attention weights over past positions are as a function of the number of mixture components $K$ in the input distribution.
Roughness is quantified by the $\ell_2$ adjacent-difference energy
\begin{equation}\label{eq:R_def_app}
R_t \;\triangleq\; \sum_{j=1}^{t-2}\big(\alpha_{t,j+1}-\alpha_{t,j}\big)^2,
\end{equation}
where $\alpha_{t,1:t-1}$ is the attention row (probability vector) used to predict token $t$.

We prove the claim via three sub-claims:
(i) adjacent tokens switch mixture components more often when $K$ is larger;
(ii) component switches induce larger adjacent differences in \emph{logits} $s_{t,j}$;
(iii) softmax transfers logit roughness into attention-weight roughness under mild non-degeneracy.

\subsection{Setup and assumptions}

\paragraph{Mixture of topic labels.}
Let $K\in \mathbb{N}$ denote the number of mixture components. Each position $j$ has an associated latent topic label
\[
c_j\in\{1,\dots,K\}.
\]
\begin{assumption}[i.i.d.\ uniform labels]\label{ass:labels_app}
$(c_j)_{j\ge 1}$ are i.i.d.\ and $\Pr(c_j=r)=1/K$ for all $r\in\{1,\dots,K\}$, which means we randomly assign a topic label for the tokens, and each topic has a unique Gaussian distribution. Then, the input token embedding under the view of Gaussian mixture model will follow a GMM:
For each $j$,
\[
x_j = \mu_{c_j} + \varepsilon_j,\qquad \varepsilon_j\stackrel{i.i.d.}{\sim}\mathcal N(0,\sigma^2 I_d),
\]
where $\mu_1,\dots,\mu_K\in\mathbb R^d$ are fixed means and $(\varepsilon_j)_j$ are independent of $(c_j)_j$.
\end{assumption}

Then, if we consider a single-layer causal attention at position $t$ (the position for the next generated token), and Fix $t\ge 2$. Let
\[
q_t = W_Q x_t,\qquad k_j = W_K x_j,\quad j\le t-1,
\]
and define logits (pre-softmax scores)
\[
s_{t,j}=\frac{q_t^\top k_j}{\sqrt{d_q}}=\frac{q_t^\top W_K x_j}{\sqrt{d_q}},\qquad j=1,\dots,t-1.
\]
To simplify the notation system, we further define the induced \emph{score direction} in the input space, where
\begin{equation}\label{eq:u_def_app}
u \;\triangleq\; \frac{W_K^\top q_t}{\sqrt{d_q}}\in\mathbb R^d,
\end{equation}
so that
\begin{equation}\label{eq:s_proj_app}
s_{t,j} = u^\top x_j.
\end{equation}
Attention weights from existing token towards next generation token are
\begin{equation}\label{eq:softmax_app}
\alpha_{t,j}=\frac{e^{s_{t,j}}}{\sum_{\ell=1}^{t-1}e^{s_{t,\ell}}},\qquad j=1,\dots,t-1.
\end{equation}

\begin{assumption}[Query conditioning]\label{ass:query_cond_app}
We condition on $x_t$ (equivalently on $q_t$), and treat $u$ in \eqref{eq:u_def_app} as deterministic. All expectations below are over $\{(c_j,\varepsilon_j)\}_{j\le t-1}$.
\end{assumption}

\paragraph{Separability along the score direction.}
\begin{assumption}[Separability]\label{ass:sep_app}
There exists $\Delta>0$ such that for all $r\neq r'$,
\[
|u^\top(\mu_{r'}-\mu_r)|\ge \Delta.
\]
\end{assumption}

In this assumption, we hope that the centroid of Gaussian should be still separable  after projection based on $W_k$ and $q_t$ in Equation \ref{eq:u_def_app}

\paragraph{Softmax non-degeneracy (prevents vanishing pair mass).}
Fix $j\in\{1,\dots,t-2\}$ and define
\[
\Delta s \triangleq s_{t,j+1}-s_{t,j},\qquad
m \triangleq \alpha_{t,j}+\alpha_{t,j+1}.
\]
\begin{assumption}[Softmax non-degeneracy]\label{ass:nd_app}
There exist constants $\eta\in(0,1)$, $B\le 2$, and $\kappa'>0$, independent of $K$, such that with
\[
E \triangleq \{m\ge \eta\}\cap\{|\Delta s|\le B\},
\]
we have
\[
\mathbb E\big[(\Delta s)^2\mathbf{1}_E\big]\ \ge\ \kappa'\,\mathbb E\big[(\Delta s)^2\big].
\]
\end{assumption}

Intuitively, Assumption \ref{ass:nd_app} says that local logit fluctuations happen in parts of the sequence that the softmax actually ``looks at'': with nontrivial frequency, the adjacent pair $(j,j+1)$ receives at least some fixed amount of probability mass, and on those occasions the logit gap is not excessively large. This prevents the situation where logits vary wildly only at positions whose attention weights are almost always zero, in which case attention differences would remain small regardless of logit roughness.

\subsection{Sub-claim 1: Adjacent component switching probability increases with $K$}

\begin{lemma}[Switch probability]\label{lem:switch_app}
Under Assumption~\ref{ass:labels_app}, for any $j\ge 1$, the probability of
\[
\Pr(c_{j+1}\neq c_j)=1-\frac{1}{K}.
\]
\end{lemma}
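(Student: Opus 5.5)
The plan is to compute the complementary event $\{c_{j+1}=c_j\}$ directly, using only the independence and uniformity in Assumption~\ref{ass:labels_app}. Since the labels $c_j$ and $c_{j+1}$ are independent, their joint law factorizes. Decompose the event by the common value:
\[
\{c_{j+1}=c_j\}=\bigsqcup_{r=1}^{K}\{c_j=r,\;c_{j+1}=r\}.
\]
This union is disjoint because distinct $r$ give disjoint events.

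By countable (here finite) additivity and independence,
\[
\Pr(c_{j+1}=c_j)=\sum_{r=1}^{K}\Pr(c_j=r)\Pr(c_{j+1}=r)=\sum_{r=1}^{K}\frac{1}{K}\cdot\frac{1}{K}=\frac{1}{K}.
\]
Taking complements then gives $\Pr(c_{j+1}\neq c_j)=1-\tfrac{1}{K}$.

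Two remarks complete the argument:
\begin{itemize}
\item The identity holds for every $j\ge 1$ because the labels are identically distributed, so nothing depends on $j$.
\item The noise variables $\varepsilon_j$ play no role, since the event involves only the labels.
\end{itemize}

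I do not expect any real obstacle. The only point needing care is to invoke independence of the pair $(c_j,c_{j+1})$, rather than just the marginals. As a closing observation for the three-step argument, $1-1/K$ is strictly increasing in $K$, which is the monotonicity used in Sub-claim 1.
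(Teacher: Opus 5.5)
Your proof is correct and follows the paper's argument essentially line for line. Like the paper, you decompose $\{c_{j+1}=c_j\}$ by the common value $r$, factor each term by independence, sum $\sum_{r=1}^{K}\frac{1}{K}\cdot\frac{1}{K}=\frac{1}{K}$, and take complements.
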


\begin{proof}
By independence and uniformity,
\[
\Pr(c_{j+1}=c_j)=\sum_{r=1}^K \Pr(c_j=r,\,c_{j+1}=r)
=\sum_{r=1}^K \Pr(c_j=r)\Pr(c_{j+1}=r)
=\sum_{r=1}^K \frac{1}{K}\cdot\frac{1}{K}
=\frac{1}{K}.
\]
Taking complements gives $\Pr(c_{j+1}\neq c_j)=1-\frac{1}{K}$.
\end{proof}

\subsection{Sub-claim 2: Adjacent logit difference energy increases with $K$}

\begin{lemma}[Logit adjacent-difference energy]\label{lem:logit_energy_app}
Under Assumptions~\ref{ass:labels_app}--\ref{ass:sep_app} and \ref{ass:query_cond_app}, for any $j\le t-2$,
\[
\mathbb E\big[(s_{t,j+1}-s_{t,j})^2\big]
\;\ge\;
2\sigma^2\|u\|_2^2 \;+\;\Big(1-\frac{1}{K}\Big)\Delta^2.
\]
\end{lemma}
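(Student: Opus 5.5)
Using \eqref{eq:s_proj_app} and Assumption~\ref{ass:labels_app}, we decompose the adjacent logit difference into a mean part and a noise part:
\[
s_{t,j+1}-s_{t,j} \;=\; \underbrace{u^\top(\mu_{c_{j+1}}-\mu_{c_j})}_{=:D} \;+\; \underbrace{u^\top(\varepsilon_{j+1}-\varepsilon_j)}_{=:Z}.
\]
By Assumption~\ref{ass:query_cond_app}, $u$ is deterministic. We then expand $\mathbb E[(D+Z)^2]=\mathbb E[D^2]+2\,\mathbb E[DZ]+\mathbb E[Z^2]$ and bound the three terms separately.

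\textbf{Cross term.} Here $D$ is a function of $(c_j,c_{j+1})$ only, and $Z$ is a function of $(\varepsilon_j,\varepsilon_{j+1})$ only. The noise is independent of the labels and has mean zero, so $\mathbb E[DZ]=\mathbb E[D]\,\mathbb E[Z]=0$.

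\textbf{Noise term.} Since $\varepsilon_j,\varepsilon_{j+1}$ are i.i.d.\ $\mathcal N(0,\sigma^2 I_d)$, the difference $\varepsilon_{j+1}-\varepsilon_j$ is distributed as $\mathcal N(0,2\sigma^2 I_d)$. Hence $Z\sim\mathcal N(0,2\sigma^2\|u\|_2^2)$, and $\mathbb E[Z^2]=2\sigma^2\|u\|_2^2$ exactly.

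\textbf{Mean term.} We condition on whether a switch occurs. On the event $\{c_{j+1}=c_j\}$ we have $D=0$. On the complementary event, Assumption~\ref{ass:sep_app} (applied with $r=c_j$, $r'=c_{j+1}$) gives $D^2\ge\Delta^2$. Therefore
\[
\mathbb E[D^2]\;\ge\;\Delta^2\,\Pr(c_{j+1}\neq c_j)\;=\;\Big(1-\frac1K\Big)\Delta^2,
\]
where the last equality is Lemma~\ref{lem:switch_app}. Adding the three contributions yields the claimed bound.

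None of these steps is a real obstacle. The only point that needs care is the vanishing of the cross term. That step relies on $u$ being fixed, which Assumption~\ref{ass:query_cond_app} provides. If $u$ instead depended on $x_t$, and $x_t$ were correlated with the past positions $j\le t-1$, the factorization $\mathbb E[DZ]=\mathbb E[D]\,\mathbb E[Z]$ could fail. Under the conditioning stated in the assumption, however, $(c_j)$ and $(\varepsilon_j)$ for $j\le t-1$ are independent of $x_t$, so the argument goes through.
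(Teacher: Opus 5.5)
Your proposal is correct and follows the paper's proof essentially step for step. Both use the same mean-jump/noise decomposition, kill the cross term by independence of labels and noise, compute the noise energy exactly as $2\sigma^2\|u\|_2^2$, and bound the mean term via the switch event and Lemma~\ref{lem:switch_app}. The only cosmetic difference is that you factor $\mathbb E[DZ]=\mathbb E[D]\,\mathbb E[Z]$, whereas the paper argues via $\mathbb E[B\mid c_j,c_{j+1}]=0$; the two are equivalent.
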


\begin{proof}
Using \eqref{eq:s_proj_app} and $x_j=\mu_{c_j}+\varepsilon_j$,
\[
s_{t,j+1}-s_{t,j}
= u^\top(x_{j+1}-x_j)
= u^\top(\mu_{c_{j+1}}-\mu_{c_j}) + u^\top(\varepsilon_{j+1}-\varepsilon_j).
\]
Let
\[
A \triangleq u^\top(\mu_{c_{j+1}}-\mu_{c_j}),\qquad
B \triangleq u^\top(\varepsilon_{j+1}-\varepsilon_j),
\]
so that $s_{t,j+1}-s_{t,j}=A+B$.
Then
\[
\mathbb E[(A+B)^2]=\mathbb E[A^2] + 2\mathbb E[AB] + \mathbb E[B^2].
\]
The cross term vanishes: $A$ depends only on topic labels $(c_j,c_{j+1})$ while $B$ depends only on Gaussian noises $(\varepsilon_j,\varepsilon_{j+1})$, which are independent of labels, and $\mathbb E[B\mid c_j,c_{j+1}]=0$, hence $\mathbb E[AB]=0$.

For the noise term, $\varepsilon_{j+1}-\varepsilon_j\sim\mathcal N(0,2\sigma^2 I_d)$, so
\[
B \sim \mathcal N\big(0,\,2\sigma^2\|u\|_2^2\big)
\quad\Rightarrow\quad
\mathbb E[B^2]=2\sigma^2\|u\|_2^2.
\]
For the mean-jump term, if $c_{j+1}=c_j$ then $A=0$.
If $c_{j+1}\neq c_j$, Assumption~\ref{ass:sep_app} gives $A^2\ge \Delta^2$.
Therefore
\[
\mathbb E[A^2]
= \mathbb E\!\left[A^2\mathbf{1}\{c_{j+1}\neq c_j\}\right]
\ge \Delta^2\,\Pr(c_{j+1}\neq c_j)
= \Delta^2\Big(1-\frac{1}{K}\Big),
\]
where we used Lemma~\ref{lem:switch_app}.
Combining yields the stated lower bound.
\end{proof}

\subsection{Sub-claim 3: Softmax transfers logit roughness to attention roughness}

\begin{lemma}[Pairwise softmax difference identity]\label{lem:tanh_app}
Fix $t$ and $j\le t-2$. Let $m=\alpha_{t,j}+\alpha_{t,j+1}$ and $\Delta s=s_{t,j+1}-s_{t,j}$.
Then
\[
\alpha_{t,j+1}-\alpha_{t,j} \;=\; m\,\tanh\!\Big(\frac{\Delta s}{2}\Big).
\]
\end{lemma}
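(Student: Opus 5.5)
The identity is a direct computation from the softmax definition \eqref{eq:softmax_app}. I will write $Z=\sum_{\ell=1}^{t-1}e^{s_{t,\ell}}$, so that $\alpha_{t,j}=e^{s_{t,j}}/Z$ and $\alpha_{t,j+1}=e^{s_{t,j+1}}/Z$. The normaliser $Z$ is common to both weights and cancels in the ratio $(\alpha_{t,j+1}-\alpha_{t,j})/(\alpha_{t,j+1}+\alpha_{t,j})$. So the plan is to compute this ratio, which depends only on the two logits, and then multiply back by $m$.

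Concretely, I first note that $m>0$, since every softmax weight is strictly positive, so dividing by $m$ is legitimate. Then
\[
\frac{\alpha_{t,j+1}-\alpha_{t,j}}{m}=\frac{e^{s_{t,j+1}}-e^{s_{t,j}}}{e^{s_{t,j+1}}+e^{s_{t,j}}}.
\]
Next I factor $e^{(s_{t,j}+s_{t,j+1})/2}$ from both the numerator and the denominator. This leaves
\[
\frac{e^{\Delta s/2}-e^{-\Delta s/2}}{e^{\Delta s/2}+e^{-\Delta s/2}},
\]
which is $\tanh(\Delta s/2)$ by the definition of the hyperbolic tangent. Multiplying by $m$ gives the claim.

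None of these steps is a real obstacle; the only point needing care is the symmetric factorisation, which turns the ratio into a function of $\Delta s$ alone.

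The identity is meant for Sub-claim 3. Squaring it gives $(\alpha_{t,j+1}-\alpha_{t,j})^2=m^2\tanh^2(\Delta s/2)$. On the event $E$ of Assumption~\ref{ass:nd_app} we have $m\ge\eta$, and $|\Delta s|/2\le 1$ because $B\le 2$. Since $\tanh$ is concave on $[0,1]$, this gives $\tanh(x)\ge \tanh(1)\,x$ there. Combining these bounds with Assumption~\ref{ass:nd_app} and Lemma~\ref{lem:logit_energy_app} will yield the lower bound on $\mathbb E[R_t]$ that increases with $K$.
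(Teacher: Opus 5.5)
Your proof is correct and follows the paper's argument: both write the two weights over the common normaliser $Z$, so that $\alpha_{t,j+1}-\alpha_{t,j}$ equals $m$ times $(e^{s_{t,j+1}}-e^{s_{t,j}})/(e^{s_{t,j+1}}+e^{s_{t,j}})$, and this ratio is $\tanh(\Delta s/2)$; you make the symmetric factorisation explicit, where the paper leaves it implicit. Your closing remark on Sub-claim 3 is also sound, and the concavity bound $\tanh(x)\ge\tanh(1)\,x$ on $[0,1]$ is in fact slightly sharper than the paper's $|\tanh(x)|\ge |x|/2$.
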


\begin{proof}
From \eqref{eq:softmax_app},
\[
\alpha_{t,j+1}-\alpha_{t,j}=\frac{e^{s_{t,j+1}}-e^{s_{t,j}}}{Z},\qquad
Z=\sum_{\ell=1}^{t-1}e^{s_{t,\ell}}.
\]
Also $m=(e^{s_{t,j+1}}+e^{s_{t,j}})/Z$, hence
\[
\alpha_{t,j+1}-\alpha_{t,j}
= m\cdot\frac{e^{s_{t,j+1}}-e^{s_{t,j}}}{e^{s_{t,j+1}}+e^{s_{t,j}}}
= m\,\tanh\!\Big(\frac{s_{t,j+1}-s_{t,j}}{2}\Big)
= m\,\tanh\!\Big(\frac{\Delta s}{2}\Big).
\]
\end{proof}

\begin{lemma}[Softmax transfer lower bound]\label{lem:transfer_app}
Under Assumption~\ref{ass:nd_app} (with $B\le 2$), for any $j\le t-2$,
\[
\mathbb E\big[(\alpha_{t,j+1}-\alpha_{t,j})^2\big]
\;\ge\;
c\,\mathbb E\big[(\Delta s)^2\big],
\qquad
c\triangleq \frac{\eta^2\kappa'}{16}.
\]
\end{lemma}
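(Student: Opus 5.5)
The plan is to combine the pairwise identity of Lemma~\ref{lem:tanh_app} with a pointwise lower bound on $\tanh$ on the good event $E$ from Assumption~\ref{ass:nd_app}. By Lemma~\ref{lem:tanh_app},
\[
(\alpha_{t,j+1}-\alpha_{t,j})^2 = m^2\tanh^2\!\Big(\frac{\Delta s}{2}\Big).
\]
Since this quantity is nonnegative, we may discard the complement of $E$:
\[
\mathbb E\big[(\alpha_{t,j+1}-\alpha_{t,j})^2\big]\ \ge\ \mathbb E\Big[m^2\tanh^2\!\Big(\frac{\Delta s}{2}\Big)\mathbf 1_E\Big].
\]

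On $E$ we have $m\ge\eta$, so $m^2\ge\eta^2$. We also have $|\Delta s|\le B\le 2$, so $|\Delta s/2|\le 1$.

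The key step is an elementary lower bound on $\tanh$ near the origin. Since $\tanh$ is concave on $[0,\infty)$ and odd, for $|x|\le 1$ we get $|\tanh x|\ge \tanh(1)\,|x|$. Because $\tanh(1)\approx 0.76\ge 1/2$, this gives $|\tanh x|\ge |x|/2$. With $x=\Delta s/2$, on $E$ this yields
\[
\tanh^2\!\Big(\frac{\Delta s}{2}\Big)\ge \frac{(\Delta s)^2}{16}.
\]
Combining, we obtain
\[
\mathbb E\big[(\alpha_{t,j+1}-\alpha_{t,j})^2\big]\ge \frac{\eta^2}{16}\,\mathbb E\big[(\Delta s)^2\mathbf 1_E\big].
\]

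Finally, we invoke Assumption~\ref{ass:nd_app} directly to bound $\mathbb E[(\Delta s)^2\mathbf 1_E]\ge\kappa'\,\mathbb E[(\Delta s)^2]$. This gives the constant $c=\eta^2\kappa'/16$, which is independent of $K$, as claimed.

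The only mildly delicate point is the $\tanh$ inequality. This is exactly why the restriction $B\le 2$ appears: without it, $\tanh$ saturates and the linear lower bound fails. I would justify the inequality via concavity, using the chord from $0$ to $1$, rather than a Taylor bound. Everything else is monotonicity of expectation together with the nonnegativity needed to drop $E^c$.
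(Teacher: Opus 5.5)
Your proposal is correct and follows the paper's proof essentially step for step: you use the identity $\alpha_{t,j+1}-\alpha_{t,j}=m\tanh(\Delta s/2)$, restrict to the event $E$, apply $m^2\ge\eta^2$ and $\tanh^2(\Delta s/2)\ge(\Delta s)^2/16$ there, and then invoke Assumption~\ref{ass:nd_app}. The one addition is your concavity (chord) argument for $|\tanh x|\ge \tanh(1)|x|\ge |x|/2$ on $|x|\le 1$, which justifies the elementary bound that the paper states without proof.
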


\begin{proof}
By Lemma~\ref{lem:tanh_app},
\[
(\alpha_{t,j+1}-\alpha_{t,j})^2 = m^2 \tanh^2\!\Big(\frac{\Delta s}{2}\Big).
\]
On $\{|\Delta s|\le B\}$ with $B\le 2$, we have $|\Delta s|/2\le 1$ and use the elementary bound
\[
|\tanh(x)|\ge \frac{|x|}{2}\qquad\text{for all }|x|\le 1,
\]
which implies $\tanh^2(\Delta s/2)\ge (\Delta s)^2/16$ on $\{|\Delta s|\le B\}$.
On $\{m\ge \eta\}$ we have $m^2\ge \eta^2$. Therefore on the event
\[
E=\{m\ge \eta\}\cap\{|\Delta s|\le B\},
\]
\[
(\alpha_{t,j+1}-\alpha_{t,j})^2 \ge \eta^2\cdot\frac{(\Delta s)^2}{16}.
\]
Taking expectations and applying Assumption~\ref{ass:nd_app} yields
\[
\mathbb E\big[(\alpha_{t,j+1}-\alpha_{t,j})^2\big]
\ge \frac{\eta^2}{16}\,\mathbb E\big[(\Delta s)^2\mathbf{1}_E\big]
\ge \frac{\eta^2\kappa'}{16}\,\mathbb E\big[(\Delta s)^2\big].
\]
\end{proof}

\subsection{Main theorem and concise proof via Sub-claims 1--3}

\begin{theorem}[Monotone $K$-dependent lower bound for attention roughness]\label{thm:main_app}
Fix $t\ge 2$ and consider the single-layer causal attention row $\alpha_{t,1:t-1}$ defined in \eqref{eq:softmax_app}.
Under Assumptions~\ref{ass:labels_app}, \ref{ass:query_cond_app}, \ref{ass:sep_app}, and \ref{ass:nd_app},
there exists a constant $C>0$ independent of $K$ such that
\[
\mathbb E[R_t] \;\ge\; C\,(t-2)\Big(1-\frac{1}{K}\Big)\Delta^2.
\]
In particular, the right-hand side is monotone increasing in $K$; hence $\mathbb E[R_t]$ admits a monotone increasing-in-$K$ lower bound.
\end{theorem}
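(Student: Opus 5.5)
The plan is to combine the three sub-claims term by term and then sum over adjacent pairs. First I use linearity of expectation on the definition \eqref{eq:R_def_app}:
\[
\mathbb E[R_t]=\sum_{j=1}^{t-2}\mathbb E\big[(\alpha_{t,j+1}-\alpha_{t,j})^2\big].
\]
This reduces the theorem to a uniform per-pair lower bound that does not depend on $j$. Conditioning on $x_t$ (Assumption~\ref{ass:query_cond_app}) makes $u$ deterministic. All expectations are then over the i.i.d.\ labels and noises at positions $j\le t-1$, so the lemmas apply verbatim to every pair $(j,j+1)$ with $j\le t-2$.

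For a fixed pair I chain Lemma~\ref{lem:transfer_app} with Lemma~\ref{lem:logit_energy_app}:
\[
\mathbb E\big[(\alpha_{t,j+1}-\alpha_{t,j})^2\big]\ge c\,\mathbb E[(\Delta s)^2]\ge c\Big(2\sigma^2\|u\|_2^2+\big(1-\tfrac1K\big)\Delta^2\Big)\ge c\big(1-\tfrac1K\big)\Delta^2 .
\]
The last step discards the nonnegative noise term. Here $c=\eta^2\kappa'/16$. The constants $\eta,\kappa'$ are independent of $K$ by Assumption~\ref{ass:nd_app}, so $c$ is as well. Lemma~\ref{lem:switch_app} is already built into the factor $1-1/K$ through Lemma~\ref{lem:logit_energy_app}.

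Summing over the $t-2$ values of $j$ gives $\mathbb E[R_t]\ge C(t-2)(1-1/K)\Delta^2$ with $C=c$. Monotonicity in $K$ is immediate because $1-1/K$ is increasing. For $t=2$ the sum is empty and both sides vanish, so that case is trivial.

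The main obstacle is the uniformity of constants. Assumption~\ref{ass:nd_app} is stated for a fixed $j$, and I read it as holding with the same $(\eta,B,\kappa')$ for every $j\le t-2$; otherwise I would take $\kappa'$ to be the minimum over the finitely many $j$. I must also make sure $\Delta$ in Assumption~\ref{ass:sep_app} is a fixed quantity (given the conditioning on $u$) and not something that secretly depends on $K$. A secondary check is the elementary inequality $|\tanh x|\ge |x|/2$ on $|x|\le1$ used inside Lemma~\ref{lem:transfer_app}. It holds because $\tanh$ is concave on $[0,1]$ and $\tanh 1\approx0.76>1/2$.
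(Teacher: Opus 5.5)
Your proposal is correct and follows the paper's proof essentially line for line. You apply Lemma~\ref{lem:transfer_app} to each adjacent pair, bound $\mathbb E[(\Delta s)^2]$ below via Lemma~\ref{lem:logit_energy_app} after discarding the noise term, and sum over the $t-2$ pairs with $C=c=\eta^2\kappa'/16$. Your remarks on uniformity of $(\eta,\kappa')$ in $j$ and on the empty case $t=2$ are harmless refinements that the paper leaves implicit.

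Your worry about $\Delta$ is well founded, but it concerns the statement rather than your derivation. On one side, $\kappa'\,\mathbb E[(\Delta s)^2]\le \mathbb E[(\Delta s)^2\mathbf{1}_E]\le B^2\le 4$. On the other, pairwise separation gives $\mathbb E[(\Delta s)^2]\ge \Delta^2(K^2-1)/6$. A $K$-independent $\kappa'$ therefore forces $\kappa'\Delta^2\le 24/(K^2-1)$. So with $\Delta$ fixed, the hypotheses hold only on a bounded range of $K$, and the monotonicity conclusion is only formal---equally so in the paper.
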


\begin{proof}
By Lemma~\ref{lem:transfer_app}, for each $j\le t-2$,
\[
\mathbb E\big[(\alpha_{t,j+1}-\alpha_{t,j})^2\big]
\ge c\,\mathbb E\big[(s_{t,j+1}-s_{t,j})^2\big],
\qquad c=\frac{\eta^2\kappa'}{16}.
\]
Summing over $j=1,\dots,t-2$ gives
\[
\mathbb E[R_t]
=\sum_{j=1}^{t-2}\mathbb E\big[(\alpha_{t,j+1}-\alpha_{t,j})^2\big]
\ge c\sum_{j=1}^{t-2}\mathbb E\big[(s_{t,j+1}-s_{t,j})^2\big].
\]
Applying Lemma~\ref{lem:logit_energy_app} yields, for each $j$,
\[
\mathbb E\big[(s_{t,j+1}-s_{t,j})^2\big]
\ge \Big(1-\frac{1}{K}\Big)\Delta^2,
\]
where the factor $\big(1-\frac{1}{K}\big)$ comes from Lemma~\ref{lem:switch_app}.
Therefore
\[
\mathbb E[R_t]
\ge c\sum_{j=1}^{t-2}\Big(1-\frac{1}{K}\Big)\Delta^2
= c\,(t-2)\Big(1-\frac{1}{K}\Big)\Delta^2.
\]
Setting $C=c$ completes the proof.
\end{proof}

\subsection{Summary of the logical connection between Sub-claims 1--3}

\paragraph{How the pieces fit together.}
The proof decomposes the $K$-dependence into three steps:
\begin{enumerate}[leftmargin=*,label=\textbf{(\roman*)}]
\item \textbf{(Sub-claim 1)} Increasing $K$ increases the probability that adjacent inputs come from different mixture components:
$\Pr(c_{j+1}\neq c_j)=1-\frac{1}{K}$.
\item \textbf{(Sub-claim 2)} Under separability along the attention score direction $u$, a component switch forces a nontrivial expected squared jump in adjacent logits, yielding a lower bound
$\mathbb E[(s_{t,j+1}-s_{t,j})^2]\ge (1-\frac{1}{K})\Delta^2$ (up to additional noise energy).
\item \textbf{(Sub-claim 3)} A pairwise identity for softmax shows
$\alpha_{t,j+1}-\alpha_{t,j}=m\tanh(\Delta s/2)$, and under non-degeneracy ($m$ not vanishing too often) this yields
$\mathbb E[(\alpha_{t,j+1}-\alpha_{t,j})^2]\gtrsim \mathbb E[(\Delta s)^2]$.
\end{enumerate}
Chaining (i)--(iii) and summing over $j$ proves Theorem~\ref{thm:main_app}, which provides a monotone increasing-in-$K$ lower bound on $\mathbb E[R_t]$.

\section{Frequency-aware Operators' Energy Equivalence}
\label{apx:energy_equal}
In this appendix section, we provide additional technical details on the spectral operators used in our framework and clarify how high-frequency energy can be consistently quantified using the $\ell_2$ norm.
We show that the Discrete Fourier Transform (DFT), Discrete Wavelet Transform (DWT), and the discrete Laplacian operator all provide principled ways to extract high-frequency components from discrete attention signals, and that their corresponding $\ell_2$ norms measure comparable notions of signal energy despite operating in different domains.

\subsection{Discrete Fourier Transform}
\label{apx:sec:appendix_dft}

We begin with the DFT, which provides a global frequency-domain representation of discrete signals.
Given a token-level attention signal $a_{l,h} \in \mathbb{R}^T$ from layer $l$ and head $h$, its DFT is defined as
\begin{equation}
\hat{a}_{l,h}(\omega) = \sum_{t=0}^{T-1} a_{l,h}(t)\, e^{-i 2\pi \omega t / T}, \quad \omega = 0, \dots, T-1.
\end{equation}

A frequency-domain high-pass filter $M(\omega)$ can be applied to isolate high-frequency components, yielding
\begin{equation}
\hat{z}^{\mathrm{hf}}_{l,h}(\omega) = M(\omega)\, \hat{a}_{l,h}(\omega),
\end{equation}
where $M(\omega)=1$ for $\omega \in \Omega_{\text{high}}$ and $0$ otherwise.
The corresponding high-frequency signal in the token domain is obtained via the inverse DFT.

The use of the $\ell_2$ norm as a hallucination score is justified by the \emph{Parseval's theorem} in Section \ref{subsec:instability}, which guarantees energy equivalence between the token and frequency domains:
\begin{equation}
\| z^{\mathrm{hf}}_{l,h} \|_2^2
= \frac{1}{T} \sum_{\omega \in \Omega_{\text{high}}} 
\left| \hat{a}_{l,h}(\omega) \right|^2.
\end{equation}
Thus, the score $s_{l,h} = \| z^{\mathrm{hf}}_{l,h} \|_2$ directly measures the total high-frequency energy of the attention signal, capturing rapid oscillations and global irregularities across the token sequence.

\subsection{Discrete Wavelet Transform}
\label{apx:sec:appendix_dwt}

DWT provides a multi-resolution analysis of discrete signals, decomposing an attention signal into components at different spatial scales.
Using an orthonormal wavelet basis (e.g., Daubechies wavelets), the attention signal $a_{l,h}$ is decomposed into approximation coefficients $\mathbf{c_A}$ and detail coefficients $\mathbf{c_D}$ at multiple scales.

In our framework, the high-frequency component $z^{\mathrm{hf}}_{l,h}$ is reconstructed from $\mathbf{c_D}$ corresponding to fine scales, which capture localized and abrupt changes in attention.

By \emph{Parseval’s theorem}, the energy of the attention signal is preserved under an orthonormal wavelet transform, and equals the sum of squared wavelet coefficients.
Let $d_{j,k}$ denote the detail coefficient $\mathbf{c_D}$ at scale $j$ and position $k$. Then,
\begin{equation}
\| z^{\mathrm{hf}}_{l,h} \|_2^2
= \sum_{j \in \mathcal{J}_{\text{high}}} \sum_{k} |d_{j,k}|^2,
\end{equation}
where $\mathcal{J}_{\text{high}}$ denotes the set of high-frequency scales.

Accordingly, the score $s_{l,h} = \| z^{\mathrm{hf}}_{l,h} \|_2$ quantifies the localized detail energy of the attention distribution, emphasizing sharp transitions and spatially concentrated irregularities that are characteristic of unstable attention patterns.

\subsection{Discrete Laplacian Operator}
\label{apx:sec:appendix_laplacian}

The discrete Laplacian operator provides a spatial-domain alternative for extracting high-frequency variation.
For a one-dimensional attention signal $a_{l,h}$, the discrete Laplacian $\mathbf{L}$ computes second-order differences, measuring how much each token's attention deviates from the average of its immediate neighbors.

The $\ell_2$ norm of the Laplacian response,
\begin{equation}
\| \mathbf{L} a_{l,h} \|_2^2,
\end{equation}
corresponds to the discrete \emph{Dirichlet energy} of the signal, which quantifies its roughness or lack of smoothness \cite{sandryhaila2014discrete}.
A higher energy value indicates that the attention distribution exhibits rapid local oscillations or fragmentation, rather than smooth decay or coherent concentration.

The role of the Laplacian as a high-frequency operator can be further understood through its frequency response.
In the spectral domain, the discrete Laplacian corresponds to a filter with transfer function
\begin{equation}
H(\omega) = 2 - 2\cos(\omega).
\end{equation}
As $\omega \to 0$, $H(\omega) \to 0$, suppressing low-frequency (smooth) components.
As $\omega \to \pi$, $H(\omega)$ attains its maximum value, strongly amplifying high-frequency components associated with abrupt transitions.

Consequently, the score $s_{l,h} = \| \mathbf{L} a_{l,h} \|_2$ serves as a computationally efficient proxy for high-frequency energy, capturing localized attention instability without requiring an explicit frequency-domain transform.

\section{Experiment Details}
\label{apx:exp_detail}

\subsection{Implementation of Baselines}
\label{sec:appendix-baselines}

\subsubsection{Verification and Probabilistic Baselines}

For verification-based baselines, both \textbf{SelfCheckGPT} and \textbf{RefChecker} use \texttt{gpt-4o-mini} as the backbone model for claim extraction and factual verification. 
For probabilistic consistency baselines, \textbf{EigenScore} constructs a similarity matrix $\mathbf{S}$ based on BERTScore between generated responses, and derives a consistency score from the largest eigenvalue $\lambda_{\max}(\mathbf{S})$. 
\textbf{ReDeEP} computes two complementary signals during generation: an external context score derived from attention allocation over input tokens, and a parametric knowledge score derived from feed-forward network activations, following the original implementation.

Both EigenScore and ReDeEP are strong unsupervised methods specifically designed for hallucination detection that leverage internal model signals, including attention. Despite not being directly trained for the task, they have shown powerful performance and are therefore included as competitive and fair baselines in our comparison.

\subsubsection{Attention-based Feature Extraction}

For all intrinsic, attention-based methods (including entropy-based baselines and our frequency-aware features), we extract attention weights from all transformer layers and heads. 
Let $\mathbf{A} = \{ \mathbf{A}_{l,h} \}$ denote the collection of attention distributions, where $\mathbf{A}_{l,h} \in \mathbb{R}^{T}$ is the token-level attention vector from layer $l$ and head $h$.

Following \cite{chuang2024lookback}, we partition attention into context-directed attention $\mathbf{A}^{(c)}_{l,h}$ and generated-token attention $\mathbf{A}^{(g)}_{l,h}$.
For a given head, a feature extraction function $f(\cdot)$ is applied to the corresponding attention distribution.
Aggregating across all layers and heads yields separate feature vectors for context and generated attention:
\begin{equation}
\mathbf{v}^{(c)} = [ f(\mathbf{A}^{(c)}_{1,1}), \dots, f(\mathbf{A}^{(c)}_{L,H}) ]^\top,
\quad
\mathbf{v}^{(g)} = [ f(\mathbf{A}^{(g)}_{1,1}), \dots, f(\mathbf{A}^{(g)}_{L,H}) ]^\top.
\end{equation}
The final representation is formed by concatenation, $[\mathbf{v}^{(c)} ; \mathbf{v}^{(g)}]$, which is then used for hallucination prediction, consistent with the formulation in the main text.

For entropy-based baselines, the feature function $f(\cdot)$ computes the entropy of the attention distribution for each head:
\begin{equation}
H(\mathbf{A}_{l,h}) = - \sum_{i} a_i \log a_i,
\end{equation}
where $a_i$ denotes the normalized attention weight assigned to token $i$.

\subsection{Frequency-aware Operators}
\label{sec:appendix-frequency-operators}

\paragraph{Discrete Fourier Transform.}
For DFT, we separate low-frequency and high-frequency components using a frequency cutoff.
We systematically vary the cutoff threshold and evaluate hallucination detection performance across models, datasets, and sliding-window settings, as shown in \autoref{fig:ablation_cutoff}.
Across most settings, performance improves as the cutoff increases from low values, remains stable over a broad intermediate range, and drops sharply when the cutoff approaches the Nyquist limit.


Here, a normalized frequency of $0.5$ corresponds to the Nyquist limit under the DFT formulation.
As shown in \autoref{fig:ablation_cutoff}, performance remains stable across a broad range of cutoff frequencies, but begins to degrade when the cutoff exceeds approximately 0.45, with a pronounced drop on the Nyquist boundary. This trend suggests that high-frequency attention variations relevant to hallucination detection are effectively captured below this transition point.
Empirically, we find that a cutoff frequency of 0.45 yields consistently strong and stable performance across models and datasets. Additionally, performance is relatively insensitive to lower cutoff settings, with improvements increasing gradually rather than abruptly—indicating robustness to the exact choice of cutoff within a reasonable range.

\paragraph{Discrete Wavelet Transform.}
For DWT, we perform a multi-resolution decomposition of each attention signal $a_{l,h}$ using the Daubechies-4 (db4) wavelet.
The db4 wavelet offers a favorable trade-off between locality and smoothness due to its vanishing moments, making it suitable for capturing abrupt transitions in attention distributions.
For boundary handling in finite-length attention sequences, we use zero padding for token-level detection and symmetric padding for span-level detection, based on overall performance observed across datasets and models.

We further compare depth-1 (level1) and depth-2 (level2) wavelet decompositions, as shown in \autoref{tab:wavelet_sliding1_full} and \autoref{tab:wavelet_sliding8_full}.
Empirically, a depth-1 decomposition consistently outperforms deeper decompositions across datasets and models.
We attribute this behavior to the fact that higher-level decompositions increasingly mix lower-frequency components into the detail coefficients, reducing their sensitivity to the finest-scale variations that are most informative for hallucination detection.
Accordingly, we adopt a level1 wavelet decomposition in all experiments.

\paragraph{Discrete Laplacian Operator.}
The discrete Laplacian requires no cutoff or scale selection.
Instead, it directly computes second-order differences in the token domain, acting as a local high-pass filter that amplifies rapid attention fluctuations.
This simplicity makes the Laplacian operator computationally efficient and parameter-free, while still capturing localized high-frequency variation in attention distributions.
As discussed in Appendix \ref{apx:sec:appendix_laplacian}, the $\ell_2$ norm of the Laplacian response corresponds to the discrete Dirichlet energy, providing a principled measure of attention roughness without additional hyperparameters.


\subsection{Model Details}
\label{apx:model_detail}

\paragraph{Inference.}
All experiments are conducted on NVIDIA A100 (80GB) GPUs.
The LLMs are kept frozen throughout.
We perform inference using teacher-forcing decoding with model response tokens to extract attention distributions for spectral analysis.

\paragraph{Classifiers.}
Our method uses a lightweight single-layer Logistic Regression classifier. This choice ensures that the detector does not introduce additional non-linear modeling capacity, and that performance differences primarily reflect the discriminative power of the proposed spectral features, rather than classifier expressiveness.
The linear detector was implemented using the \texttt{scikit-learn} library. All hyperparameters, including the $\ell_2$ penalty, were kept at their default values, with the exception of the maximum number of iterations, which was set to 1,000 to ensure model convergence.

\paragraph{Evaluation.}
For threshold-dependent metrics such as F1, the decision threshold is selected on a held-out validation split and fixed for test evaluation. In cases where an official validation set is not predefined, we reserve 10\% of the original training data for this purpose.
We use AUROC as our primary metric because AUROC is threshold-independent and are unaffected by this choice.

\subsection{Dataset Details}
We use two publicly available contextual hallucination detection datasets in this work: RAGTruth and HalluRAG.

\begin{table}[ht]
\centering
\caption{Statistics of datasets used in our experiments. Length and ratio are calculated on the token level.}
\label{tab:dataset_stats}
\begin{tabular}{llcccccc}
\toprule
\textbf{Dataset} & \textbf{Task} & \textbf{Train} & \textbf{Val} & \textbf{Test} & \textbf{Prompt Len.} & \textbf{Response Len.} & \textbf{Halluc. Ratio} \\
\midrule
RagTruth & QA   & 839 & --  & 150 & 439.5 & 208.1 & 0.1045 \\
         & D2T  & 883 & --  & 150 & 892.8 & 225.4 & 0.0768 \\
         & Summ & 793 & --  & 150 & 840.1 & 153.4 & 0.0527 \\
\arrayrulecolor{black!30}\cmidrule(lr){1-8}\arrayrulecolor{black}
HalluRAG & QA   & 756 & 162 & 162 & 649.3 & 44.4  & 0.1530 \\
\bottomrule
\end{tabular}
\end{table}

\paragraph{RAGTruth}RAGTruth is a large-scale corpus designed for word-level hallucination detection within retrieval-augmented generation (RAG) frameworks~\cite{niu-etal-2024-ragtruth}. It consists of 2,965 manually annotated responses with precise hallucination span labels across three primary tasks: question answering (QA), data-to-text (D2T), and summarization (Summ). The dataset provides standardized train and test splits for each task. Specifically, the QA, D2T, and Summ tasks comprise 839, 883, and 793 training samples, respectively, with each task sharing a consistent test set size of 150 samples.

\paragraph{HalluRAG.} HalluRAG focuses on sentence-level hallucination detection in RAG-based QA scenarios~\cite{ridder2025halluragdatasetdetectingcloseddomain}. It provides generated responses paired with sentence-level annotations, comprising 756 training, 162 validation, and 162 test samples.

More details can be seen in \autoref{tab:dataset_stats}.
All token-level statistics are computed using the tokenizer corresponding to each model, and the reported values are averaged across models.
In particular, the hallucination ratio is defined as the number of hallucinated tokens in the response divided by the total number of response tokens. Since this ratio may vary across models, we report the average value over different models.

\section{Additional Analysis and Results}
\label{apx:full_result}

This section reports the full experimental results and ablation studies omitted from the main text due to space constraints.
These results complement the analyses in the main paper and provide additional details on span-level detection, operator configurations, and the robustness of frequency-aware features across different settings.

\subsection{Cross-domain Transfer Analysis}

To examine whether the detector captures intrinsic attention-based signals rather than overfits task-specific artifacts, we conduct cross-domain transfer evaluation, where detectors are trained on one task domain and evaluated on another, as shown in \autoref{tab:sliding_cross_domain}. This experiment also serves to evaluate the generalization ability of the classifier and to test its susceptibility to overfitting.

\definecolor{midgreen}{RGB}{69, 173, 92}

\newcommand{\OffDom}[2]{%
  \cellcolor{midgreen!#1}{\strut #2}
}

\begin{table}[ht]
\centering
\small
\setlength{\tabcolsep}{4pt}
\renewcommand{\arraystretch}{1.15}

\caption{
Cross-domain evaluation. Models are trained on the column domain (Source) and evaluated on the row domain (Target). \textbf{Bold} values indicate in-domain performance (diagonal, Target=Source). Darker shading corresponds to higher cross-domain performance.}
\label{tab:sliding_cross_domain}

\begin{tabular}{ll|ccc|ccc}
\toprule
\textbf{Method} & \textbf{Target}
& \multicolumn{3}{c|}{\textbf{Token-Level (Source)}}
& \multicolumn{3}{c}{\textbf{Span-Level (Source)}} \\
\cmidrule(lr){3-5} \cmidrule(lr){6-8}
& & \textbf{QA} & \textbf{D2T} & \textbf{Summ}
  & \textbf{QA} & \textbf{D2T} & \textbf{Summ} \\
\midrule

\multirow{3}{*}{Lookback-lens}
& QA
& \textbf{0.8482} & \OffDom{45}{0.7839} & \OffDom{42}{0.7741}
& \textbf{0.8467} & \OffDom{51}{0.8140} & \OffDom{47}{0.7985} \\
& D2T
& \OffDom{4}{0.6340} & \textbf{0.8442} & \OffDom{17}{0.7211}
& \OffDom{4}{0.6334} & \textbf{0.8551} & \OffDom{18}{0.7275} \\
& Summ
& \OffDom{8}{0.6618} & \OffDom{4}{0.6400} & \textbf{0.7156}
& \OffDom{7}{0.6563} & \OffDom{5}{0.6496} & \textbf{0.6635} \\
\midrule

\multirow{3}{*}{Laplacian}
& QA
& \textbf{0.8449} & \OffDom{46}{0.7928} & \OffDom{46}{0.7881}
& \textbf{0.8365} & \OffDom{46}{0.7898} & \OffDom{45}{0.7857} \\
& D2T
& \OffDom{5}{0.6438} & \textbf{0.8519} & \OffDom{11}{0.6809}
& \OffDom{1}{0.6147} & \textbf{0.8646} & \OffDom{2}{0.6264} \\
& Summ
& \OffDom{5}{0.6428} & \OffDom{8}{0.6678} & \textbf{0.7040}
& \OffDom{0}{0.6132} & \OffDom{7}{0.6596} & \textbf{0.6619} \\
\midrule

\multirow{3}{*}{Wavelet-high}
& QA
& \textbf{0.8526} & \OffDom{73}{0.8056} & \OffDom{74}{0.8076}
& \textbf{0.8680} & \OffDom{86}{0.8316} & \OffDom{90}{0.8418} \\
& D2T
& \OffDom{6}{0.6525} & \textbf{0.8569} & \OffDom{16}{0.7179}
& \OffDom{4}{0.6405} & \textbf{0.8821} & \OffDom{19}{0.7307} \\
& Summ
& \OffDom{8}{0.6630} & \OffDom{10}{0.6773} & \textbf{0.7165}
& \OffDom{6}{0.6541} & \OffDom{16}{0.7032} & \textbf{0.7199} \\
\midrule

\multirow{3}{*}{Fourier-high}
& QA
& \textbf{0.8584} & \OffDom{85}{0.8282} & \OffDom{83}{0.8233}
& \textbf{0.8725} & \OffDom{94}{0.8561} & \OffDom{95}{0.8593} \\
& D2T
& \OffDom{10}{0.6668} & \textbf{0.8595} & \OffDom{17}{0.7228}
& \OffDom{5}{0.6434} & \textbf{0.8869} & \OffDom{23}{0.7584} \\
& Summ
& \OffDom{12}{0.6781} & \OffDom{16}{0.7040} & \textbf{0.7426}
& \OffDom{11}{0.6748} & \OffDom{18}{0.7296} & \textbf{0.7641} \\
\bottomrule
\end{tabular}

\end{table}

Overall, spectral-based detectors exhibit more stable cross-domain behavior compared to Lookback-Lens.
In particular, Fourier- and Wavelet-based variants maintain stronger performance when transferring across task boundaries, whereas Lookback-Lens shows larger performance degradation under domain shift.
This suggests that frequency-domain attention features capture more task-robust signals than heuristics based on attention mass or recency.

We also observe an asymmetric transfer pattern across tasks.
Models trained on QA generally transfer worse to Data-to-Text and Summarization than models trained on Data-to-Text or Summarization transferring to QA.
This asymmetry holds consistently across methods and detection granularities.
A plausible explanation is that QA exhibits more constrained and localized attention patterns, which may limit the generality of learned aggregation weights when applied to structurally different generation tasks.

\subsection{Full Results of Span-level Detection}
We report full results for span-level hallucination detection using a sliding-window setting with window size of 8, following the protocol described in the main text.
\autoref{tab:sliding8_full} presents full performance metrics across models, datasets, and frequency operators.

\begin{table*}[htbp]
\centering
\small
\setlength{\tabcolsep}{4pt}

\caption{Full performance comparison on RagTruth and HalluRAG with chunk size set to 8.
For each model, the best result is highlighted in \textbf{bold}, and the second-best result is \underline{underlined}.
}
\label{tab:sliding8_full}

\begin{tabular}{l cc cc cc cc cc}
\toprule
\multirow{2.5}{*}{\textbf{Model / Method}} &
\multicolumn{2}{c}{\textbf{RT-QA}} &
\multicolumn{2}{c}{\textbf{RT-D2T}} &
\multicolumn{2}{c}{\textbf{RT-Summ}} &
\multicolumn{2}{c}{\textbf{HalluRAG}} &
\multicolumn{2}{c}{\textbf{Overall Avg.}} \\
\cmidrule(lr){2-3} \cmidrule(lr){4-5} \cmidrule(lr){6-7}
\cmidrule(lr){8-9} \cmidrule(lr){10-11}
& F & AUROC & F & AUROC & F & AUROC & F & AUROC & Avg-F & Avg-A \\
\midrule

\textbf{LLaMA-7B} \\
\quad Lookback-lens
& 0.7218 & 0.8467 & 0.7249 & 0.8551 & 0.5852 & 0.6635 & 0.6623 & 0.7856 & 0.6736 & 0.7877 \\
\quad Attn-variance
& 0.5077 & 0.6979 & 0.4714 & 0.5886 & 0.4840 & 0.6348 & 0.4446 & 0.5209 & 0.4769 & 0.6106 \\
\quad Attn-entropy
& 0.7292 & 0.8502 & 0.7060 & 0.8457 & 0.5489 & 0.6637 & 0.6282 & 0.6909 & 0.6531 & 0.7626 \\
\quad Laplacian
& 0.7074 & 0.8365 & 0.7334 & 0.8646 & 0.5833 & 0.6619 & 0.6775 & 0.7754 & 0.6754 & 0.7846 \\
\quad Wavelet-high
& \underline{0.7331} & \underline{0.8680} & \textbf{0.7478} & \underline{0.8821} & \underline{0.6045} & \underline{0.7199} & \underline{0.6812} & \underline{0.7928} & \underline{0.6917} & \underline{0.8157} \\
\quad Fourier-high
& \textbf{0.7473} & \textbf{0.8725} & \underline{0.7348} & \textbf{0.8869} & \textbf{0.6188} & \textbf{0.7641} & \textbf{0.6866} & \textbf{0.8100} & \textbf{0.6969} & \textbf{0.8334} \\

\midrule
\textbf{LLaMA-13B} \\
\quad Lookback-lens
& 0.7004 & 0.8594 & \underline{0.7610} & \underline{0.8872} & 0.5728 & 0.7083 & 0.7096 & \underline{0.8505} & 0.6860 & 0.8264 \\
\quad Attn-variance
& 0.5301 & 0.6871 & 0.5329 & 0.7283 & 0.4917 & 0.7014 & 0.5548 & 0.6412 & 0.5274 & 0.6895 \\
\quad Attn-entropy
& 0.6876 & 0.8478 & 0.7226 & 0.8602 & 0.5426 & 0.6377 & 0.5872 & 0.6249 & 0.6350 & 0.7427 \\
\quad Laplacian
& 0.6966 & 0.8467 & 0.7562 & 0.8853 & \underline{0.5930} & 0.6798 & \underline{0.7217} & 0.8264 & \underline{0.6919} & 0.8096 \\
\quad Wavelet-high
& 0.7002 & 0.8685 & 0.7225 & 0.8793 & 0.5821 & \underline{0.7202} & \textbf{0.7417} & 0.8438 & 0.6866 & \underline{0.8279} \\
\quad Fourier-high
& \textbf{0.7365} & \textbf{0.8863} & \textbf{0.7706} & \textbf{0.8988} & \textbf{0.6119} & \textbf{0.7904} & \underline{0.7217} & \textbf{0.8515} & \textbf{0.7102} & \textbf{0.8568} \\

\midrule
\textbf{Mistral-7B} \\
\quad Lookback-lens
& \underline{0.7920} & \textbf{0.9206} & 0.7751 & 0.9002 & \underline{0.6992} & \underline{0.8082} & 0.7752 & 0.8403 & \underline{0.7604} & 0.8673 \\
\quad Attn-variance
& 0.7036 & 0.8216 & 0.6339 & 0.8112 & 0.4741 & 0.6507 & 0.5709 & 0.7215 & 0.5956 & 0.7512 \\
\quad Attn-entropy
& 0.7857 & 0.9007 & 0.7300 & 0.8656 & 0.6606 & 0.7727 & 0.6597 & 0.7140 & 0.7090 & 0.8132 \\
\quad Laplacian
& 0.7500 & 0.8822 & \underline{0.7719} & \underline{0.8928} & 0.6747 & 0.7734 & \textbf{0.8056} & \textbf{0.8920} & 0.7506 & 0.8601 \\
\quad Wavelet-high
& 0.7839 & 0.9083 & 0.7287 & 0.8866 & 0.6771 & 0.8073 & 0.7682 & 0.8802 & 0.7395 & \underline{0.8706} \\
\quad Fourier-high
& \textbf{0.8042} & \underline{0.9190} & \textbf{0.7833} & \textbf{0.9057} & \textbf{0.7114} & \textbf{0.8188} & \underline{0.7883} & \underline{0.8872} & \textbf{0.7718} & \textbf{0.8827} \\

\bottomrule
\end{tabular}

\end{table*}

\begin{figure}[htbp]
    \centering
    \begin{subfigure}{0.24\textwidth}
        \centering
        \includegraphics[width=\textwidth]{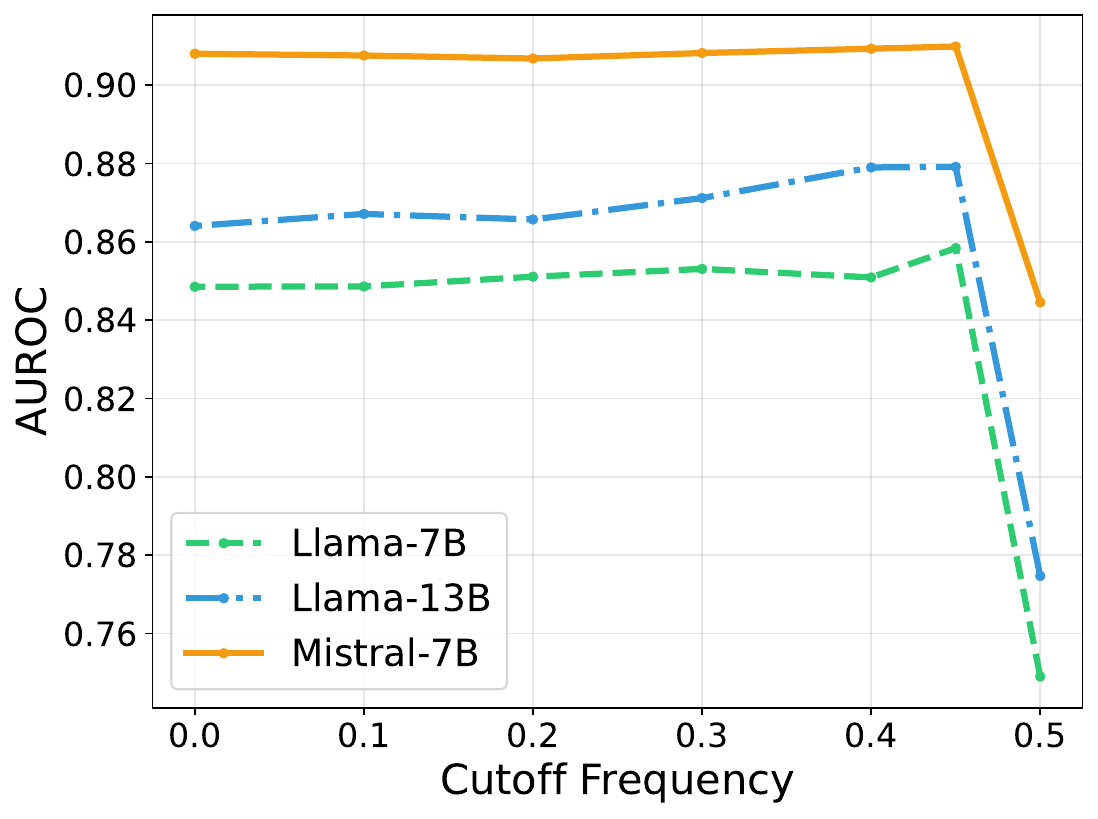}
        \caption{RAGTruth-QA (Token)}
        \label{fig:rag_qa_s1}
    \end{subfigure}
    \hfill
    \begin{subfigure}{0.24\textwidth}
        \centering
        \includegraphics[width=\textwidth]{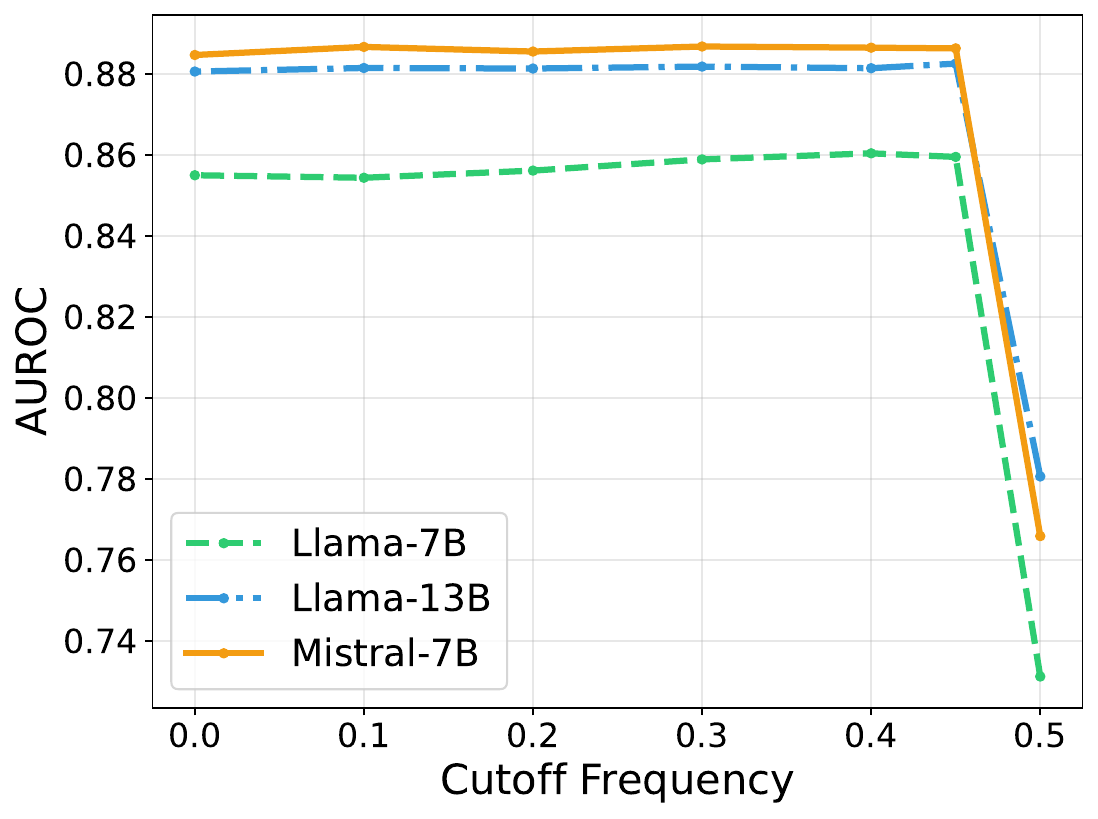}
        \caption{RAGTruth-D2T (Token)}
        \label{fig:rag_d2t_s1}
    \end{subfigure}
    \hfill
    \begin{subfigure}{0.24\textwidth}
        \centering
        \includegraphics[width=\textwidth]{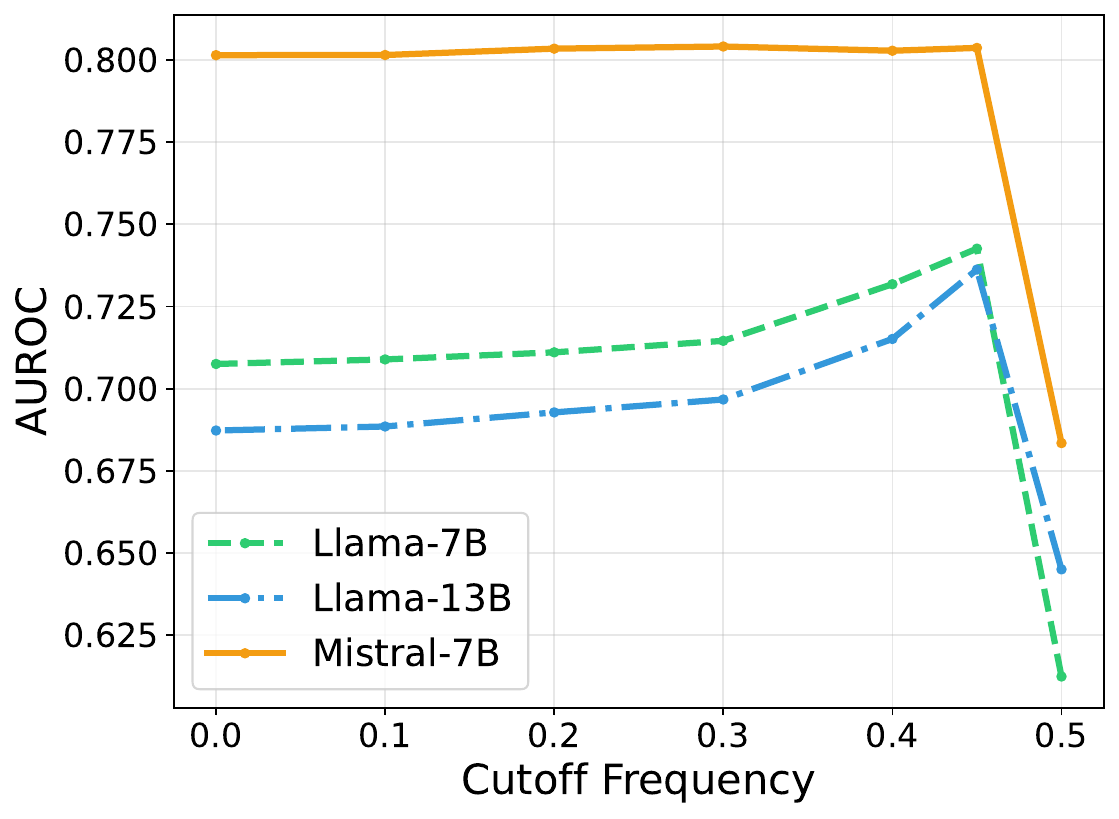}
        \caption{RAGTruth-Summ (Token)}
        \label{fig:rag_sum_s1}
    \end{subfigure}
    \hfill
    \begin{subfigure}{0.24\textwidth}
        \centering
        \includegraphics[width=\textwidth]{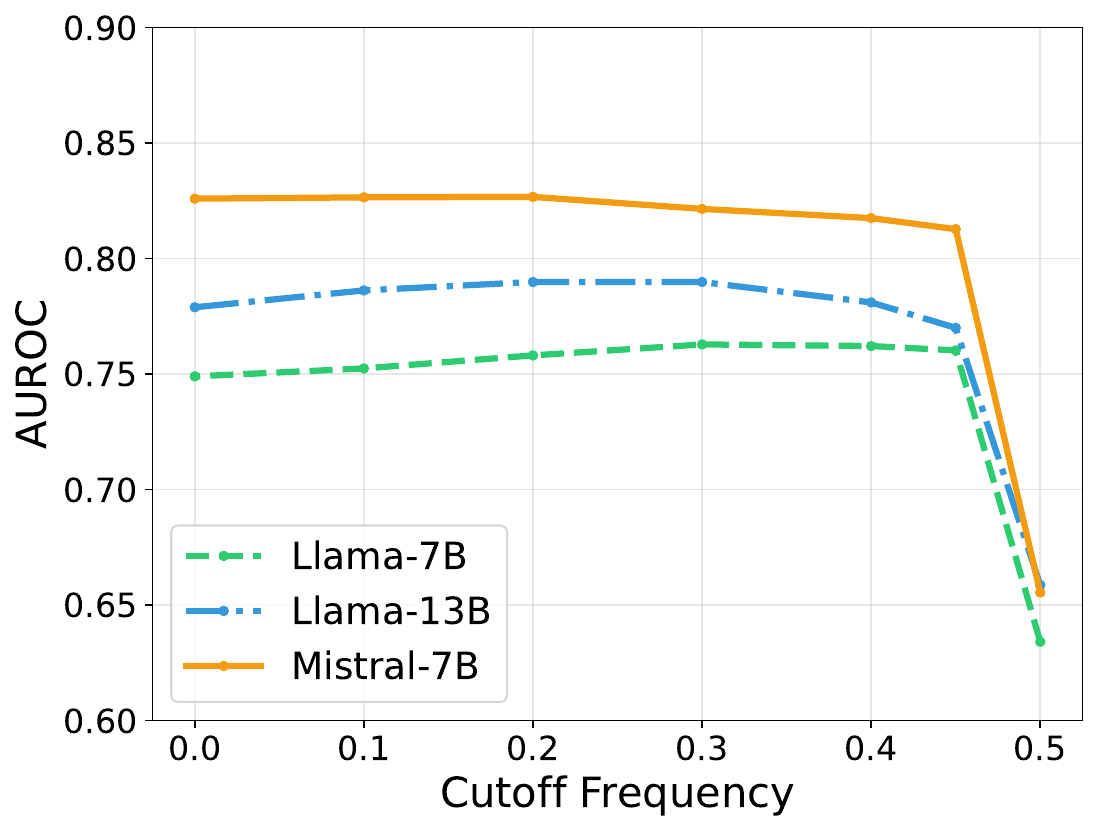}
        \caption{HalluRAG (Token)}
        \label{fig:hallu_s1}
    \end{subfigure}

    \vspace{1em} 

    \begin{subfigure}{0.24\textwidth}
        \centering
        \includegraphics[width=\textwidth]{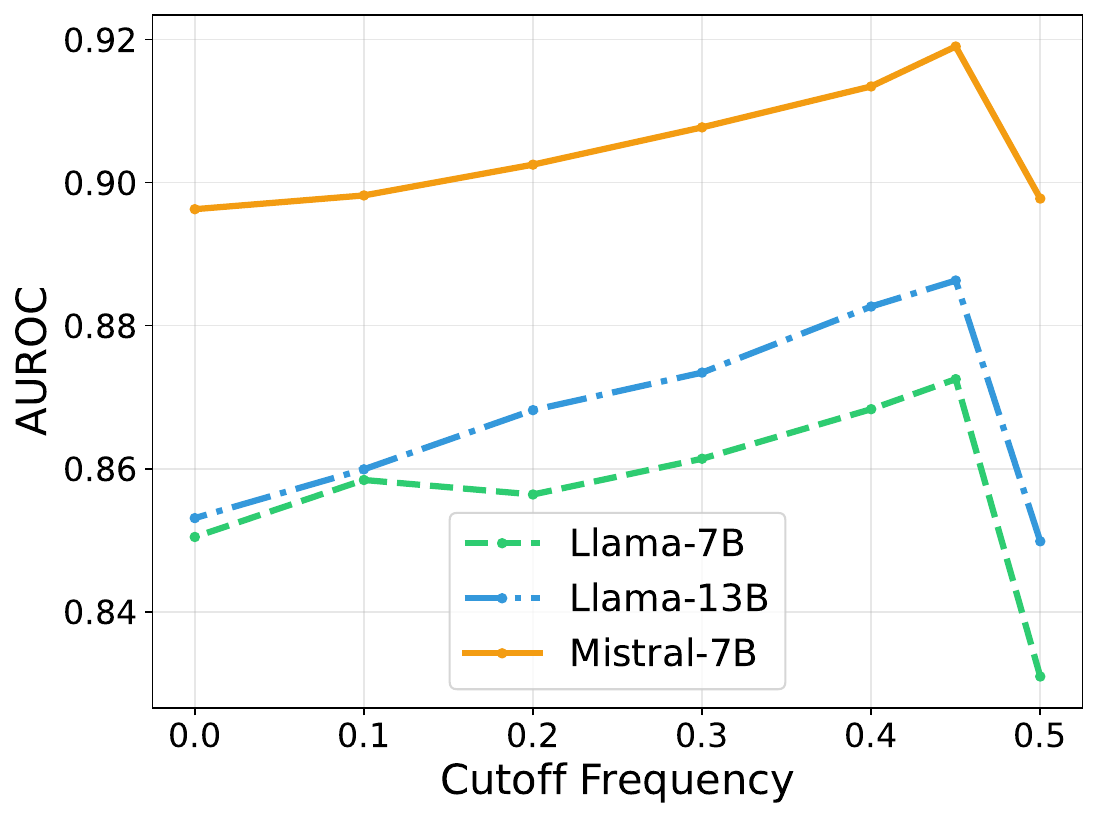}
        \caption{RAGTruth-QA (Span)}
        \label{fig:rag_qa_s8}
    \end{subfigure}
    \hfill
    \begin{subfigure}{0.24\textwidth}
        \centering
        \includegraphics[width=\textwidth]{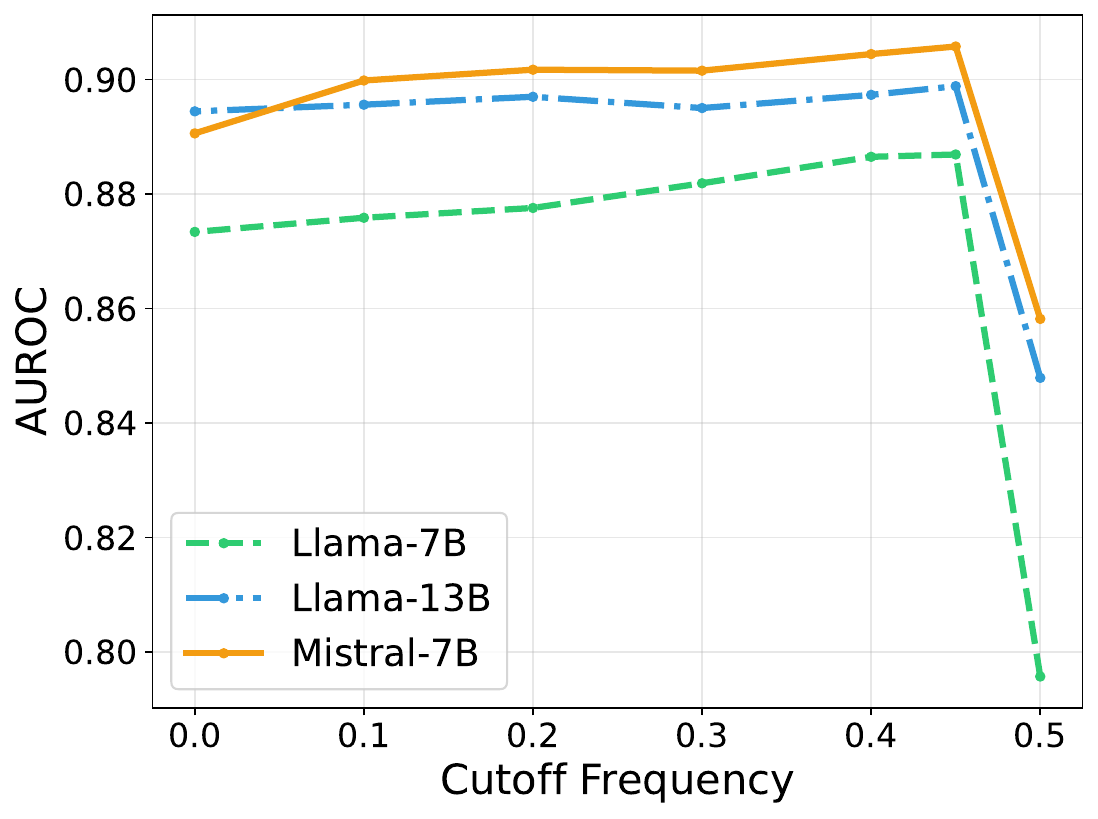}
        \caption{RAGTruth-D2T (Span)}
        \label{fig:rag_d2t_s8}
    \end{subfigure}
    \hfill
    \begin{subfigure}{0.24\textwidth}
        \centering
        \includegraphics[width=\textwidth]{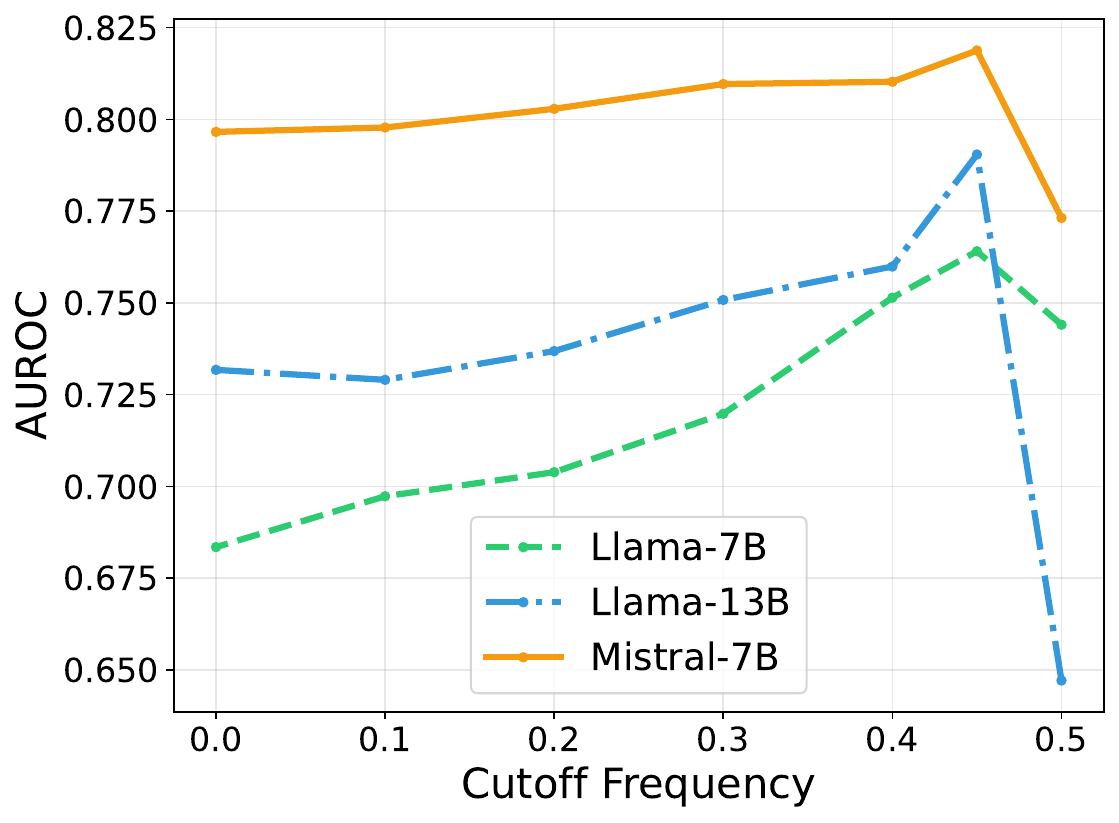}
        \caption{RAGTruth-Summ (Span)}
        \label{fig:rag_sum_s8}
    \end{subfigure}
    \hfill
    \begin{subfigure}{0.24\textwidth}
        \centering
        \includegraphics[width=\textwidth]{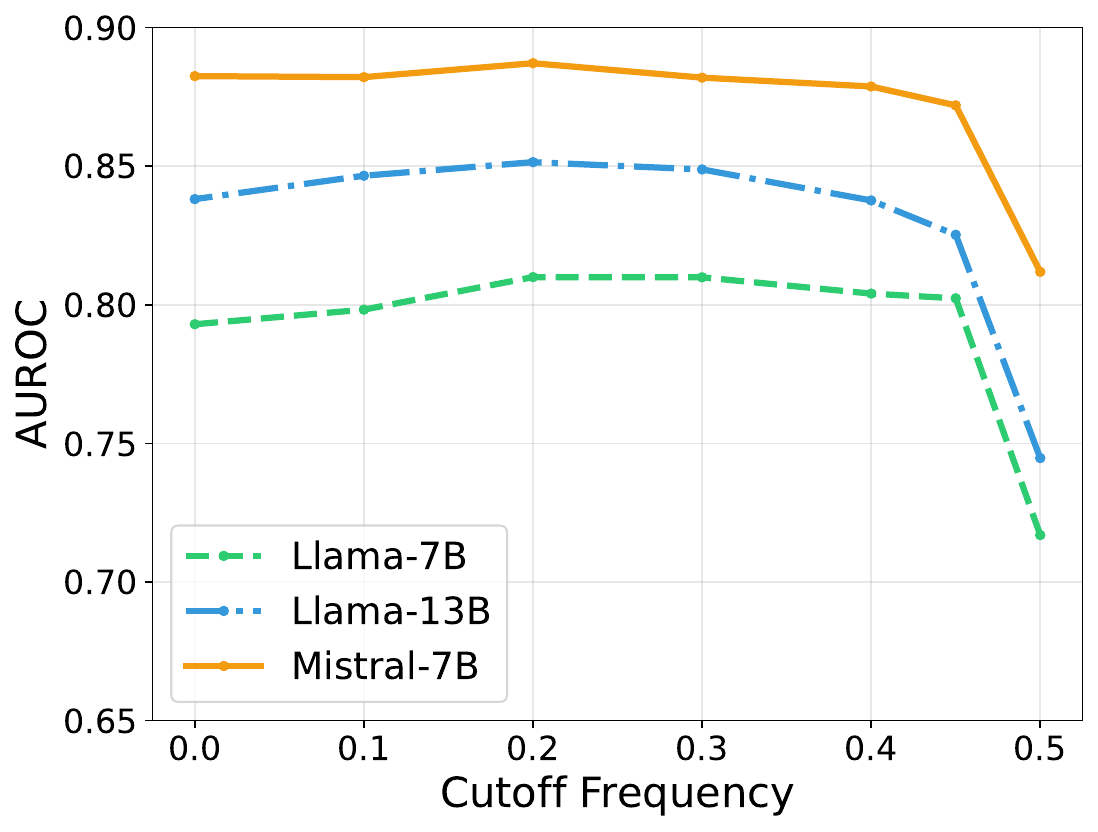}
        \caption{HalluRAG (Span)}
        \label{fig:hallu_s8}
    \end{subfigure}

    \caption{Ablation study on different frequency cutoffs at token and span levels on hallucination detection performance. The top row (a-d) shows results at the token level, while the bottom row (e-h) shows results at the span level. 
    }
    \label{fig:ablation_cutoff}
\end{figure}

\begin{table*}[t]
\centering
\small
\setlength{\tabcolsep}{4pt}
\newcommand{\lightrule}{\arrayrulecolor{black!30}\cmidrule(lr){2-12}\arrayrulecolor{black}}

\caption{Performance comparison on token level using Wavelet-high across levels and padding methods. Best results within each model are in \textbf{bold}; second-best are \underline{underlined}.}
\label{tab:wavelet_sliding1_full}

\begin{tabular}{l l cc cc cc cc cc}
\toprule
\textbf{Model/Level} & \textbf{Padding} &
\multicolumn{2}{c}{\textbf{RT-QA}} &
\multicolumn{2}{c}{\textbf{RT-D2T}} &
\multicolumn{2}{c}{\textbf{RT-Summ}} &
\multicolumn{2}{c}{\textbf{HalluRAG}} &
\multicolumn{2}{c}{\textbf{Overall Avg.}} \\
\cmidrule(lr){3-4} \cmidrule(lr){5-6} \cmidrule(lr){7-8} \cmidrule(lr){9-10} \cmidrule(lr){11-12}
 & & F & AUROC & F & AUROC & F & AUROC & F & AUROC & Avg-F & Avg-A \\
\midrule


\textbf{LLaMA-7B} \\
\quad \multirow{3}{*}{level1}  & zero 
& \underline{0.7194} & \underline{0.8526} & \textbf{0.6898} & \textbf{0.8569} & \textbf{0.5929} & \underline{0.7165} & \textbf{0.6384} & \textbf{0.7550} & \textbf{0.6601} & \textbf{0.7953} \\
 & period 
& 0.7146 & 0.8514 & 0.6887 & 0.8517 & 0.5799 & 0.7091 & 0.6360 & 0.7449 & 0.6548 & 0.7893 \\
 & symm 
& \textbf{0.7215} & \textbf{0.8557} & 0.6875 & \underline{0.8545} & \underline{0.5880} & \textbf{0.7182} & 0.6353 & 0.7521 & \underline{0.6581} & \underline{0.7951} \\
\lightrule
\quad \multirow{3}{*}{level2} & zero 
& 0.7157 & 0.8522 & \underline{0.6893} & 0.8531 & 0.5822 & 0.7088 & 0.6359 & 0.7464 & 0.6558 & 0.7901 \\
 & period 
& 0.7155 & 0.8514 & 0.6891 & 0.8528 & 0.5814 & 0.7069 & 0.6344 & 0.7438 & 0.6551 & 0.7887 \\
 & symm 
& 0.7170 & 0.8519 & 0.6879 & 0.8531 & 0.5851 & 0.7159 & \underline{0.6375} & \underline{0.7543} & 0.6569 & 0.7938 \\
\midrule
\textbf{LLaMA-13B} \\
\quad \multirow{3}{*}{level1}  & zero 
& \textbf{0.7029} & \textbf{0.8741} & \textbf{0.7383} & \textbf{0.8932} & 0.5651 & 0.7042 & \underline{0.6684} & \underline{0.7809} & \underline{0.6687} & \textbf{0.8131} \\
 & period 
& 0.6962 & 0.8679 & 0.7207 & 0.8811 & 0.5678 & 0.7044 & 0.6527 & 0.7675 & 0.6593 & 0.8052 \\
 & symm 
& \underline{0.7002} & 0.8685 & \underline{0.7225} & 0.8793 & \textbf{0.5821} & \textbf{0.7202} & \textbf{0.6718} & \textbf{0.7839} & \textbf{0.6691} & \underline{0.8130} \\
\lightrule
\quad  \multirow{3}{*}{level2} & zero 
& 0.6971 & \underline{0.8711} & 0.7213 & 0.8815 & \underline{0.5699} & 0.7043 & 0.6567 & 0.7701 & 0.6613 & 0.8068 \\
 & period 
& 0.6945 & 0.8675 & 0.7199 & 0.8816 & 0.5687 & 0.7045 & 0.6515 & 0.7669 & 0.6587 & 0.8051 \\
 & symm 
& 0.6993 & 0.8701 & 0.7216 & \underline{0.8834} & 0.5678 & \underline{0.7046} & \underline{0.6710} & 0.7789 & 0.6649 & 0.8093 \\
\midrule
\textbf{Mistral-7B} \\
\quad \multirow{3}{*}{level1} & zero 
& \textbf{0.7876} & \textbf{0.9117} & 0.7136 & 0.8829 & \underline{0.6849} & \textbf{0.8075} & \underline{0.7274} & \underline{0.8360} & 0.7284 & \underline{0.8595} \\
 & period 
& 0.7812 & 0.9077 & 0.7228 & 0.8830 & 0.6840 & 0.8034 & 0.7139 & 0.8229 & 0.7255 & 0.8542 \\
 & symm 
& 0.7839 & 0.9083 & \textbf{0.7287} & \underline{0.8866} & 0.6771 & \underline{0.8073} & 0.7130 & 0.8231 & 0.7257 & 0.8563 \\
\lightrule
\quad \multirow{3}{*}{level2} & zero 
& \underline{0.7873} & \underline{0.9113} & 0.7238 & 0.8826 & 0.6844 & 0.8038 & 0.7213 & 0.8265 & \underline{0.7292} & 0.8561 \\
 & period 
& 0.7770 & 0.9059 & 0.7238 & 0.8826 & 0.6829 & 0.8041 & 0.7130 & 0.8228 & 0.7242 & 0.8539 \\
 & symm 
& 0.7831 & 0.9104 & \underline{0.7265} & \textbf{0.8869} & \textbf{0.6884} & 0.8069 & \textbf{0.7308} & \textbf{0.8413} & \textbf{0.7322} & \textbf{0.8614} \\
\bottomrule
\end{tabular}

\end{table*}

\begin{table*}[t]
\centering
\small
\setlength{\tabcolsep}{4pt}
\newcommand{\lightrule}{\arrayrulecolor{black!30}\cmidrule(lr){2-12}\arrayrulecolor{black}}

\caption{Performance comparison when chunk size is 8 using Wavelet-high across levels and padding methods. Best results within each model are in \textbf{bold}; second-best are \underline{underlined}.}
\label{tab:wavelet_sliding8_full}

\begin{tabular}{l l cc cc cc cc cc}
\toprule
\textbf{Model/Level} & \textbf{Padding} &
\multicolumn{2}{c}{\textbf{RT-QA}} &
\multicolumn{2}{c}{\textbf{RT-D2T}} &
\multicolumn{2}{c}{\textbf{RT-Summ}} &
\multicolumn{2}{c}{\textbf{HalluRAG}} &
\multicolumn{2}{c}{\textbf{Overall Avg.}} \\
\cmidrule(lr){3-4} \cmidrule(lr){5-6} \cmidrule(lr){7-8} \cmidrule(lr){9-10} \cmidrule(lr){11-12}
 & & F & AUROC & F & AUROC & F & AUROC & F & AUROC & Avg-F & Avg-A \\
\midrule

\textbf{LLaMA-7B} \\
\quad \multirow{3}{*}{level1}
& zero & 0.7313 & 0.8611 & \textbf{0.7479} & 0.8785 & 0.6020 & 0.7169 & \textbf{0.6846} & \textbf{0.8047} & 0.6915 & 0.8153 \\
 & period & 0.7271 & 0.8572 & 0.7426 & 0.8748 & 0.5973 & 0.7008 & 0.6786 & 0.7963 & 0.6864 & 0.8073 \\
 & symm & \textbf{0.7331} & \textbf{0.8680} & 0.7478 & \textbf{0.8821} & \underline{0.6045} & \textbf{0.7199} & 0.6812 & 0.7928 & \textbf{0.6917} & \textbf{0.8157} \\
\lightrule
\quad \multirow{3}{*}{level2}
& zero & 0.7277 & 0.8594 & 0.7461 & 0.8760 & 0.5996 & 0.6997 & 0.6800 & 0.7953 & 0.6884 & 0.8076 \\
 & period & 0.7272 & 0.8579 & 0.7452 & 0.8752 & 0.5975 & 0.6972 & 0.6792 & 0.7937 & 0.6873 & 0.8060 \\
 & symm & 0.7277 & \underline{0.8602} & \underline{0.7478} & \underline{0.8768} & \textbf{0.6049} & \underline{0.7107} & \underline{0.6771} & \underline{0.7938} & \underline{0.6894} & \underline{0.8104} \\

\midrule

\textbf{LLaMA-13B} \\
\quad \multirow{3}{*}{level1}
& zero & 0.7050 & 0.8616 & \textbf{0.7833} & \textbf{0.9091} & 0.5974 & 0.7557 & 0.7306 & 0.8479 & 0.7041 & 0.8436 \\
 & period & \underline{0.7105} & 0.8610 & 0.7634 & 0.8951 & 0.5984 & 0.7471 & 0.7141 & 0.8338 & 0.6966 & 0.8343 \\
 & symm & \textbf{0.7200} & \textbf{0.8718} & 0.7616 & 0.8928 & \textbf{0.6043} & \textbf{0.7697} & \textbf{0.7482} & \textbf{0.8541} & \textbf{0.7085} & \textbf{0.8471} \\
\lightrule
\quad \multirow{3}{*}{level2}
& zero & 0.7079 & 0.8612 & 0.7638 & 0.8953 & 0.5940 & 0.7499 & 0.7168 & 0.8351 & 0.6956 & 0.8354 \\
& period & 0.7102 & \underline{0.8617} & 0.7637 & 0.8957 & 0.5957 & 0.7485 & 0.7163 & 0.8337 & 0.6965 & 0.8349 \\
 & symm & 0.7094 & 0.8625 & \underline{0.7664} & \underline{0.8962} & \underline{0.5967} & \underline{0.7563} & \underline{0.7274} & \underline{0.8474} & \underline{0.7000} & \underline{0.8406} \\

\midrule

\textbf{Mistral-7B} \\
\quad \multirow{3}{*}{level1}
& zero & \textbf{0.7984} & 0.9111 & 0.7686 & \textbf{0.9080} & 0.7010 & 0.8111 & 0.7904 & 0.8777 & 0.7646 & 0.8770 \\
 & period & 0.7858 & 0.9108 & \textbf{0.7875} & 0.8997 & \underline{0.7027} & 0.8084 & 0.7900 & 0.8717 & 0.7665 & 0.8726 \\
 & symm & \underline{0.7970} & 0.9052 & 0.7820 & 0.8995 & 0.7001 & \textbf{0.8179} & \underline{0.7921} & \underline{0.8841} & \underline{0.7678} & 0.8767 \\
\lightrule
\quad \multirow{3}{*}{level2}
& zero & 0.7879 & \textbf{0.9127} & \underline{0.7887} & 0.8995 & \textbf{0.7037} & 0.8096 & 0.7902 & 0.8729 & 0.7676 & 0.8737 \\
 & period & 0.7865 & 0.9116 & \underline{0.7887} & 0.8995 & 0.7023 & 0.8079 & 0.7896 & 0.8727 & 0.7668 & 0.8729 \\
 & symm & 0.7919 & \underline{0.9074} & 0.7837 & \underline{0.9000} & 0.7035 & \underline{0.8128} & \textbf{0.8063} & \textbf{0.8876} & \textbf{0.7713} & \textbf{0.8770} \\

\bottomrule
\end{tabular}

\end{table*}

\subsection{Full Results of Ablation Study}
This section reports the full results corresponding to the attention-based analyses discussed in the main paper, evaluated across all datasets and models studied.

\autoref{fig:highvlow_full} reports the full results comparing low-pass and high-pass Fourier attention features across all evaluated models, datasets, and both token-level and span-level settings.

\autoref{fig:ablation_layerwise} shows the complete layer-wise importance profiles of frequency-aware attention features across models, including both token-level and span-level detection.

\autoref{tab:top_k_headRmv_full} presents the full results of the head-level sparsity analysis, reporting detection performance when restricting attention features to the Top-$k$ most important heads across datasets and models.

\autoref{tab:ablation_context_new_full} provides the complete ablation results comparing context-only and generated-only attention features for different spectral operators across all evaluation settings.

\begin{figure}[ht]
  \vskip 0.2in
  \begin{center}
    \centerline{\includegraphics[width=0.9\columnwidth]{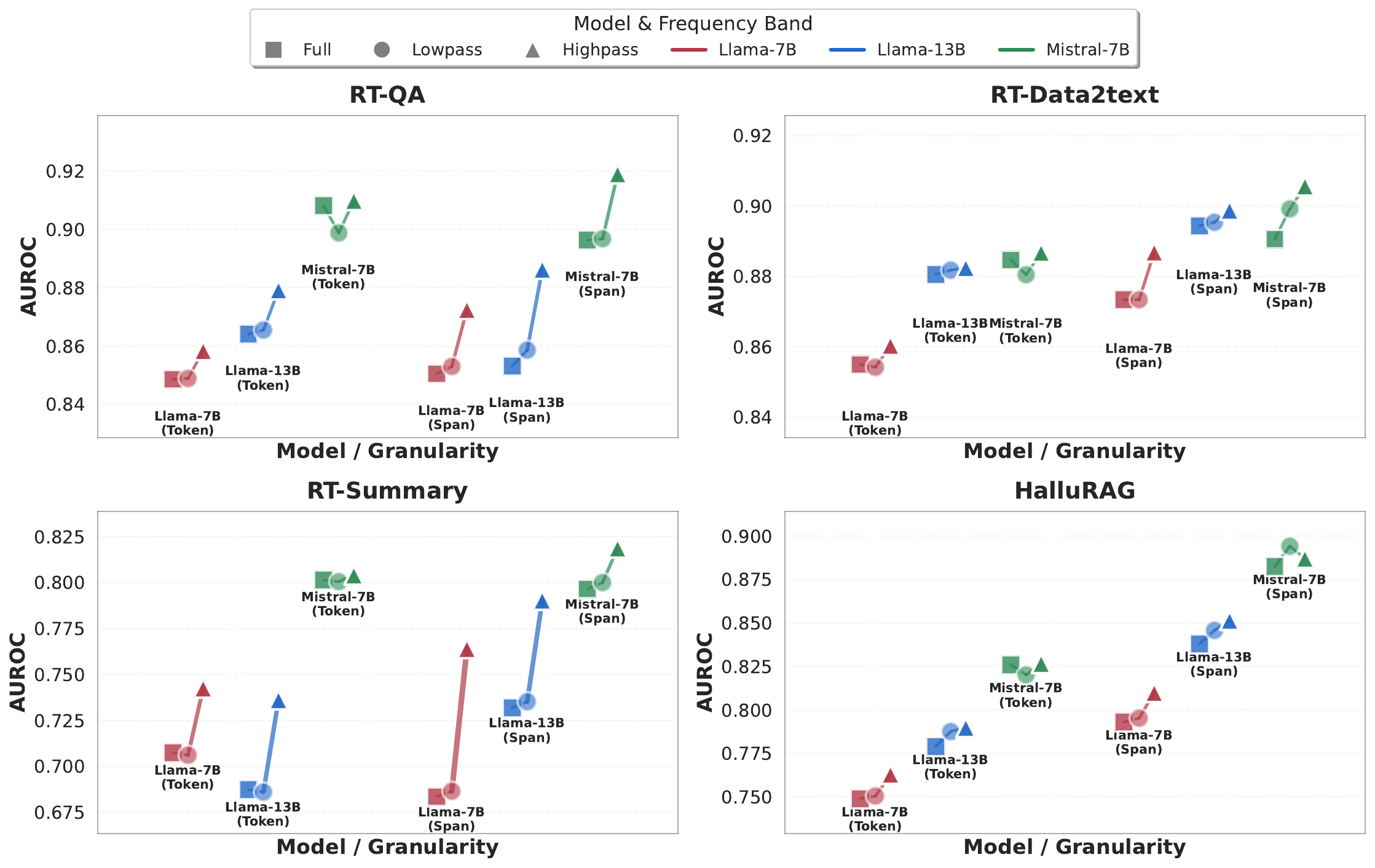}}
    \caption{
      Full results of comparing Full-, Low-, and High-Pass Fourier attention features. Average AUROC across models under token- and span-level evaluation settings.
    }
    \label{fig:highvlow_full}
  \end{center}
\end{figure}

\begin{figure}[htbp]
    \centering
    \begin{subfigure}{0.3\textwidth}
        \centering
        \includegraphics[width=\textwidth]{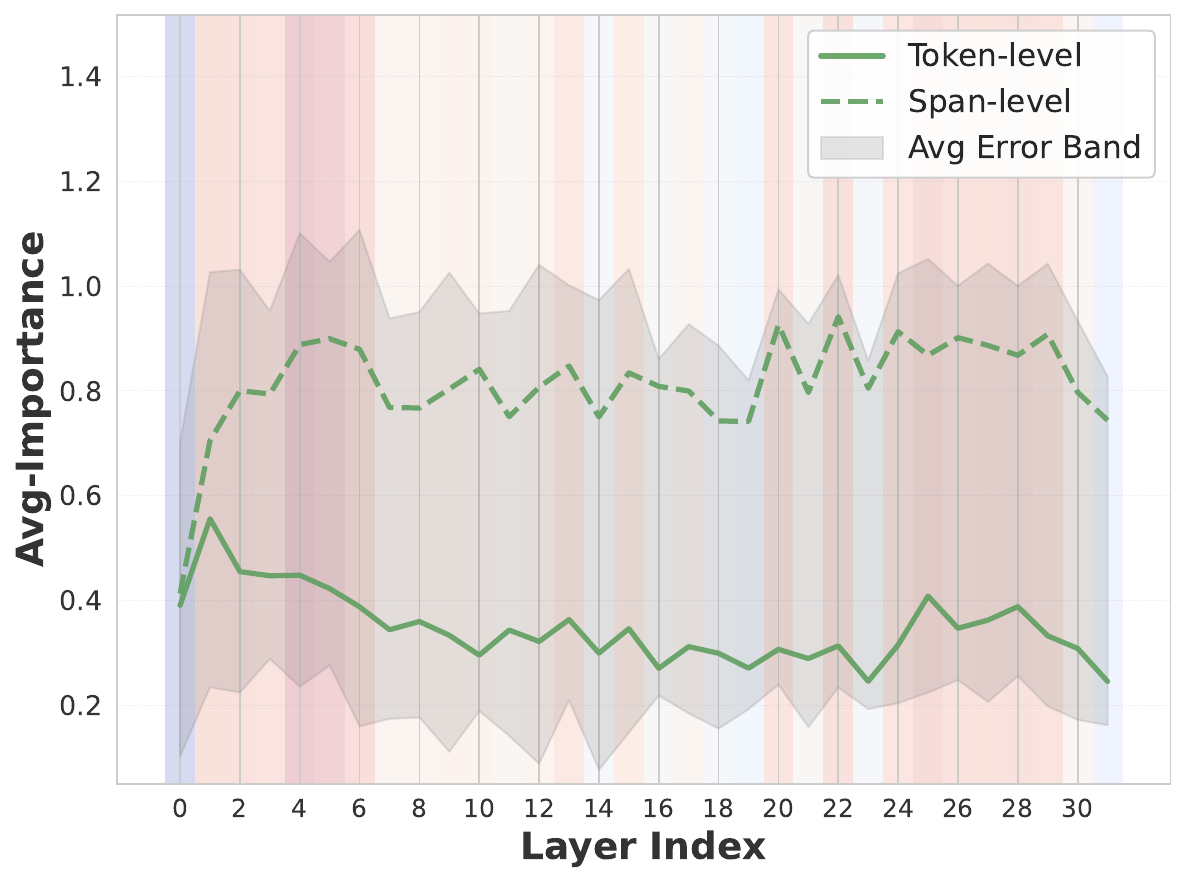}
        \caption{Laplacian of LLaMA-7B}
        \label{fig:layer_l7b_l}
    \end{subfigure}
    \hfill
    \begin{subfigure}{0.3\textwidth}
        \centering
        \includegraphics[width=\textwidth]{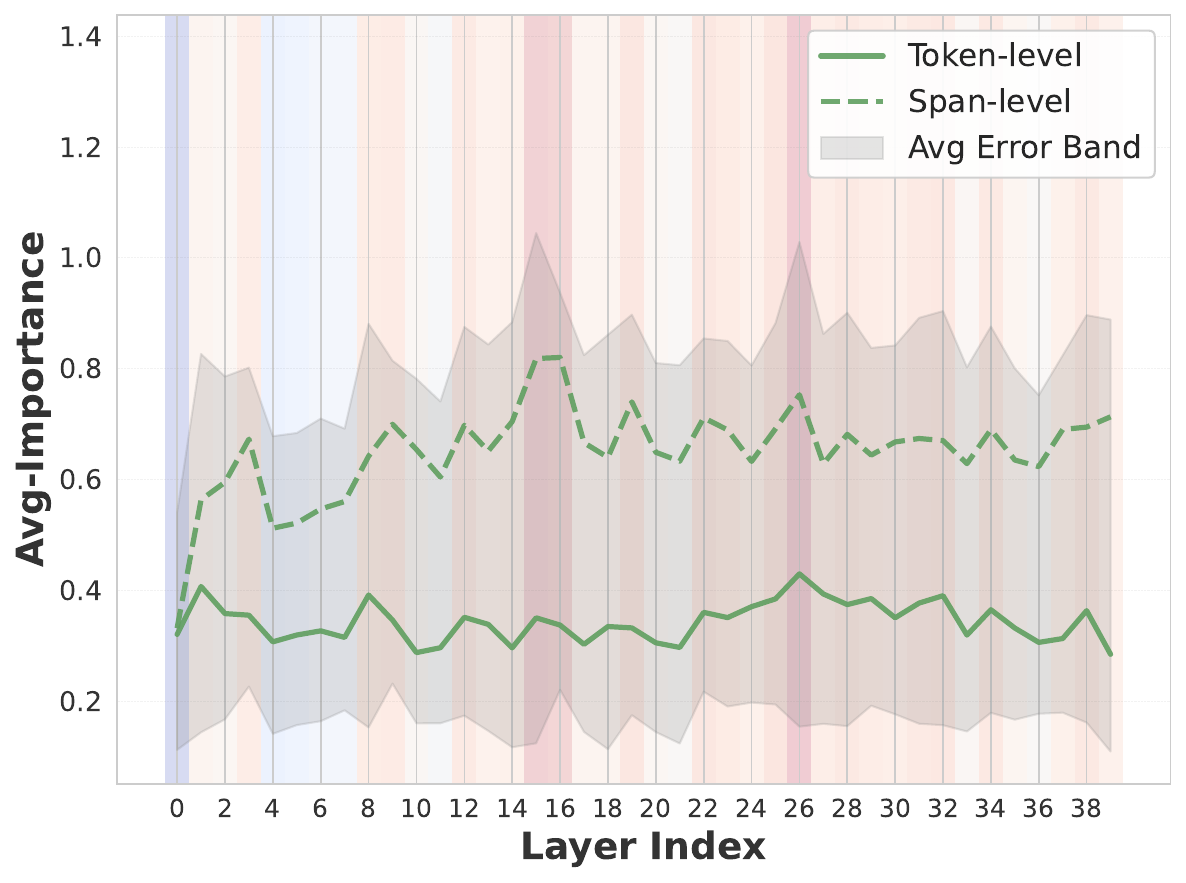}
        \caption{Laplacian of LLaMA-13B}
        \label{fig:layer_l13b_l}
    \end{subfigure}
    \hfill
    \begin{subfigure}{0.305\textwidth}
        \centering
        \includegraphics[width=\textwidth]{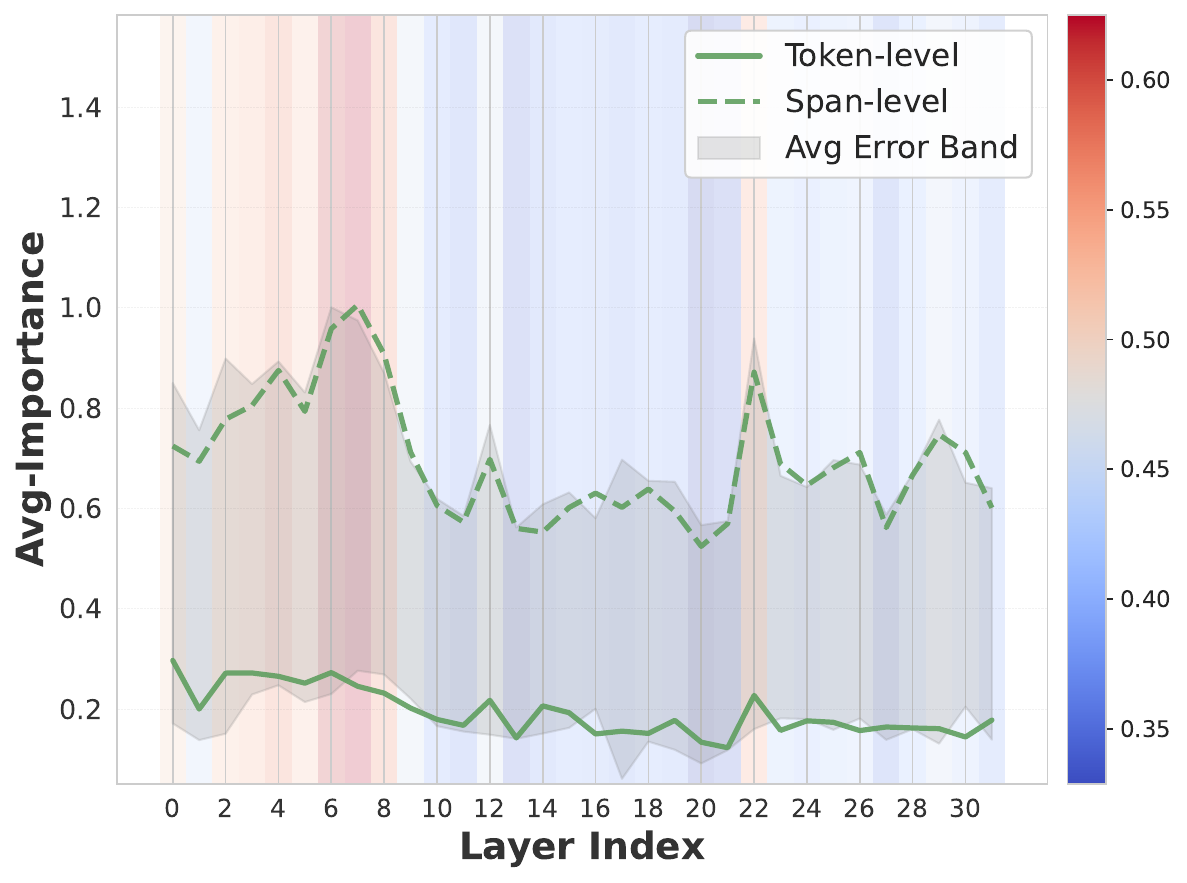}
        \caption{Laplacian of Mistral-7B}
        \label{fig:layer_m7b_l}
    \end{subfigure}

    \vspace{1em} 

    \begin{subfigure}{0.3\textwidth}
        \centering
        \includegraphics[width=\textwidth]{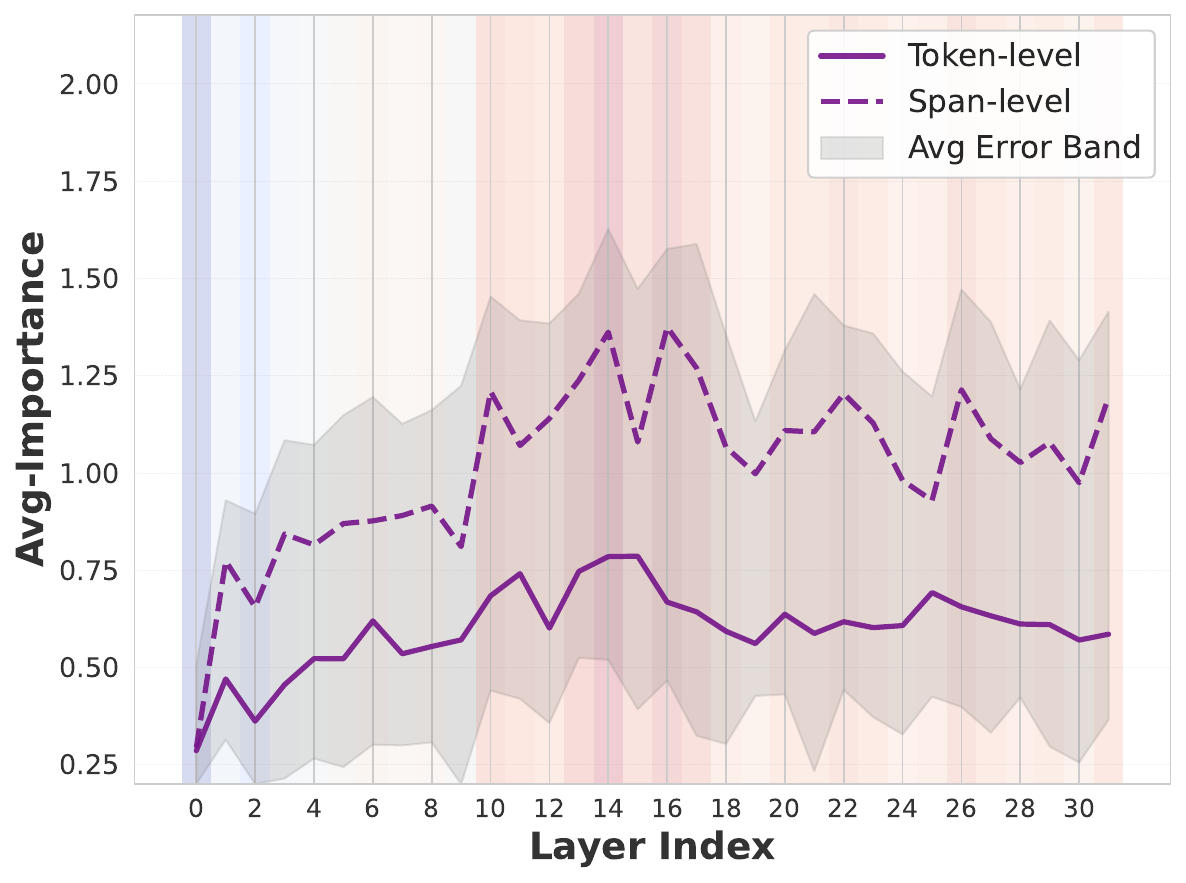}
        \caption{Wavelet-high of LLaMA-7B}
        \label{fig:layer_l7b_w}
    \end{subfigure}
    \hfill
    \begin{subfigure}{0.3\textwidth}
        \centering
        \includegraphics[width=\textwidth]{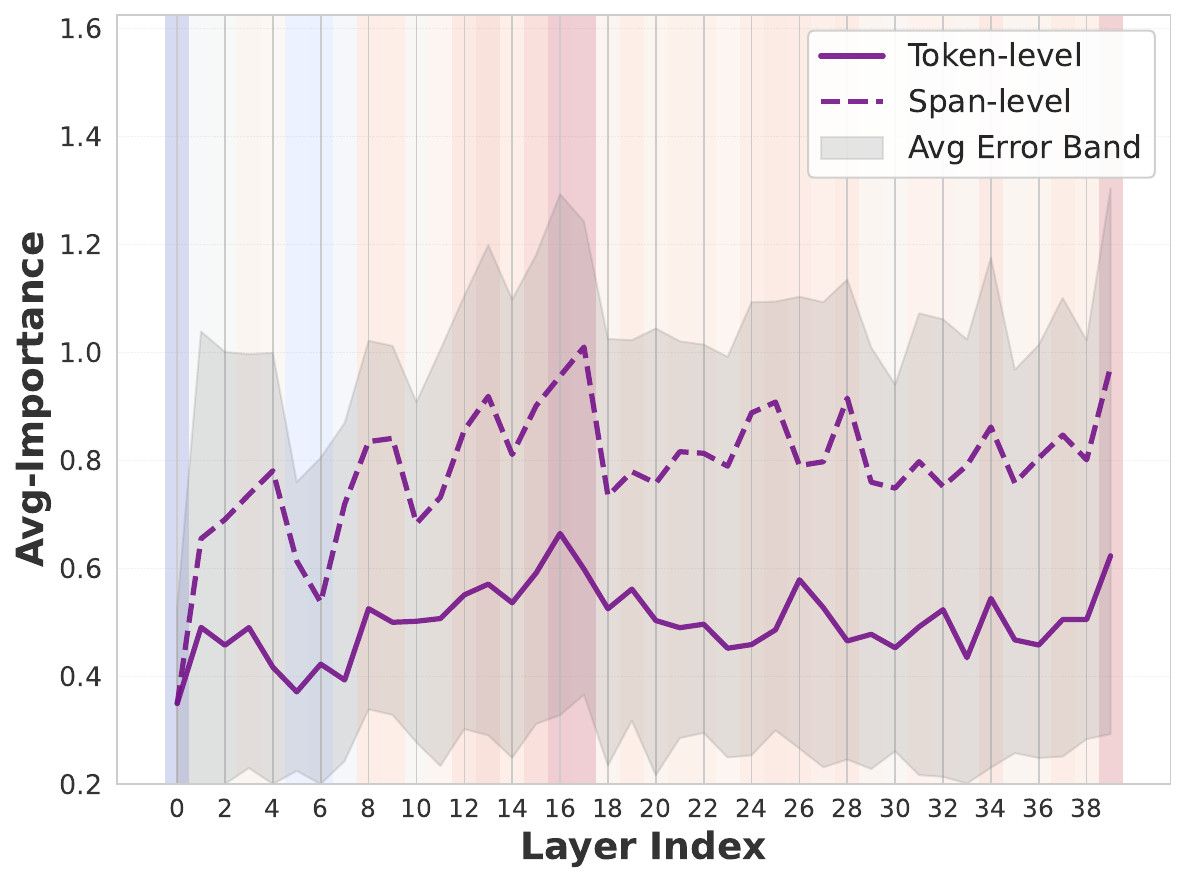}
        \caption{Wavelet-high of LLaMA-13B}
        \label{fig:layer_l13b_w}
    \end{subfigure}
    \hfill
    \begin{subfigure}{0.305\textwidth}
        \centering
        \includegraphics[width=\textwidth]{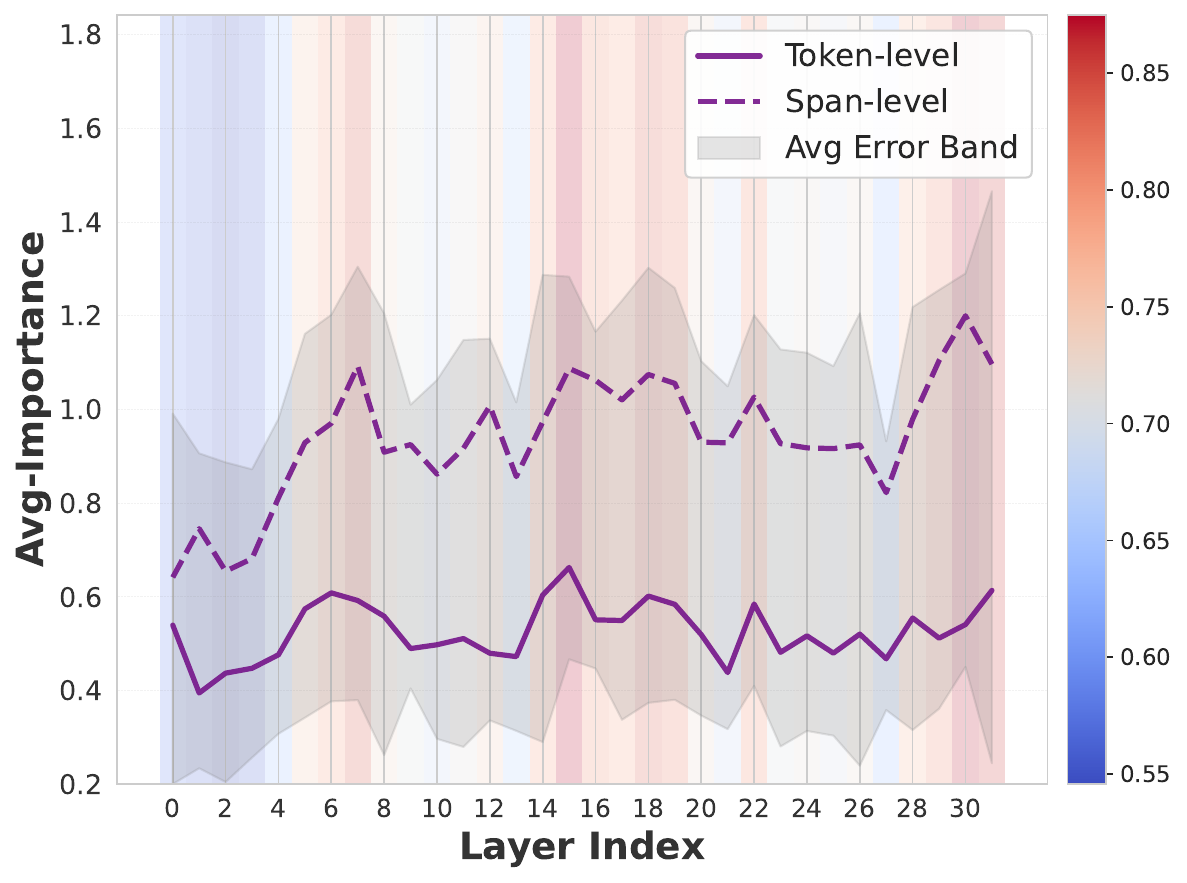}
        \caption{Wavelet-high of Mistral-7B}
        \label{fig:layer_m7b_w}
    \end{subfigure}

    \vspace{1em}

    \begin{subfigure}{0.3\textwidth}
        \centering
        \includegraphics[width=\textwidth]{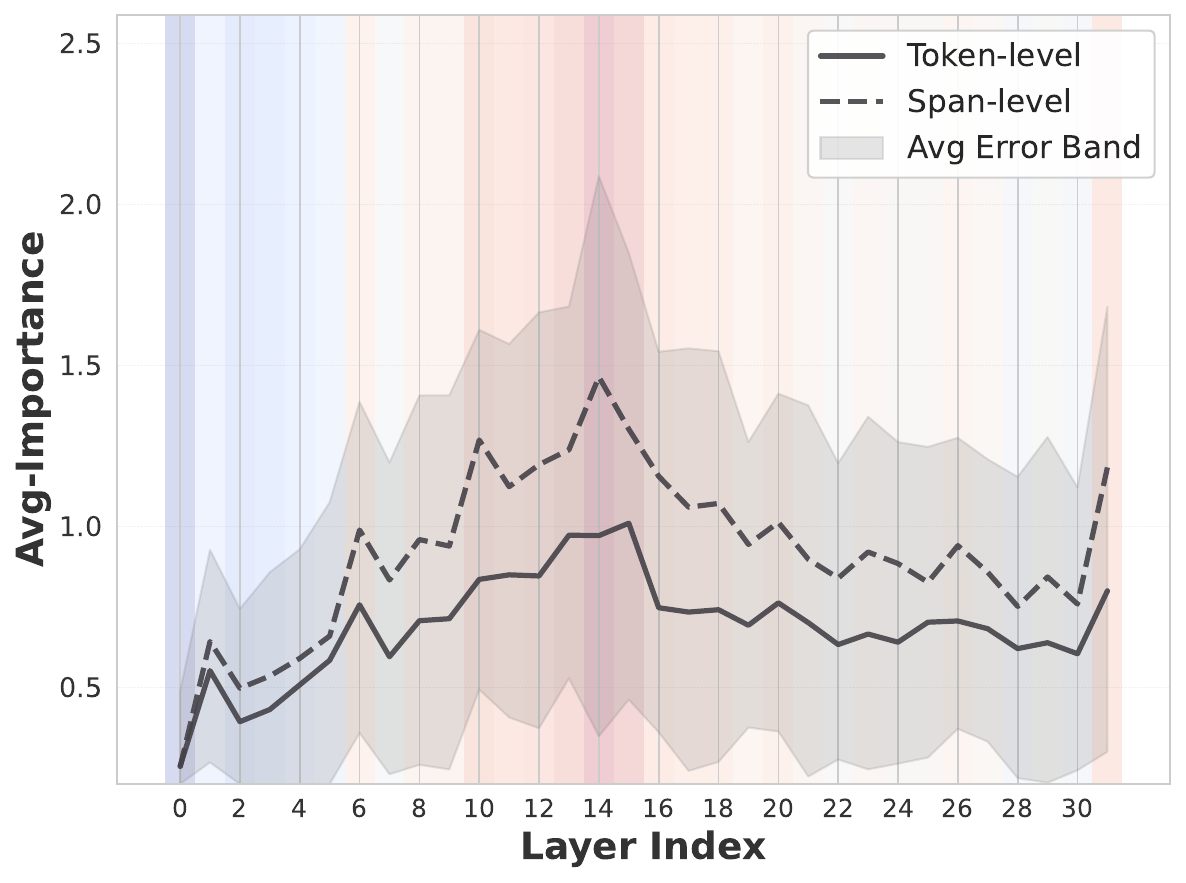}
        \caption{Fourier-high of LLaMA-7B}
        \label{fig:layer_l7b_f}
    \end{subfigure}
    \hfill
    \begin{subfigure}{0.3\textwidth}
        \centering
        \includegraphics[width=\textwidth]{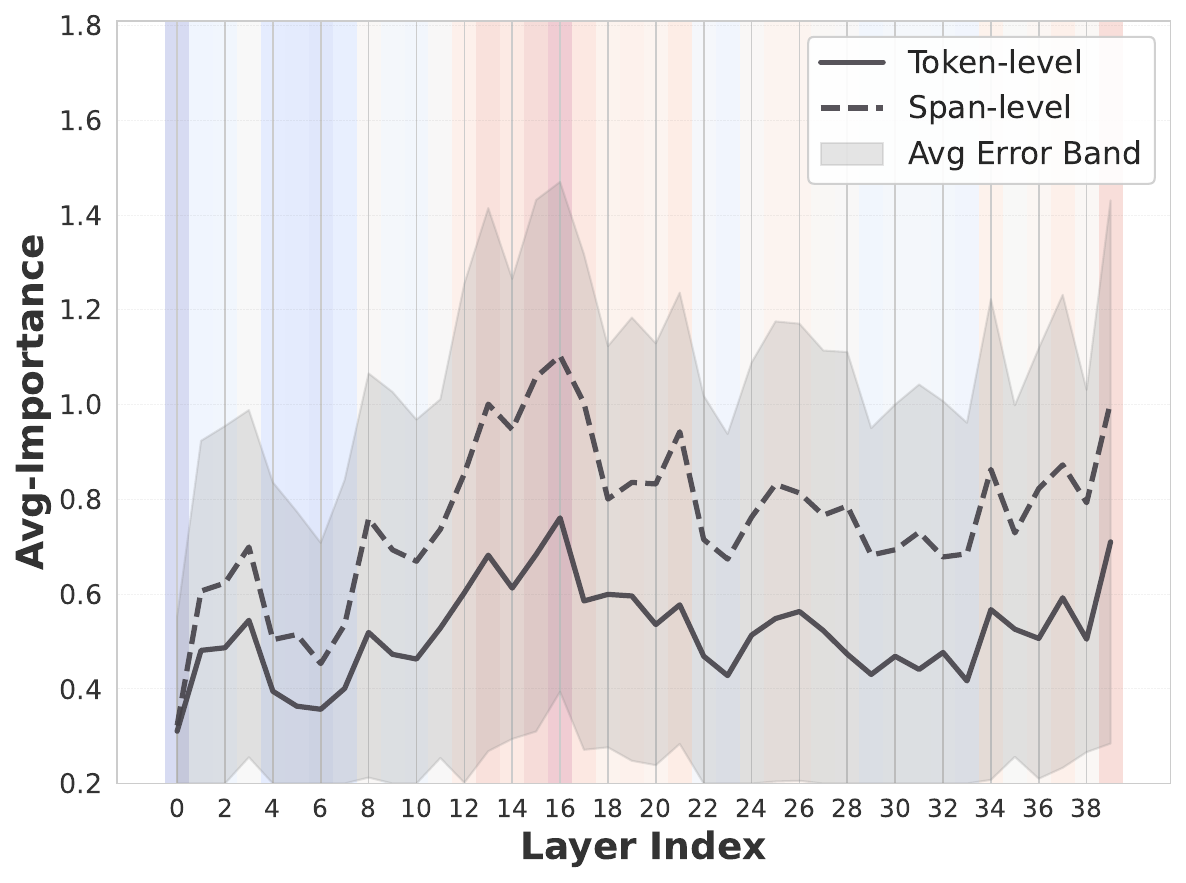}
        \caption{Fourier-high of LLaMA-13B}
        \label{fig:layer_l13b_f}
    \end{subfigure}
    \hfill
    \begin{subfigure}{0.305\textwidth}
        \centering
        \includegraphics[width=\textwidth]{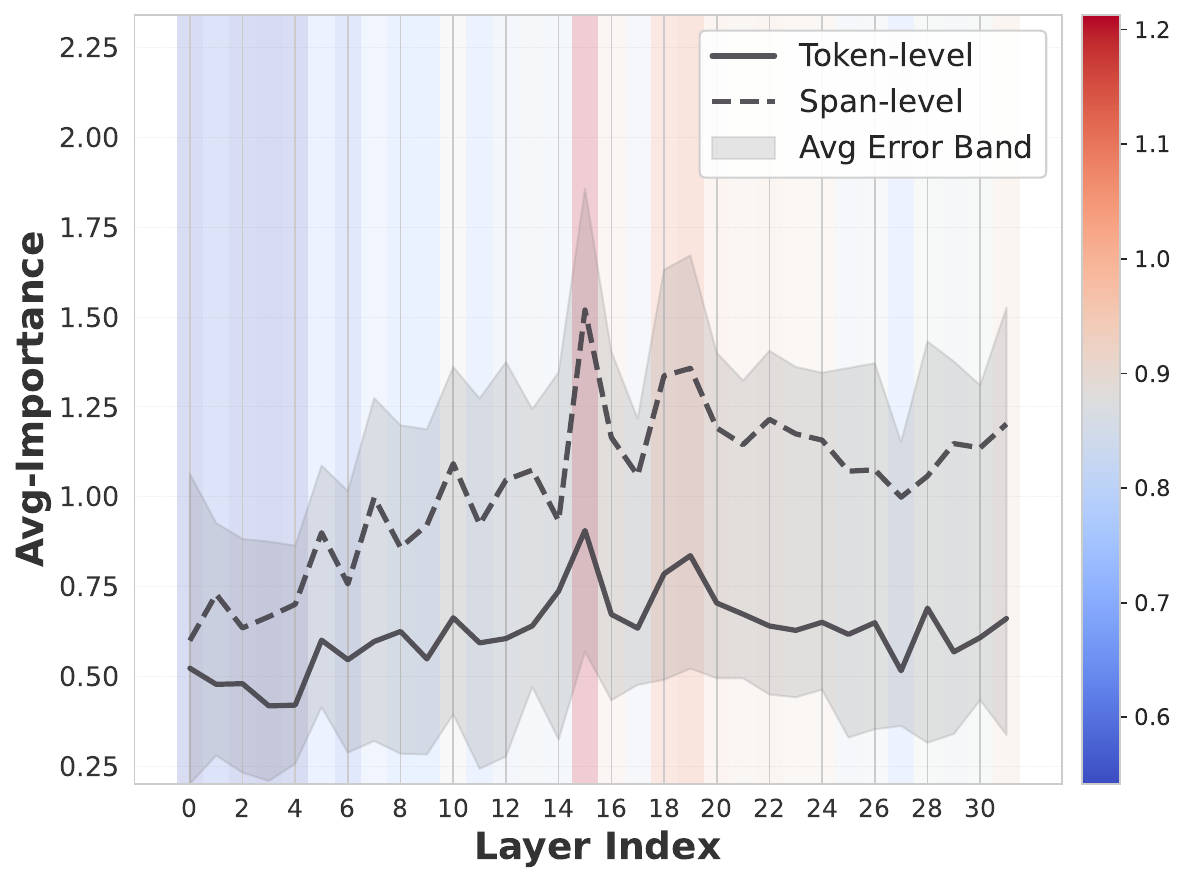}
        \caption{Fourier-high of Mistral-7B}
        \label{fig:layer_m7b_f}
    \end{subfigure}

    \vspace{1em}

    \caption{Full results for layer-wise importance. Solid and dashed lines correspond to token-level and span-level detection, respectively. The shaded region indicates the standard deviation of head-level importance within each layer.}
    \label{fig:ablation_layerwise}
\end{figure}

\begin{table}[htbp]
\centering
\small
\setlength{\tabcolsep}{5.0pt}
\setlength{\dashlinedash}{0.6pt}
\setlength{\dashlinegap}{1.4pt}
\caption{Original vs. Top-$k$ head only Performance.}
\label{tab:top_k_headRmv_full}

\begin{subtable}[t]{0.48\linewidth}
\centering
\caption{AUROC Results for LLaMA-13B.}
\label{tab:headrmv_l13b}
\begin{tabular}{l c:ccc}
\toprule
\multicolumn{5}{c}{\textbf{RagTruth-Avg}} \\
\cmidrule(lr){1-5}
\multirow{2}{*}{\textbf{Method}} 
& \multirow{2}{*}{\textbf{Original}} 
& \multicolumn{3}{c}{\textbf{Top-$k$ heads}} \\
\cmidrule(lr){3-5}
& & \textit{k = 100} & \textit{50} & \textit{10} \\
\midrule
\quad Laplacian & 0.8016 & 0.7858 & 0.7595 & 0.6424 \\
\quad Wavelet-high   & 0.8238 & 0.8079 & 0.7821 & 0.7191 \\
\quad Fourier   & 0.8326 & 0.8204 & 0.7899 & 0.7041 \\
\midrule
\multicolumn{5}{c}{\textbf{HalluRAG}} \\
\cmidrule(lr){1-5}
\multirow{2}{*}{\textbf{Method}} 
& \multirow{2}{*}{\textbf{Original}} 
& \multicolumn{3}{c}{\textbf{Top-$k$ heads}} \\
\cmidrule(lr){3-5}
& & \textit{k = 100} & \textit{50} & \textit{10} \\
\midrule
\quad Laplacian & 0.7624 & 0.6924 & 0.6764 & 0.6335 \\
\quad Wavelet-high   & 0.7809 & 0.7433 & 0.7133 & 0.6292 \\
\quad Fourier-high   & 0.7899 & 0.7749 & 0.7572 & 0.6455 \\
\bottomrule
\end{tabular}
\end{subtable}
\hfill
\begin{subtable}[t]{0.48\linewidth}
\centering
\caption{AUROC Results for Mistral-7b.}
\label{tab:headrmv_mistral}
\begin{tabular}{l c:ccc}
\toprule
\multicolumn{5}{c}{\textbf{RagTruth-Avg}} \\
\cmidrule(lr){1-5}
\multirow{2}{*}{\textbf{Method}} 
& \multirow{2}{*}{\textbf{Original}} 
& \multicolumn{3}{c}{\textbf{Top-$k$ heads}} \\
\cmidrule(lr){3-5}
& & \textit{k = 100} & \textit{50} & \textit{10} \\
\midrule
\quad Laplacian & 0.8625 & 0.8312 & 0.7839 & 0.6795 \\
\quad Wavelet-high   & 0.8673 & 0.8483 & 0.8317 & 0.7707 \\
\quad Fourier-high   & 0.8669 & 0.8440 & 0.8283 & 0.7624 \\
\midrule
\multicolumn{5}{c}{\textbf{HalluRAG}} \\
\cmidrule(lr){1-5}
\multirow{2}{*}{\textbf{Method}} 
& \multirow{2}{*}{\textbf{Original}} 
& \multicolumn{3}{c}{\textbf{Top-$k$ heads}} \\
\cmidrule(lr){3-5}
& & \textit{k = 100} & \textit{50} & \textit{10} \\
\midrule
\quad Laplacian & 0.8098 & 0.7611 & 0.7433 & 0.6807 \\
\quad Wavelet-high   & 0.8360 & 0.7715 & 0.7467 & 0.7091 \\
\quad Fourier-high   & 0.8267 & 0.7855 & 0.7587 & 0.6920 \\
\bottomrule
\end{tabular}
\end{subtable}

\end{table}

\newcommand{\lightrule}{\arrayrulecolor{black!30}\cmidrule(lr){2-7}\arrayrulecolor{black}}

\begin{table}[t]
\caption{Ablation study comparing context-only and generated-only attention features.
Results are reported across spectral operators and models.}
\label{tab:ablation_context_new_full}
\centering
\small
\setlength{\tabcolsep}{5pt}
\begin{tabular}{l l l cc cc}
\toprule
\textbf{Model} & \textbf{Setting} & \textbf{Operator}
& \multicolumn{2}{c}{\textbf{RagTruth-Avg}}
& \multicolumn{2}{c}{\textbf{HalluRAG}} \\
\cmidrule(lr){4-5} \cmidrule(lr){6-7}
& & & F & AUROC & F & AUROC \\
\midrule

\multirow{9}{*}{LLaMA-7B}
& \multirow{3}{*}{Original}
& Laplacian & 0.6588 & 0.8003 & 0.6370 & 0.7429 \\
& & Wavelet-high   & 0.6673 & 0.8087 & 0.6384 & 0.7550 \\
& & Fourier-high   & 0.6685 & 0.8205 & 0.6478 & 0.7629 \\
\lightrule
& \multirow{3}{*}{Context-only}
& Laplacian & 0.6504 & 0.8014 & 0.6340 & 0.7396 \\
& & Wavelet-high   & 0.6541 & 0.8074 & 0.6402 & 0.7479 \\
& & Fourier-high   & 0.6544 & 0.8138 & 0.6385 & 0.7538 \\
\lightrule
& \multirow{3}{*}{Generated-only}
& Laplacian & 0.6441 & 0.7855 & 0.6183 & 0.7240 \\
& & Wavelet-high   & 0.6430 & 0.7889 & 0.6264 & 0.7334 \\
& & Fourier-high   & 0.6416 & 0.7939 & 0.6262 & 0.7349 \\

\midrule

\multirow{9}{*}{LLaMA-13B}
& \multirow{3}{*}{Original}
& Laplacian & 0.6526 & 0.8016 & 0.6659 & 0.7624 \\
& & Wavelet-high   & 0.6688 & 0.8238 & 0.6684 & 0.7809 \\
& & Fourier-high   & 0.6759 & 0.8326 & 0.6732 & 0.7899 \\
\lightrule
& \multirow{3}{*}{Context-only}
& Laplacian & 0.6603 & 0.8144 & 0.6538 & 0.7627 \\
& & Wavelet-high   & 0.6634 & 0.8206 & 0.6524 & 0.7697 \\
& & Fourier-high   & 0.6688 & 0.8333 & 0.6581 & 0.7815 \\
\lightrule
& \multirow{3}{*}{Generated-only}
& Laplacian & 0.6468 & 0.7988 & 0.6430 & 0.7503 \\
& & Wavelet-high   & 0.6579 & 0.8080 & 0.6524 & 0.7697 \\
& & Fourier-high   & 0.6446 & 0.8053 & 0.6473 & 0.7642 \\

\midrule

\multirow{9}{*}{Mistral-7B}
& \multirow{3}{*}{Original}
& Laplacian & 0.7262 & 0.8625 & 0.7001 & 0.8098 \\
& & Wavelet-high   & 0.7287 & 0.8673 & 0.7274 & 0.8360 \\
& & Fourier-high   & 0.7300 & 0.8669 & 0.7221 & 0.8267 \\
\lightrule
& \multirow{3}{*}{Context-only}
& Laplacian & 0.7289 & 0.8636 & 0.7207 & 0.8242 \\
& & Wavelet-high   & 0.7280 & 0.8655 & 0.7176 & 0.8238 \\
& & Fourier-high   & 0.7299 & 0.8668 & 0.7128 & 0.8227 \\
\lightrule
& \multirow{3}{*}{Generated-only}
& Laplacian & 0.7151 & 0.8515 & 0.6840 & 0.7802 \\
& & Wavelet-high   & 0.7185 & 0.8548 & 0.6994 & 0.7983 \\
& & Fourier-high   & 0.7178 & 0.8531 & 0.6895 & 0.7913 \\

\bottomrule
\end{tabular}
\end{table}

\subsection{Examples of Raw Attention Signals}

\autoref{fig:attention_examples} presents qualitative examples of raw attention distributions from individual attention heads.
Each row corresponds to the same attention head at a fixed layer, while the left and right columns show attention over a hallucinated token and a non-hallucinated token, respectively.

We emphasize that these attention distributions reflect real model behavior and are substantially more irregular than the schematic examples shown in \autoref{fig:hypothese_compare}.
In practice, attention weights are often sparse, unevenly distributed, and exhibit non-trivial fluctuations across token positions, rather than forming unimodal patterns \cite{nawrot2025sparsefrontiersparseattention}.

Notably, for the examples shown in the top row, the two tokens share identical lookback ratios as measured by Lookback-Lens.
Similarly, for the bottom row, the context entropy of the two attention distributions is equal.
In these cases, these aggregate statistics alone are insufficient to distinguish hallucinated from non-hallucinated cases by visual inspection.
In contrast, after applying a high-pass filter to the same attention signals, the resulting high-frequency energy differs substantially between the two cases.
Although the raw attention curves may appear similar at a coarse level, frequency-domain filtering reveals differences in fine-grained variation patterns.

\begin{figure}[htbp]
    \centering

    \begin{subfigure}{0.485\linewidth}
        \centering
        \includegraphics[width=\linewidth, trim=50 90 50 75, clip]{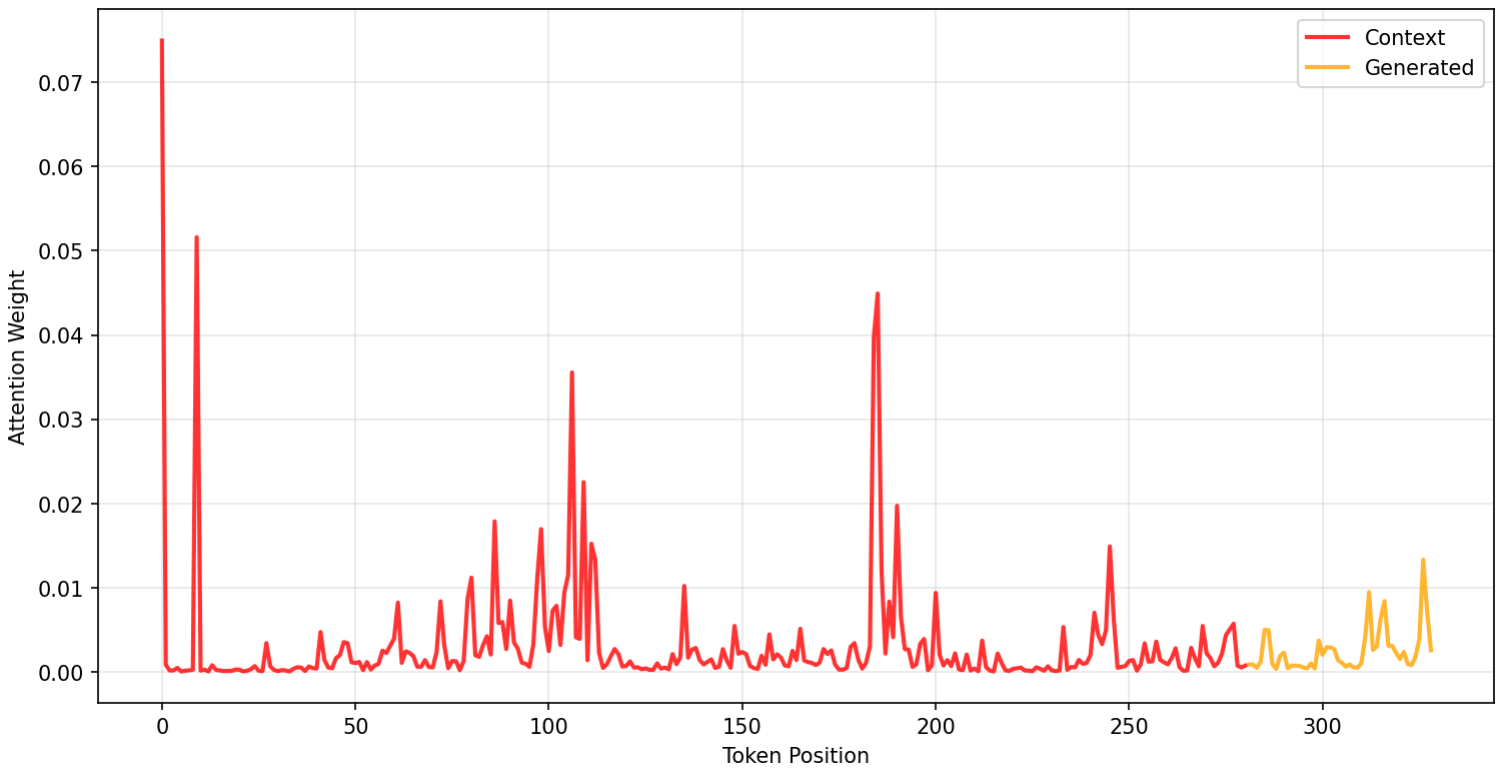}
        \caption{Layer 13, Head 09: Attention for a hallucinated token}
        \label{fig:attn_l13_h09_hallu_token}
    \end{subfigure}
    \hspace{0.01\linewidth}
    \begin{subfigure}{0.485\linewidth}
        \centering
        \includegraphics[width=\linewidth, trim=50 90 50 75, clip]{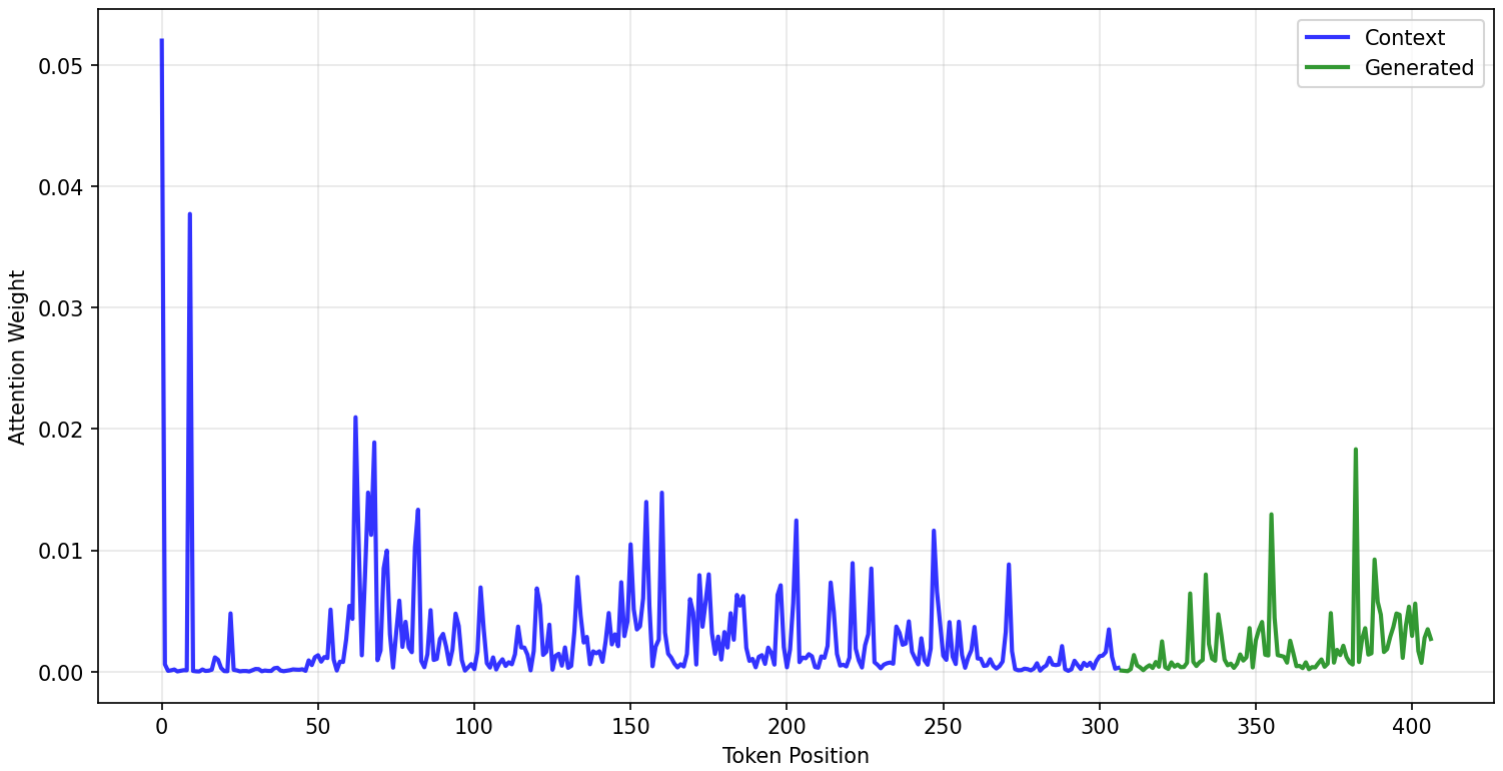}
        \caption{Layer 13, Head 09: Attention for a non-hallucinated token}
        \label{fig:attn_l13_h09_normal_span}
    \end{subfigure}

    \vspace{0.25em}

    \begin{subfigure}{0.485\linewidth}
        \centering
        \includegraphics[width=\linewidth, trim=50 90 50 75, clip]{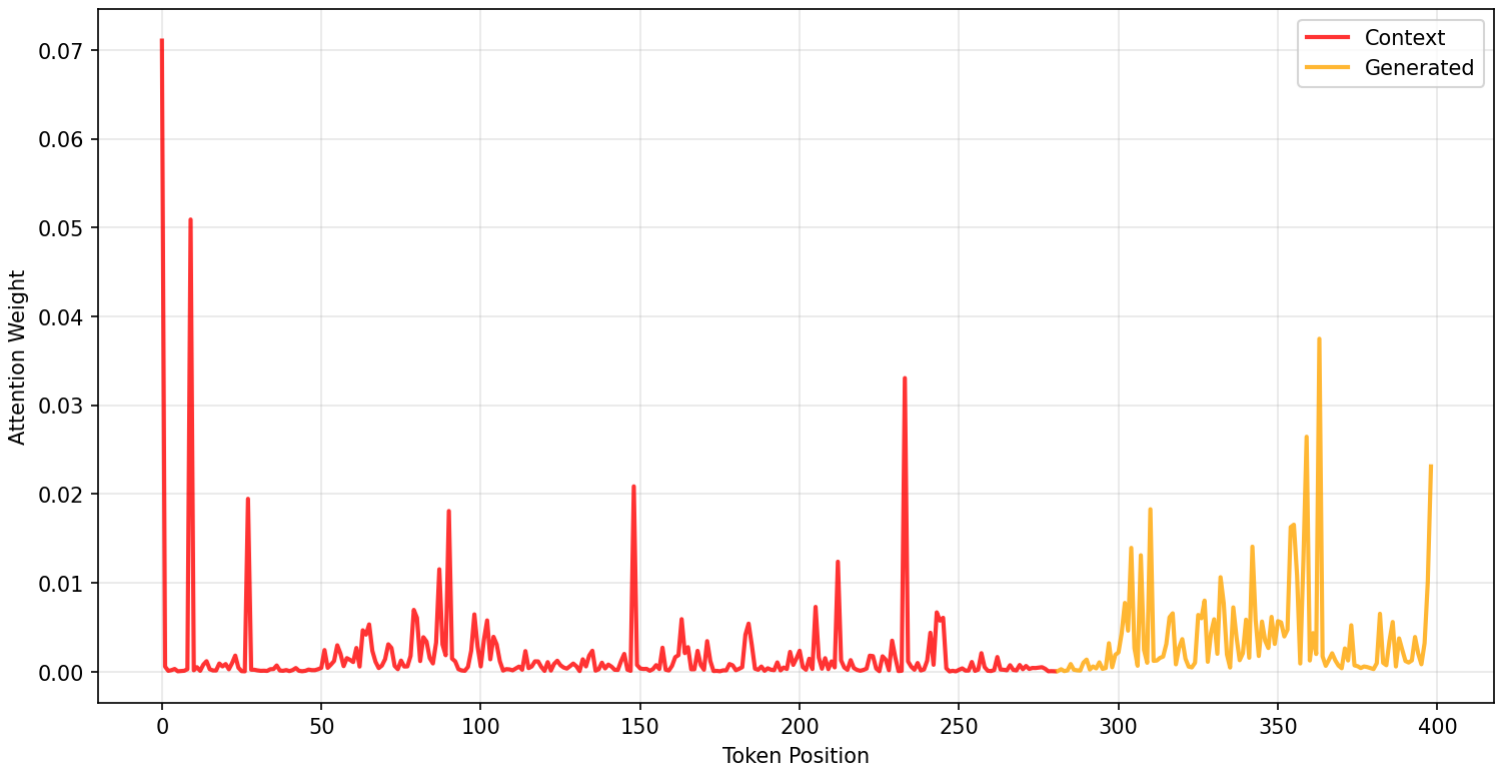}
        \caption{Layer 15, Head 29: Attention for a hallucinated token}
        \label{fig:attn_l15_h29_hallu_token}
    \end{subfigure}
    \hspace{0.01\linewidth}
    \begin{subfigure}{0.485\linewidth}
        \centering
        \includegraphics[width=\linewidth, trim=50 90 50 75, clip]{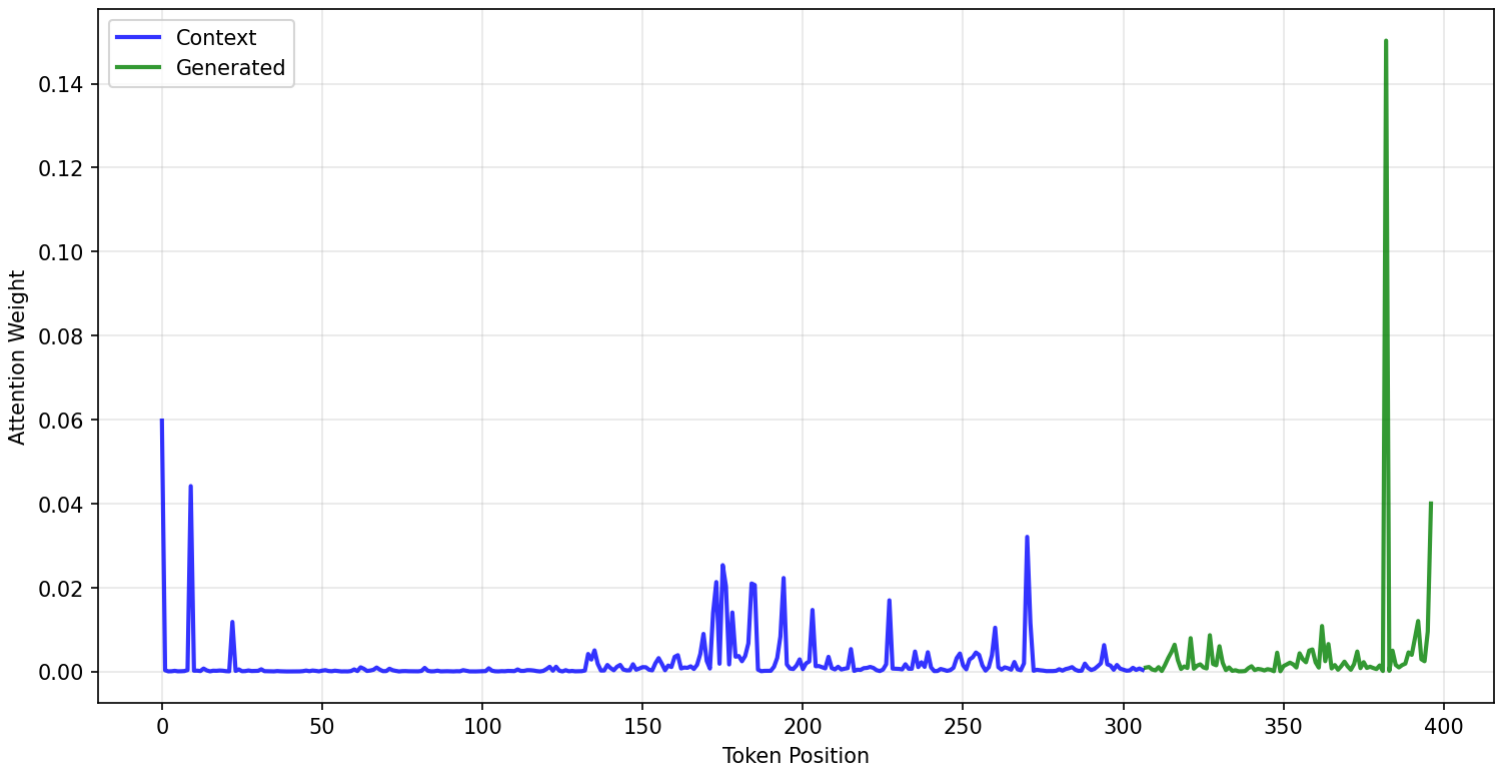}
        \caption{Layer 15, Head 29: Attention for a non-hallucinated token}
        \label{fig:attn_l15_h29_normal_span}
    \end{subfigure}

    \caption{
Raw attention signal visualizations.
Each subfigure compares raw attention distributions for a hallucinated token (left) and a non-hallucinated token (right).
Red and yellow curves denote attention over context and generated tokens, respectively, when generating a hallucinated token,
while blue and green curves denote attention over context and generated tokens for a non-hallucinated token.
    }
    \label{fig:attention_examples}
\end{figure}


\end{document}